\newif\iflong
\newif\ifcomment
\theoremstyle{plain}
\newtheorem{theorem}{Theorem}[section]
\theoremstyle{definition}
\theoremstyle{remark}
\newcommand{\ouralgo}{{\rm CoP2L}}
\title{Sample Compression for Self-Certified Continual Learning}
\author[1,2]{Jacob Comeau}
\author[1,2]{Mathieu Bazinet}
\author[1,2]{Pascal Germain}
\author[1,2,3]{Cem Subakan}
\affil[1]{%
Département d'informatique et génie logiciel\\
Universit\'e Laval\\
Québec, Qc, Canada
}
\affil[2]{%
    Mila - Quebec Artificial Intelligence Institute \\ Montreal, Qc, Canada
}
\affil[3]{%
Computer Science and Software Engineering Department \\ Concordia University \\ Montreal, Qc, Canada
}
\begin{document}
\maketitle

\begin{abstract}
Continual learning algorithms aim to learn from a sequence of tasks. In order to avoid catastrophic forgetting, most existing approaches rely on heuristics and do not provide computable learning guarantees. In this paper, we introduce Continual Pick-to-Learn (CoP2L), a method grounded in sample compression theory that retains representative samples for each task in a principled and efficient way. 
This allows us to derive non-vacuous, numerically computable upper bounds on the generalization loss of the learned predictors after each task. 
We evaluate CoP2L on standard continual learning benchmarks under Class-Incremental and  Task-Incremental settings, showing that it effectively mitigates catastrophic forgetting. It turns out that CoP2L is empirically competitive with baseline methods while certifying predictor reliability in continual learning with a non-vacuous bound.
\end{abstract}

\section{INTRODUCTION}

A common assumption in traditional machine learning is that the underlying data distribution does not evolve with time. In Continual Learning \citep{de2021continual, wang2023comprehensive}, the goal is to develop machine learning algorithms that are able to learn under settings where this assumption is replaced by a setup where the model is trained on an evolving training data distribution, in such a way that samples are revealed one task at a time. 
However, neural networks trained on evolving data distributions tend to forget earlier tasks, a phenomenon known as catastrophic forgetting \citep{McCloskey1989CatastrophicII, forgetting_french, goodfellow2013empirical}. In order to cope with forgetting, various methodologies have been developed, such as regularization-based approaches \citep{kirkpatrick2017overcoming,zenke2017continual,li2017learning,mas}, architectural-based approaches \citep{pnn_rusu,mallya2018piggyback,expert_aljundi,packnet_mallya,l2p_wang, he2025cl,gao2023unified,zhou2025revisiting, zhou2024expandable} or rehearsal-based approaches \citep{rolnick2019experience,chaudhry2019efficient,replay_shin,icarl_rebuffi,der}.

\begin{figure}[!t]
\centering
    \begin{subfigure}[b]{\columnwidth}
    \centering
            \includegraphics[width=\textwidth]{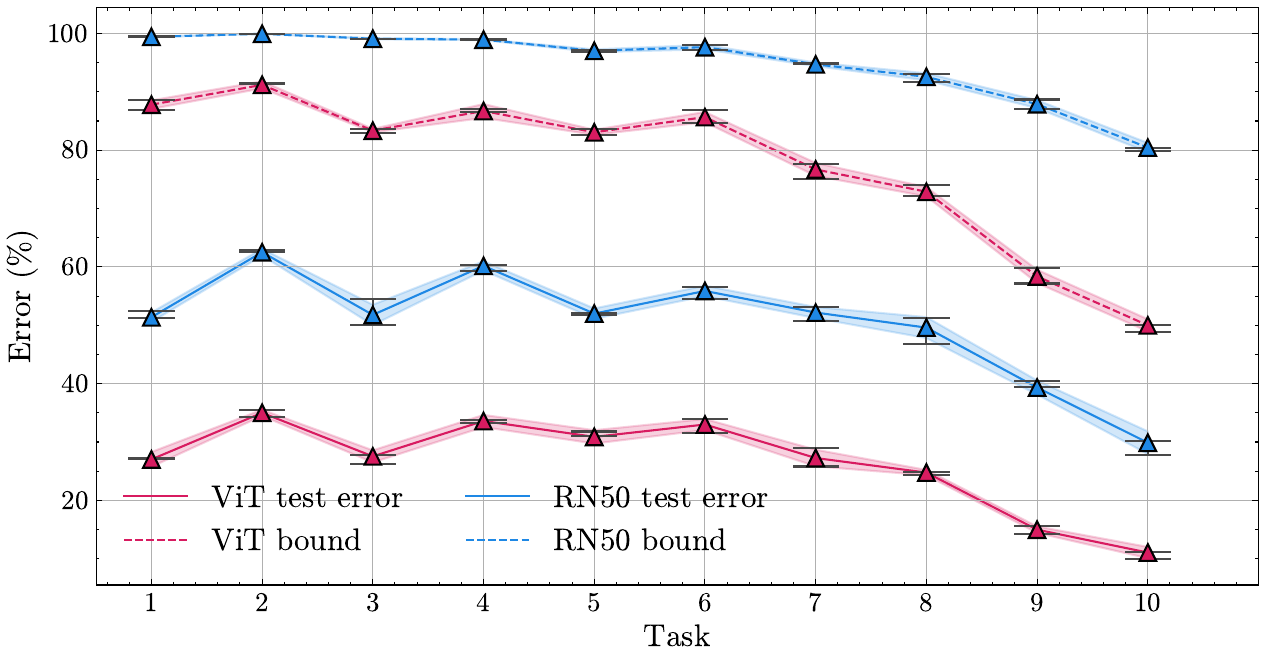}
            \vspace{-5mm}
            \caption{Class-incremental setting}
    \end{subfigure}
    \smallskip
    
\begin{subfigure}[b]{\columnwidth}
\centering
        \includegraphics[width=\textwidth]{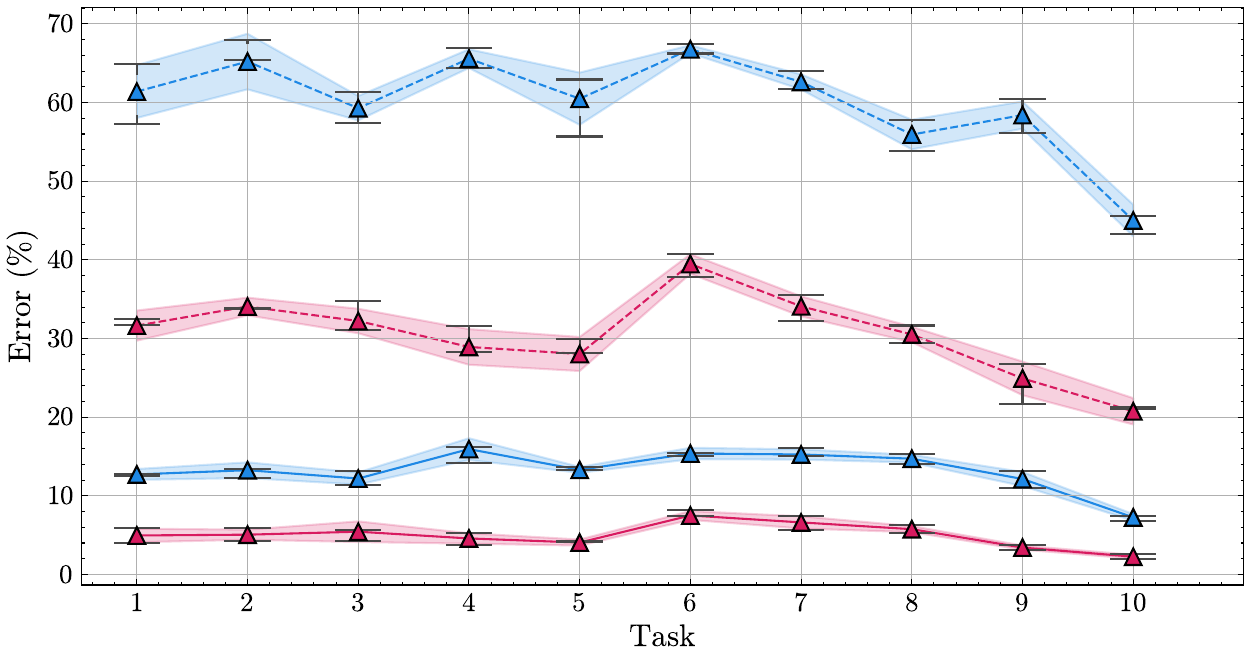}
        \vspace{-5mm}
        \caption{Task-incremental setting}
\end{subfigure}
    \caption{Numerical values of the proposed generalization bounds for continual learning over 10 tasks on CIFAR100, using ViT and ResNet50 backbones. The bounds hold simultaneously for all tasks. }
\end{figure}

We develop a novel sample compression theory for continual learning. Being agnostic to the architecture, our theoretical results are well-suited for rehearsal-based approaches, which are commonly used in the continual learning literature for practical applications \citep{DellaLibera2023CLMASRAC, Mai2020BatchlevelER, chaudhry2019efficient, aljundi2019onlinecontinuallearningmaximally}. 
We then propose a theoretically motivated algorithm that uses the replay buffer only when necessary to mitigate the forgetting. These new results pave the way for continual learning approaches based on theoretical guarantees.

More precisely, our proposed continual learning scheme builds upon the Pick-to-Learn meta-algorithm (P2L) of \citet{paccagnan_pick_learn_2023}.
This meta-algorithm was developed specifically with sample compression theory in mind \citep{floyd1995sample,campi2023compression}. By finding a small subset of the data such that a predictor learned on this subset achieves low error on the whole training set, P2L enables computing tight upper bounds for the generalization error of the learned predictor. 
This ability of a learning algorithm to simultaneously output a predictor and its risk certificate has been coined as \emph{self-certified learning}  \citep{freund98selfbounding,perezortiz2021tighter,marks2025pick2learn-gp}. This certification is arguably an asset to increase the \emph{trustworthiness} of machine learning systems.

The \emph{Continual Pick-to-Learn} (\ouralgo{})\footnote{Our code is available at \url{https://anonymous.4open.science/r/CoP2L_paper_code-0058/}} algorithm that we propose leverages sample compression theory to intelligently select the training data from earlier tasks to mitigate forgetting. We derive high-confidence upper bounds on the generalization error for each task simultaneously, estimated directly from the training set. We also empirically show that \ouralgo{} substantially mitigates forgetting and is comparable with several standard baselines. Our contributions can be summarized as follows:  

    (1) We propose the algorithm \emph{Continual Pick-to-Learn} (\ouralgo{}), that integrates the sample compression theory within the continual learning setup. To the best of our knowledge, we are the first to integrate the theoretical results from sample compression to rehearsal-based continual learning. 
    
    (2) \ouralgo{} is able to provide high confidence non-trivial upper bounds for the generalization error by means of the sample compression theory. We experimentally showcase that the bounds are numerically computable, have non-trivial values and follow the general error trends observed on the test set. Therefore, they can actually be used as risk certificates for the model's behaviour on the learned tasks, improving the trustworthiness of the continually learnt model. 
    
    (3) We experimentally show with a wide range of experiments that \ouralgo{} significantly mitigates forgetting, while also being able to obtain comparable performance compared with several strong continual learning baselines.

\subsection{Related Work}
\label{sec:relatedwork}

\paragraph{Continual learning approaches.} %
In the continual learning literature, various practical approaches have been developed to mitigate forgetting. Broadly, the approaches could be divided into three categories: 1) regularization-based approaches \citep{kirkpatrick2017overcoming,zenke2017continual,li2017learning,mas}, 2) architecture-based approaches \citep{pnn_rusu,mallya2018piggyback,expert_aljundi,packnet_mallya,l2p_wang,he2025cl,gao2023unified,zhou2025revisiting, zhou2024expandable} and 3) rehearsal-based approaches  \citep{rolnick2019experience,chaudhry2019efficient,replay_shin,icarl_rebuffi,der}. 

Most rehearsal-based approaches use a \emph{replay buffer} that contains a small subset of the dataset over which the model continues to train, even when learning over new tasks. The buffer helps to mitigate the forgetting of a model on previous tasks. However, the effect of the buffer depends on the quality of the data chosen in the buffer. A naive approach is to sample randomly the datapoints \citep{Mai2020BatchlevelER, DellaLibera2023CLMASRAC}. Non-naive methods include methods such as iCarl \citep{icarl_rebuffi}, Dark Experience Replay \citep{der}, Gradient Episodic Memory \citep{lopez2017gradient}, Example Forgetting \citep{benkHo2024example}, Global Pseudo-Task simulation \citep{liu2022navigating} and Coreset-based methods \citep[e.g.,][]{hao2023bilevel, tong2025coreset, borsos2020coresets, yoon2021online}.

\paragraph{Continual learning generalization bounds.}
In recent years, 
generalization bounds were proposed to study the asymptotic behavior of various specific models, e.g., for linear models \citep{Evron2022HowCC,Goldfarb2022AnalysisOC,pmlr-v202-lin23f,banayeeanzade2025theoretical}; for Neural Tangent Kernel models trained with Orthogonal Gradient Descent \citep{AbbanaBennani2020GeneralisationGF, Doan2020ATA,farajtabar2019orthogonalgradientdescentcontinual}; for regularization-based methods \citep{yin2021optimizationgeneralizationregularizationbasedcontinual}; or for a PackNet-inspired \citep{packnet_mallya} model for continual representation learning \citep{Li2022ProvableAE}.
Learning algorithms with sample complexity guarantees were also proposed under data assumptions, such that all target tasks lie in a same space obtainable from a (linear or non-linear) combination of some input features \citep{cao2022lifelong}. Finally, \citet{friedman2024data} presented new PAC-Bayes bounds for the backward transfer loss of stochastic models trained without a replay buffer.

In contrast with the listed works above, in this paper, we simultaneously provide risk certificates on the true risk of each task, our bound is non-asymptotic and can be computed from the training set. Moreover, our bound is applicable to any neural network architecture, under minimal data assumptions and does not have limitations on the continual learning setup that is employed.

\paragraph{Sample compression.}
The sample compression theory has been shown to be effective for providing tight generalization bounds 
\citep{marchand2002set,marchand2005learning, shah_margin-sparsity_2005,campi2023compression}.
However, most sample compression approaches are limited to low-complexity models. A notable exception is the method Pick-to-Learn (P2L), presented by \citet{paccagnan_pick_learn_2023}, which successfully provides guarantees for deep neural networks \citep{bazinet2024} and Gaussian Processes \citep{marks2025pick2learn-gp}.  
In this paper, we adapt the key ideas introduced in the sample compression theory to the continual learning case, which enables us to provide an upper bound on the true risk from training samples. 

\section{BACKGROUND}

\paragraph{Continual learning.}
Let us consider a meta-dis\-tri\-bu\-tion~$\frakD$ and a sequence of \emph{i.i.d.{}}
(independent and identically distributed) task distributions $\Dcal_1, \Dcal_2, \ldots, \Dcal_T\sim \frakD$ on an input-output space $\Xcal\times\Ycal$. We are given a predictive \emph{model} $f_{\theta}:\Xcal\to\Ycal$ (\emph{e.g.}, a neural network architecture), with \emph{learnable} parameters $\theta\in\Theta$. From randomly initialized parameters~$\theta_0$, the aim of the continual learning process at step $t\in\{1,\ldots,T\}$ is to update the parameters $\theta_{t-1}$ into~$\theta_t$ by learning from a new task sample $S_t\sim\Dcal_t$. 
We denote {the training set at task $t$} as $S_t = \{(\xbf_{t,i}, y_{t,i})\}_{i=1}^{n_t}$, with $\xbf_{t,i}\in\Xcal$ and $y_{t,i} \in \Ycal_t \subseteq \Ycal$; that is, the same input space $\Xcal$ is shared among all tasks, and the output space $\Ycal$ may be the union of several task specific output spaces $\Ycal_t$.
Given a loss function $\ell:\Theta\times\Xcal\times\Ycal\to[0,1]$, we want the last updated predictor~$f_{\theta_T}$ to maintain a low generalization loss on all observed tasks $t\in\{1,\ldots,T\}$:
\begin{equation}
    \Lcal_{\calD_t}(\theta_T) \ \eqdef \Esp_{(\xbf, y)\sim\Dcal_{t}} \ell(\theta_T, \xbf, y)\,.
\end{equation}
The empirical loss counterpart on observed training samples is given by
\begin{equation}
    \widehat\Lcal_{S_t}(\theta_T)  \ \eqdef \ \frac{1}{n_t}\sum_{i=1}^{n_{t}} \ell(\theta_T, \xbf_{t,i}, y_{t,i})\,.
\end{equation} 
The challenge of continual learning lies in the fact that the learner observes the task datasets $S_1, S_2, \ldots, S_t, \ldots, S_T$ sequentially and that we assume that the system cannot keep all observed data in memory. Nevertheless, at task~$t$, we would like the system to perform well on all tasks, including the previous ones $1,\dots,t-1$. However, simply updating the parameters~$\theta_{t-1}$ learned from a previous tasks to optimize the loss $\widehat\Lcal_{S_t}(\theta_t)$ on a current task~$t$ would lead to \emph{catastrophic forgetting}~\citep{forgetting_french}. Therefore, as indicated in Section~\ref{sec:relatedwork}, numerous empirical methods have been developed to mitigate forgetting. A simple yet very effective strategy is to simply store a small subset of data from the earlier tasks, coined as the \emph{replay buffer}, and to include them in the objective function of the \emph{experience replay} strategy for learning task~$t$: 
\begin{equation}
    \Fcal_{t}^{\ \text{replay}}(\theta_t)  \ \eqdef \ \widehat\Lcal_{S_t}(\theta_t)  +\frac{1}{|\mathcal B |} \sum_{j\in \mathcal B} \ell(\theta_T, \xbf_{j}, y_{j})   \,,
\end{equation} 
where the second term is the loss on the replay buffer $\Bcal$. 

\paragraph{Sample compression theory.}
Let us consider a training dataset $S= \{(\bx_i, y_i)\}_{i=1}^n \in (\calX \times\calY)^n$ sampled \emph{i.i.d.}\ from an unknown distribution $\calD$, a family of learnable parameters~$\Theta$ and a learning algorithm $A$ such that $A(S) \in \Theta$. In this setting, sample compression theory provides generalization bounds for any predictor $f_\theta$, with $\theta = A(S)$, on the condition that~$f_\theta$ can be provably represented as a function of a small subset of the training dataset $S$, called the compression set, and an additional source of information, called the message. If this is the case, we call $f_\theta$ a sample-compressed predictor.

The compression set is denoted $S^{\bfi}$ and is parameterized by a strictly increasing sequence of indices $\bfi {\,\in\,} \scriptP(n)$, with $\scriptP(n)$ the powerset of $\{1,\ldots, n\}$.
The compression set is defined such that 
$S^{\bfi} = S^{(i_1, \ldots, i_{\m})} \!= \{(\bx_{i_1}, y_{i_1}), \ldots, (\bx_{i_{\m}}, y_{i_{\m}})\} \!\subseteq S.$ We define the complement sequence $\notbfi$ such that $\bfi \cap \notbfi = \emptyset$ and $\bfi \cup \notbfi = \{1,\ldots, n\}$. Thus, we have the complement set $S^{\notbfi} = S \setminus S^{\bfi}$ and $\notm = n-\m$.

In addition to the compression set, a message $\mu$ could be needed to compress the predictor. Although generally defined as a binary sequence, the message set can be any set of countable sequences of symbols. Let $\Sigma=\{\sigma_1, \sigma_2, \ldots, \sigma_N\}$ be the alphabet used to construct the messages and $\Sigma^*$ be the set of all possible sequences, of length $0$ to $\infty$, constructed using the alphabet~$\Sigma$. For all $\bfi\,{\in}\, \scriptP(n)$, we choose $\mathscr{M}(\bfi)$ a countable subset of $\Sigma^*$, which represents all the possible messages that can be chosen for~$\bfi$.

Given learned parameters $\theta=A(S)$, to prove that the predictor $f_\theta$ is a sample-compressed one, we need to define two functions: a compression function and a reconstruction function. The compression function $\mathscr{C} : \pmb\cup_{1\leq n \leq \infty} (\calX \times \calY)^n \to \pmb\cup_{m \leq n} (\calX \times \calY)^m \times \pmb\cup_{\bfi \in \scriptP(n)}\mathscr{M}(\bfi)$ outputs a compression set~$S^{\bfi}$ and a message~$\mu$. 
Then, the reconstruction function $\scriptR : \pmb\cup_{m \leq n} (\calX \times \calY)^m \times \pmb\cup_{\bfi \in \scriptP(n)}\mathscr{M}(\bfi) \to \Theta$ is defined such that $A(S) = \scriptR\qty(S^{\bfi}, \mu)$. 
The forthcoming results rely on a probability distribution over the set of sample-compressed predictors $\overline{\Theta} \subseteq \Theta$. For any sample-compressed predictor $\scriptR(S^{\bfi},\mu)$, this data-independent distribution is expressed as $P_{\overline{\Theta}}\qty(\scriptR(S^{\bfi}, \mu)) {=} P_{\scriptP(n)}(\bfi) P_{\mathscr{M}(\bfi)}(\mu)$, with $P_{\scriptP(n)}$ a probability distribution over $\scriptP(n)$ and $P_{\mathscr{M}(\bfi)}$ a probability distribution over $\mathscr{M}(\bfi)$. Following common practices~\citep{marchand2005learning,bazinet2024}, we consider hereunder
$P_{\scriptP(n)}(\bfi) {=} {n \choose \m}^{_{-1}}\zeta(\m)$, 
with $\zeta(k) {=} \tfrac{6}{\pi^{_2}}(k{+}1)^{-2}$. Also, given a sequence of indices~$\bfi$, we set $P_{\mathscr{M}(\bfi)}$ to a uniform distribution over the messages.

We now present a recent generalization guarantee for sample compression \citep{bazinet2024}, on which we build our theoretical framework. We use the shorthand notation $\halLSbfic(\cdot)$ to denote the empirical loss on the training sample that do not belong to the compression set: $S^{\notbfi} = S\setminus S^\ibf$; it is mandatory to use $\halLSbfic(\cdot)$ to obtain an unbiased estimate of the loss, as the predictor $\scriptR(S^{\bfi},\mu)$ relies on $S^\ibf$ and the result wouldn't hold otherwise.

\begin{theorem}[\citealp{bazinet2024}]\label{thm:sample_compress}
For any distribution $\calD$ over $\calX \times \calY$, for any family of set of messages $\{\mathscr{M}(\bfi) | \bfi {\in} \scriptP(n)\}$, for any deterministic reconstruction function $\scriptR$ and for any $\delta \in (0,1]$, with probability at least $1-\delta$ over the draw of $S \sim \calD^n$, we have 
\begin{align*}
&\forall \mathbf{i} \in \scriptP(n), \; \mu \in \mathscr{M}(\bfi): \\  
& \mathcal{L}_{\calD}\qty(
\theta_{\bfi, \mu}
) %
\leq \kl^{-1}\left(\halLSbfic\qty(
\theta_{\bfi, \mu}
),
\frac{1}{\notm}\ln\frac{2\smqty(n \\ \m)\sqrt{\notm}}{\zeta(\m)P_{\mathscr{M}(\bfi)}(\mu)\delta}\right),
\end{align*}
with 
$\theta_{\bfi, \mu} \,{=}\, \scriptR(S^{\bfi},\mu)$ the reconstructed parameters, 
$\kl(q,p) = q\ln\frac{q}{p} + (1-q)\ln\frac{1-q}{1-p}$ the binary Kullback-Leibler divergence and its inverse 
\begin{equation}\label{eq:klinv}
    \kl^{-1}(q,\varepsilon) = \argsup_{p\in[0,1]} \left\{\kl(q,p) \leq \varepsilon\right\}.
\end{equation}
\end{theorem}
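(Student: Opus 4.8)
The plan is to reduce the statement to a single concentration inequality evaluated on the complement set, upgraded to a simultaneous guarantee by a weighted union bound over all compression sequences and messages. The key structural observation is that for a \emph{fixed} index sequence $\bfi \in \scriptP(n)$ and a \emph{fixed} message $\mu \in \mathscr{M}(\bfi)$, the reconstructed predictor $\theta_{\bfi,\mu} = \scriptR(S^{\bfi},\mu)$ depends on the draw of $S$ only through $S^{\bfi}$. Since $S \sim \calD^n$ is i.i.d.\ and the index set $\bfi$ is fixed (not data-dependent), $S^{\notbfi}$ is independent of $S^{\bfi}$ and is itself an i.i.d.\ sample of size $\notm$ from $\calD$. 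Hence, conditionally on $S^{\bfi}$, the predictor $\theta_{\bfi,\mu}$ is deterministic and independent of $S^{\notbfi}$, so $\halLSbfic(\theta_{\bfi,\mu})$ is an \emph{unbiased} empirical estimate of $\Lcal_{\calD}(\theta_{\bfi,\mu})$ computed on $\notm$ fresh points. This is precisely why the bound must use the complement loss $\halLSbfic$ rather than the full-sample loss.

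First I would invoke Maurer's exponential-moment refinement of the Chernoff bound for the binary KL: for any fixed hypothesis and $\notm$ i.i.d.\ losses in $[0,1]$,
\[
\Esp_{S^{\notbfi}\sim\calD^{\notm}} \exp\!\big(\notm\,\kl(\halLSbfic(\theta_{\bfi,\mu}),\Lcal_{\calD}(\theta_{\bfi,\mu}))\big) \;\le\; 2\sqrt{\notm}.
\]
Taking the outer expectation over $S^{\bfi}$ and using the independence above, this survives for the full draw: $\Esp_{S} \exp(\notm\,\kl(\cdot,\cdot)) \le 2\sqrt{\notm}$ for each fixed pair $(\bfi,\mu)$. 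Next I would assemble the simultaneous statement through the data-independent prior $P_{\overline\Theta}(\theta_{\bfi,\mu}) = P_{\scriptP(n)}(\bfi)\,P_{\mathscr{M}(\bfi)}(\mu)$ by considering the single nonnegative random variable
\[
W(S) = \sum_{\bfi\in\scriptP(n)}\ \sum_{\mu\in\mathscr{M}(\bfi)} P_{\scriptP(n)}(\bfi)\,P_{\mathscr{M}(\bfi)}(\mu)\,\frac{\exp\!\big(\notm\,\kl(\halLSbfic(\theta_{\bfi,\mu}),\Lcal_{\calD}(\theta_{\bfi,\mu}))\big)}{2\sqrt{\notm}}.
\]
Interchanging expectation with the countable sum and applying the per-pair bound yields $\Esp_S W(S) \le \sum_{\bfi}\sum_{\mu} P_{\scriptP(n)}(\bfi)P_{\mathscr{M}(\bfi)}(\mu) \le 1$, where I would check that $\sum_{\mu} P_{\mathscr{M}(\bfi)}(\mu)=1$ and $\sum_{\bfi}P_{\scriptP(n)}(\bfi) = \sum_{\m=0}^{n}\zeta(\m) \le \tfrac{6}{\pi^2}\sum_{k\ge1}k^{-2}=1$ by the choice of $\zeta$.

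A single application of Markov's inequality then gives, with probability at least $1-\delta$ over $S$, that $W(S)\le 1/\delta$; since every summand is nonnegative, each term is individually at most $1/\delta$. Fixing any $(\bfi,\mu)$ and discarding the other summands produces $\exp(\notm\,\kl(\cdot,\cdot)) \le 2\sqrt{\notm}/(\delta\,P_{\scriptP(n)}(\bfi)\,P_{\mathscr{M}(\bfi)}(\mu))$; taking logarithms, substituting $P_{\scriptP(n)}(\bfi)=\binom{n}{\m}^{-1}\zeta(\m)$, and dividing by $\notm$ gives the claimed upper bound on $\kl(\halLSbfic(\theta_{\bfi,\mu}),\Lcal_{\calD}(\theta_{\bfi,\mu}))$. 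Inverting the KL in its second argument through the definition of $\kl^{-1}$ in \eqref{eq:klinv} then yields the stated high-probability upper bound on $\Lcal_{\calD}(\theta_{\bfi,\mu})$, holding simultaneously for all $\bfi$ and $\mu$.

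I expect the main obstacle to be phrasing the conditioning argument cleanly enough that the per-pair moment bound is legitimately transferred from $S^{\notbfi}$ to the full draw $S$, together with making the union bound genuinely simultaneous over the (possibly infinitely many) messages — which is exactly what forces $P_{\mathscr{M}(\bfi)}$ to be a proper distribution and the exponential-moment device (rather than a naive per-hypothesis tail bound) to be used. Once this scaffolding is in place, the remaining ingredients — Maurer's inequality, the $\zeta$ normalization, Markov, and the monotone inversion of the KL — are routine.
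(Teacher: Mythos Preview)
Your argument is correct and is the classical PAC-Bayes--style route to this bound: Maurer's exponential-moment inequality $\Esp\exp(\notm\,\kl(\cdot,\cdot))\le 2\sqrt{\notm}$, combined with a single Markov inequality on the prior-weighted sum. This is exactly what produces the $2\sqrt{\notm}$ factor in the statement, and your handling of the conditioning (fix $\bfi,\mu$, split $S$ into $S^{\bfi}$ and the independent complement $S^{\notbfi}$) is the right one.

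The paper, however, does not actually prove Theorem~\ref{thm:sample_compress}; it cites it from \citet{bazinet2024}. What the paper proves instead (Theorem~\ref{corr:tighter_kl} in Appendix~\ref{sec:tighter_kl}) is a \emph{tighter} variant that drops the $2\sqrt{\notm}$ term, and it does so by a different and arguably more elementary route: rather than Maurer's moment bound plus Markov on a sum, it applies the Chernoff test-set bound of \citet{foong2022note} (Theorem~\ref{thm:chernoff_test_set}) directly to each fixed $(\bfi,\bfj,\mu)$ after conditioning on the compression sets, and then takes a plain weighted union bound over all triples. Your exponential-moment device is what makes the uniform-over-countably-many-hypotheses step work through a single Markov application, at the cost of the $2\sqrt{\notm}$; the paper's approach treats each hypothesis with its own confidence level $P_{\scriptP(n)}(\bfi)P_{\scriptP(n)}(\bfj|\bfi)P_{\mathscr{M}(\bfj)}(\mu)\,\delta$ and sums the failure probabilities, which is tighter but otherwise relies on the same conditioning structure you identified.
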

On the one hand, given a fixed compression set size~$\m$, the bound decreases when the training set size $n$ increases. On the other hand, given a fixed dataset size~$n$, the bound increases with $\m$. 

\paragraph{Pick-to-Learn.}
To obtain sample compression guarantees for a class of predictors, it is necessary to prove that the learned predictor only depends on a small subset of the data and (optionally) a message. Some predictors, such as the SVM \citep{boser_svm} and the perceptron \citep{rosenblatt1958perceptron, moran2020perceptron}, have a straightforward compression scheme, as they only depend on a subset of the data after training. Some learning algorithms, such as the set covering machine (SCM) \citep{marchand2002set, marchand2003set, marchand2005learning} and decision trees \citep{shah_sample_2007}, necessitate hand-crafted compression schemes involving a message. Up until recently, there was no sample compression scheme for deep neural networks.

\begin{algorithm}[t] \small
\caption{\small Pick-To-Learn (P2L) \quad \citep{paccagnan_pick_learn_2023}
}\label{alg:p2l}
\begin{algorithmic}[1]
\INPUT{$\theta_{\rm init}$}
\hfill\COMMENT{Initial parameters of the model $f_\theta$}
\INPUT{$S = \{(\xbf_i, y_i)\}_{i=1}^n$}
\hfill\COMMENT{Training set}
\INPUT{$\gamma$} \hfill\COMMENT{Stopping criteria}
\STATE{$k \gets 0$ ; $C_0 \gets\emptyset$ ; $\theta_0 \gets \theta_{\rm init}$}
\STATE{$(\overline{\xbf}, \overline{y}) \gets \argmax_{(\xbf,y) \in S} \ell(\theta_0, \xbf, y)$}
\WHILE{$\ell(\theta_k, \overline{\xbf}, \overline{y}) \geq \gamma $}
\STATE{$k \gets k + 1$}
\STATE
$C_k \gets C_{k-1} \cup \{(\overline{\xbf}, \overline{y})\}$\;
\STATE $\theta_k \gets \mbox{\bfseries update}(\theta_{k-1} , C_k)$\;
\STATE $(\overline{\xbf}, \overline{y}) \leftarrow \argmax_{(\xbf,y) \in S \setminus C_k} \ell(\theta_k, \xbf, y)$
\ENDWHILE
\STATE {\bfseries return } $\theta_{k}$ \hfill
\COMMENT{Final parameters}
\end{algorithmic}
\vspace{-.1cm}
\end{algorithm}

The meta-algorithm Pick-To-Learn (P2L) was proposed by \citet{paccagnan_pick_learn_2023} as a compression scheme for any class of predictors, with a specific focus on deep neural networks. The meta-algorithm acts as an additional training loop, by iteratively selecting datapoints over which the model is updated. Starting with initial parameters $\theta_{\rm init}$, P2L evaluates the predictor on the whole training dataset, adds the datapoints with the highest losses to the compression set and then updates the predictor using the compression set. The meta-algorithm stops once the losses of training points that do not belong to the compression set are lower than a given stopping criterion~$\gamma$. We provide the pseudo-code of P2L in Algorithm~\ref{alg:p2l}. Recent works introduce improvements to the original scheme, notably early stopping strategies \citep{bazinet2024, paccagnan2025pick} and extension to the Bayesian setting \citep{marks2025pick2learn-gp}.

\section{SAMPLE COMPRESSION FOR CONTINUAL LEARNING}
A striking realization coming from the P2L algorithm is that only a small fraction of the training set -- the compression set -- needs to be provided to the learner in order to achieve good generalization. The winning strategy is to select this compression set to ensure a low risk on the training samples not being retained in the compression set (i.e., the complement of the compression set). This observation motivates the strategy employed in our new algorithm Continual Pick-To-Learn (\ouralgo{}), which manages the \emph{replay buffer} by subsampling only from the complement set instead of the whole training set.
Then, while learning subsequent tasks, \ouralgo{} maintains a low risk on previous tasks by adding well-chosen datapoints from the buffer to its compression set. 
This strategy inherently both mitigates the forgetting and enables the computation of sample compression bounds.

\begin{algorithm}[t] \small
\caption{\small Modified Pick-To-Learn (mP2L)}\label{alg:modified-p2l}
\begin{algorithmic}[1]
\INPUT{$\theta_{\rm init}$}
\hfill\COMMENT{Initial parameters of the model $f_\theta$}
\INPUT{$S = \{(\xbf_i, y_i, w_i)\}_{i=1}^n$}
\hfill\COMMENT{Training set (with weights)}
\INPUT{$B^{\star} = \{(\xbf_i, y_i, w_i)\}_{i=1}^m$}
\hfill\COMMENT{Buffer set (with weights)}
\INPUT{$\gamma$} \hfill\COMMENT{Stopping criteria}
\INPUT{$K^{\star}$} \hfill\COMMENT{Maximum number of iterations}
\STATE{$k \gets 0$ ; $C_0 \gets\emptyset$ ; $\theta_0 \gets \theta_{\rm init}$}
\STATE{$S^{\star} \gets S \cup B^{\star}$}
\STATE{$(\overline{\xbf}, \overline{y}, \overline{w}) \gets \argmax_{(\xbf,y, w) \in S^{\star}} \ell(\theta_0, \xbf, y)\cdot w$}
\WHILE{$ \ell(\theta_k, \overline{\xbf}, \overline{y})\cdot\overline{w} \geq \gamma $ and $k \leq K^{\star}$}
\STATE{$k \gets k + 1$}
\STATE
$C_k \gets C_{k-1} \cup \{(\overline{\xbf}, \overline{y}, \overline{w})\}$\;
\STATE $\theta_k \gets \mbox{\bfseries update}(\theta_{k-1} , C_k)$\;
\STATE $(\overline{\xbf}, \overline{y}, \overline{w}) \leftarrow \argmax_{(\xbf,y, w) \in S^{\star} \setminus C_k} \ell(\theta_k, \xbf, y)\cdot w$
\ENDWHILE
\IF{$k< K^{\star}$}
\STATE $k \gets \argmin_{0\leq k' \leq k} \boundfct(S, \theta_{k'}, C_{k'} \cap S)$ 
\hfill\COMMENT{Eq.~\eqref{eq:boundfct}}
\ENDIF
\STATE {\bfseries return } $\theta_{k}, C_{k}$ \COMMENT{Final parameters and compression~set}
\end{algorithmic}
\end{algorithm}

In \ouralgo{}, each new task is learned by a modified version of the P2L meta-algorithm (Algorithm~\ref{alg:modified-p2l}, entitled mP2L), as explained further down. That is, for each new task, the proposed continual learning algorithm (Algorithm~\ref{alg:p2lcl}) calls mP2L and then updates the replay buffer by sampling datapoints that were not chosen in the compression set. Similar to the replay buffer method, at the end of each task~$t$, the buffer contains~$\lfloor \tfrac{m}{t}\rfloor$ datapoints of each previous task, with~$m$ the maximum buffer size. 

We modified Pick-To-Learn in two significant ways. \textbf{(i)}~First, we introduced weights to the loss functions to tackle the imbalance problem between the current task and the previous tasks. Traditionally, when working with a replay buffer, the class imbalance problem is taken care of by training on a number of datapoints from previous tasks. As our model is only trained on a very limited subset of the data, another strategy must be implemented to mitigate the class imbalance effect. Before starting the training on a new task, mP2L sets the weight of the datapoints from the previous task to~$\omega\,{>}\,1$ and the weight of the current task to~$1$. For the stopping criteria of mP2L to be satisfied, the worst loss on the current task must be less than $\gamma$ and the worst loss on the previous tasks must be less than~$\tfrac{\gamma}{\omega}$. 
\textbf{(ii)} 
The original P2L algorithm trains the model until it first achieves zero errors on the complement set. 
Instead, mP2L performs early stopping based on the bound value \citep[as proposed by][]{bazinet2024}. That is, mP2L relies on the trade-off encoded in \cref{thm:sample_compress} between the accuracy on the complement set $C$ and the complexity of the model $f_{\theta}$. More precisely, it returns the model's parameters $\theta$ that minimizes
\begin{equation}\label{eq:boundfct}
\boundfct(S, \theta, C) \!
=  \!\kl^{-1}  \!\left(\!\hatL_{S \setminus C}(\theta),  \tfrac{1}{|S\setminus C|}\ln\tfrac{2\sqrt{|S\setminus C|}{|S| \choose |C|}}{\zeta(|C|)\delta}\right)\!,\!
\end{equation}
with $\kl^{-1}$ given by Eq.~\eqref{eq:klinv}. Thanks to \cref{thm:sample_compress}, we have that $\calL_\Dcal(\theta)\leq \boundfct(S, \theta, C)$ with high probability.

Note that for the first observed task, the mP2L procedure starts from randomly initialized parameters~$\theta_0$ and is executed on the training sample $S_1$ (as it is done by the original P2L). Then, for every subsequent task $t\in \{2,\ldots, T\}$, the mP2L procedure is initialized to the previously learned parameters~$\theta_{t-1}$. It then learns from the current task sample $S_t$ and a subset of randomly selected instances from previous tasks. The latter is obtained from the procedure \textbf{sample}$(S, m)$ (see Algorithm~\ref{alg:p2lcl}), which represents the random sampling of $m$ instances of $S$ without replacement.\footnote{Although we only discuss random sampling, any buffer management technique applied to the complement set would be valid in \ouralgo{}, such as coreset methods.}

\begin{algorithm}[t] \small
\caption{\small Continual Pick-To-Learn (\ouralgo{})}\label{alg:p2lcl}
\begin{algorithmic}[1]
\INPUT{$\theta_0$}\hfill
\COMMENT{Initial parameters of the model}
\INPUT{$S_1, S_2, \ldots, S_T$}
\hfill\COMMENT{Training sets}
\INPUT{$\gamma$} \hfill\COMMENT{mP2L's stopping criteria}
\INPUT{$m$} \hfill\COMMENT{Buffer's max sampling size}
\INPUT{$\omega$} \hfill\COMMENT{Weight for buffer tasks}

\STATE{$B_i \gets \emptyset \quad  \forall i=1,\ldots, T$}
\STATE{$B^{\star} \gets \emptyset$}
\FOR{$t\in \{1,\ldots, T\}$}
\STATE $\hat{S_t} \gets \{(\xbf, y, 1)\}_{(\xbf,y)\in S_t}$
\STATE
$\theta_t, C^\star \gets \mbox{\bfseries mP2L}(\theta_{t-1}, \hat{S_t}, B^{\star}, \gamma, \infty)$
\STATE $B_i \gets \mbox{\bfseries sample}(B_i, \lfloor\frac{m}{t}\rfloor)
\  \forall i {=} 1,\ldots, t{-}1$
\STATE{ $B_t \gets \mbox{\bfseries sample}(\hat S_t \setminus C^\star, \lfloor\frac{m}{t}\rfloor)$ \label{alg:linesampling}}
\STATE $B^\star \gets \bigcup_{i=1}^t \{(\xbf, y, \omega)\}_{(\xbf, y, \cdot)\in B_i}$
\ENDFOR
\STATE {\bfseries return } $\theta_T$. \hfill\COMMENT{Final parameters}
\end{algorithmic}
\end{algorithm}

\paragraph{Compression and reconstruction scheme.}
From a theoretical standpoint, in contrast to the original P2L, both mP2L and \ouralgo{} cannot be reconstructed straightforwardly without a message for two reasons:
(1) when only the compression set is given as input, the bound function $\boundfct(S, \theta, C)$ used as stopping criteria in mP2L cannot be computed on the whole dataset; 
(2) the sampling procedure {\bfseries sample}$(S, m)$ in \ouralgo{} cannot be reproduced in the reconstruction step. 
In Appendix~\ref{appendix:compression_reconstruction}, we prove that \ouralgo{} is a sample-compression algorithm by providing its compression and reconstruction functions, which subsequently gives rise to sample-compression bounds as presented in \cref{thm:main_results}. Noteworthy, 
after learning on $T$ tasks, the compression function provides two compression sets~$S^{\bfi}$ and~$S^{\bfj}$, along with a message pair $\mu = (\mu_1, \mu_2)$.
As explained further in the appendix, the use of two compression sets, along with the first part of the message $\mu_1 = [\mu_1^i]_{i=1}^T$, serves the proper treatment of the weighted buffer $B^\star$, while the  second part of the message $\mu_2 = [\mu_2^i]_{i=1}^T$ contains the number of iterations $K^\star$ to perform for each call to mP2L. 
The set of all possible message pairs is denoted as
$
\mathscr{M}_{1:T}(\bfj) = \{2, \ldots, T\}^{|\bfj|}\times \left[\pmb\times_{t=1}^T \{1, \ldots, n_t + |\Bcal|\}\right].
$

\paragraph{Generalization bound.}
Given the compression and reconstruction function detailed in Appendix~\ref{appendix:compression_reconstruction}, 
\cref{thm:sample_compress} can be used to obtain generalization bounds on the last learned task, which is done by mP2L to compute the bound~$\boundfct$. However, this bound holds for only one distribution of data. Therefore, it cannot be applied to bound the risks of the learned predictor on previous tasks seen by \ouralgo{}.

The next Theorem~\ref{thm:main_results} holds for any previously learned tasks. 
For a dataset $S_t \sim \calD_t$, we denote 
the loss on the complement set of the joint set
$S_t^{\bfi}\cup S_t^{\bfj}
$
as $\halLStbfcap\!(\theta)$. 
Moreover, we denote the reconstruction function of \ouralgo{} as 
$\scriptR_{1:T}\big(S_t^{\bfi}, S_t^{\bfj}, \mu\mid S_1, \ldots,S_{t-1}, S_{t+1}, \ldots, S_T\big).$
This formulation is important for the following theorem, as the reconstruction function for task $t$ is \emph{conditioned} on all datasets $S_1$ to $S_T$, with the exception of $S_t$.
\begin{restatable}{theorem}{mainresults}\label{thm:main_results}
For any set of distributions $\left\{\Dcal_t\right\}_{t=1}^T$ over $\Xcal \times \Ycal$ sampled from $\frakD$, 
and for any $\delta \in (0,1]$, with probability at least $1-\delta$ over the draw of $S_{t'} \sim \Dcal_{t'}, {t'}=1,\ldots, T$, we have 
\iflong
\begin{align*}
\forall t \in \{1,\ldots,T\},\ 
\ibf,
\bfj \in \scriptP(n_t),\ 
\mu \!=\! (\mu_1, \mu_2)\in\mathscr{M}_{1:T}(\bfj):
\Lcal_{\calD_t}\qty(
\theta_{\bfi,\bfj,\mu}^{(t)}
) 
\leq \kl^{-1}\left(\halLStbfcap\!\qty(
\theta_{\bfi,\bfj,\mu}^{(t)}
), \frac{\epsilon(\bfi, \bfj, \mu)}{n_t - |\ibf|-|\mathbf{j}|}\right),
\end{align*} 
\else
\begin{align*}
&\forall t \in \{1,\ldots,T\},\ 
\ibf,
\bfj \in \scriptP(n_t),\ 
\mu \!=\! (\mu_1, \mu_2)\in\mathscr{M}_{1:T}(\bfj):\\[1mm]
&\Lcal_{\calD_t}\qty(
\theta_{\bfi,\bfj,\mu}^{(t)}
) 
\leq \kl^{-1}\left(\halLStbfcap\!\qty(
\theta_{\bfi,\bfj,\mu}^{(t)}
), \frac{\epsilon(\bfi, \bfj, \mu)}{n_t - |\ibf|-|\mathbf{j}|}\right),
\end{align*} 
\fi
with 
$\theta_{\bfi,\bfj,\mu}^{(t)}= \scriptR_{1:T}\big(S_t^{\bfi}, S_t^{\bfj}, \mu\,{\mid}\, S_1, \ldots,S_{t-1}, S_{t+1}, \ldots, S_T\big)$ the reconstructed parameters given by \cref{alg:recon_p2lcl} and
\begin{align*}
    \epsilon(\bfi, \bfj, \mu) = \ln\left[ \tfrac{T}{\delta} \textstyle{n_t \choose \m}\textstyle{n_t - \m \choose |\mathbf{j}|} \tfrac{(T-1)^{|\mathbf{j}|}}{\zeta(|\ibf|)\zeta(|\bfj|)}\prod_{i=1}^T \tfrac{1}{\zeta(\mu_2^i)}\right]\!.
\end{align*}
\end{restatable}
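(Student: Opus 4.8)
The plan is to derive the statement as a per-task instance of the single-distribution sample-compression guarantee (\cref{thm:sample_compress}), closed by a union bound over the $T$ tasks. The pivotal observation is the one stressed just before the statement: once we freeze every dataset except $S_t$, the map $\scriptR_{1:T}(\,\cdot\mid S_1,\ldots,S_{t-1},S_{t+1},\ldots,S_T)$ is a fixed, deterministic reconstruction function that does \emph{not} depend on $S_t$. It consumes only the two disjoint index-selected subsets $S_t^{\bfi},S_t^{\bfj}\subseteq S_t$ and the message $\mu=(\mu_1,\mu_2)\in\mathscr{M}_{1:T}(\bfj)$, and returns the final parameters $\theta_T=\theta_{\bfi,\bfj,\mu}^{(t)}$. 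Hence, conditionally on the other datasets, \ouralgo{} is a genuine sample-compression scheme for the single sample $S_t\sim\Dcal_t^{n_t}$, with compression set $S_t^{\bfi}\cup S_t^{\bfj}$ and complement loss $\halLStbfcap$, which is exactly the situation covered by \cref{thm:sample_compress}.

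Before invoking that theorem I would first discharge its hypothesis, namely that $\scriptR_{1:T}$ really reconstructs $\theta_T$ without ever reading $S_t\setminus(S_t^{\bfi}\cup S_t^{\bfj})$; this is the content of \cref{appendix:compression_reconstruction} and is, to my mind, the main obstacle. Two features of \ouralgo{} break the naive P2L reconstruction and must be repaired by the message. First, the early-stopping rule of mP2L minimizes $\boundfct$ evaluated on the full task data, which is unavailable at reconstruction time; recording the number of update iterations for each of the $T$ calls to mP2L (the component $\mu_2$, one entry per task) lets the reconstruction replay exactly the same sequence of updates without recomputing the bound. Second, the random buffer subsampling cannot be reproduced; recording which task-$t$ points are later promoted into a compression set, namely the second subset $S_t^{\bfj}$, together with the task index at which each promotion happens (the component $\mu_1\in\{2,\ldots,T\}^{|\bfj|}$), derandomizes this step. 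The delicate point is to verify that these two pieces of side information, together with the frozen datasets $S_{t'}$ for $t'\neq t$, suffice to regenerate the entire weighted interaction between the current task and the replay buffer across all subsequent tasks, and hence $\theta_T$; disjointness of $\bfi$ and $\bfj$ is automatic, since buffer points are drawn from $\hat S_t\setminus C^\star$, the complement of task $t$'s own compression set.

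Next I would fix the data-independent prior $P_{\overline\Theta}$ over reconstructed predictors and check it is a sub-probability distribution. It factorizes as $\binom{n_t}{\m}^{-1}\zeta(\m)\cdot\binom{n_t-\m}{|\bfj|}^{-1}\zeta(|\bfj|)\cdot(T-1)^{-|\bfj|}\cdot\prod_{i=1}^{T}\zeta(\mu_2^{i})$, corresponding respectively to the choice of $\bfi$, the choice of the disjoint $\bfj$ from the remaining $n_t-\m$ indices, the uniform choice of $\mu_1$, and the choice of $\mu_2$. Since $\sum_{k\ge 0}\zeta(k)=1$, each binomial normalizes the uniform choice within its size class, and the uniform term on $\mu_1$ normalizes exactly, the total mass is at most $1$, so this prior is admissible. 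I then apply the sample-compression argument underlying \cref{thm:sample_compress} with this product prior: the proof of that theorem uses only that $P_{\overline\Theta}$ is a fixed data-independent (sub-)probability over the reconstructions, so it goes through when the reconstruction is indexed by a pair $(\bfi,\bfj)$ of disjoint sequences rather than a single one. Substituting the above prior and the per-task confidence $\delta/T$ into the logarithmic complexity term, with complement size $n_t-\m-|\bfj|$ in the denominator, reproduces the factors in $\epsilon(\bfi,\bfj,\mu)$ (one detail to track is the mild $\sqrt{n_t-\m-|\bfj|}$ factor inherited from \cref{thm:sample_compress}, which must either be retained or bounded away).

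Finally I would close with the union bound. For each fixed realization of $\{S_{t'}\}_{t'\neq t}$, \cref{thm:sample_compress} yields the task-$t$ inequality with probability at least $1-\delta/T$ over $S_t\sim\Dcal_t^{n_t}$; since the samples are drawn independently across tasks, integrating over the other datasets preserves this probability for the joint draw, so the event that task $t$'s bound fails has probability at most $\delta/T$. A union bound over $t\in\{1,\ldots,T\}$ then guarantees that all $T$ bounds hold simultaneously with probability at least $1-\delta$, which is the claim. Note that the argument uses only the conditional independence of the samples given the task distributions; the meta-distribution $\frakD$ and the i.i.d.\ draw of the $\Dcal_t$ play no further role in this particular guarantee.
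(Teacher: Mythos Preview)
Your overall architecture matches the paper's proof: condition on all datasets except $S_t$ so that $\scriptR_{1:T}(\cdot\mid S_1,\ldots,S_{t-1},S_{t+1},\ldots,S_T)$ becomes a fixed deterministic reconstruction, apply a single-distribution sample-compression bound with the product prior you wrote down, take expectation over the frozen datasets, and close with a union bound over $t$ yielding the $T/\delta$ factor.

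There is one technical point you flag but do not resolve. You propose to invoke \cref{thm:sample_compress} and then say the $\sqrt{n_t-\m-|\bfj|}$ factor ``must either be retained or bounded away.'' It cannot simply be bounded away: the stated $\epsilon(\bfi,\bfj,\mu)$ contains no such term. The paper handles this by first proving a separate, tighter intermediate result (\cref{corr:tighter_kl} in \cref{sec:tighter_kl}) that simultaneously (i) accommodates two disjoint compression sets $\bfi,\bfj$ and (ii) drops the $2\sqrt{\notm}$ factor. The removal of the square-root term is not a cosmetic simplification of the proof of \cref{thm:sample_compress}; it comes from applying the Chernoff test-set bound (\cref{thm:chernoff_test_set}) directly to the losses on the complement set, rather than the Catoni/Maurer-style argument that produces the $2\sqrt{\notm}$ in \cref{thm:sample_compress}. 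Without this intermediate step your argument would only deliver a bound with an extra $\ln(2\sqrt{n_t-\m-|\bfj|})$ inside $\epsilon$, which is not the statement.

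A minor correction on the role of $\mu_1$: it does not record ``the task index at which each promotion happens.'' Points in $S_t^{\bfj}$ are buffer points from task $t$ that were selected into some later compression set \emph{and subsequently dropped from the buffer} by the random subsampling; $\mu_1$ records the task index at which that \emph{removal} occurs, so the reconstruction knows when to stop replaying them. This does not affect your proof skeleton, but your description of the mechanism is inverted.
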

The proof is given in \cref{app:proof_main}. It first relies on a tighter version of \cref{thm:sample_compress}, where the $2\sqrt{n-\m}$ term is removed and a second compression set is considered. This new result is tailored to the continual learning setting. For all tasks $1$ through $T$, \cref{thm:main_results} encodes a tradeoff similar to \cref{thm:sample_compress} between the predictor accuracy and its complexity.
Here, the predictor complexity is evaluated using the size of the compression sets $\bfi$ and $\bfj$ and the probability of choosing a message. The probability of the messages $\mu_1$ and $\mu_2$ are functions of the size of $\bfj$ and the number of tasks.  Indeed, the probability of $\mu_1$ 
decays when $|\mathbf{j}|$ and $T$ grow larger. Moreover, the probability of $\mu_2$ depends on the size of the compression set outputted by mP2L at each task $T$. Thus, the larger the compression sets, the smaller the probability of both messages.
 In conclusion, the bound encodes the ability of a model to minimize the loss on the dataset whilst still keeping $\bfi$ and $\bfj$ small.
\section{EXPERIMENTS}\label{sec:experiments}
In this section, we present experiments with \ouralgo{} to demonstrate the tightness of our new generalization bounds for continual learning in two different settings. Afterward, we compare \ouralgo{} to established baseline approaches to show that although our algorithm was first designed to obtain tight generalization bounds, its performances are still very competitive with established continual learning schemes.

\paragraph{Datasets and continual learning settings.}
We carry out experiments on
CIFAR10 \citep{cifar10cifar100}, CIFAR100 \citep{cifar10cifar100} and TinyImageNet \citep{tiny-imagenet}. We report results in Class-Incremental (CI) and Task-Incremental (TI) continual learning settings, as defined in the Avalanche framework \citep{avalanchelib}.

\paragraph{Baselines.} To assess the effectiveness of our proposed method \ouralgo{}, we benchmark against several strong and widely adopted baselines
across Class-Incremental and Task-Incremental learning settings. 

For the Class-Incremental setting, we use the baselines Replay \citep{rolnick2019experience}, Dark Experience Replay (DER) \citep{der}, iCaRL \citep{rebuffi2017icarl}, GDumb \citep{prabhu2020gdumb}, Contrastive
Continual Learning via Importance Sampling (CCLIS) \citep{Li_Azizov_LI_Liang_2024} and Coreset Selection via Reducible Loss (CSReL) \citep{tong2025coreset}

In the Task-Incremental setting, we report Replay, DER, LaMAML \citep{gupta2020look}, Learning without Forgetting (LwF) \citep{li2017learning}, CCLIS and CSReL.

\begin{figure*}[!t]
    \centering
    \begin{subfigure}[b]{.49\textwidth}
        \centering
        \includegraphics[width=\textwidth, trim=0cm 0cm 0cm 0cm,clip]{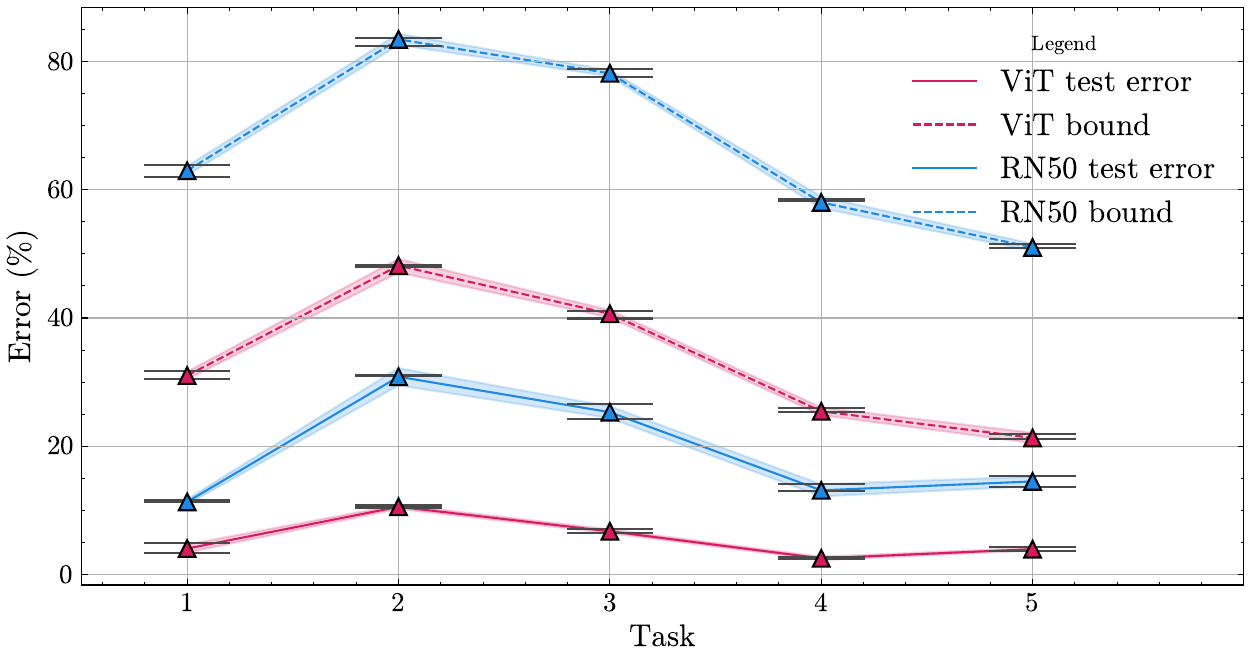} 
        \caption{Class-incremental setting}
    \end{subfigure}
    \begin{subfigure}[b]{.49\textwidth}
        \centering
        \includegraphics[width=\textwidth, trim=0cm 0cm 0cm 0cm,clip]{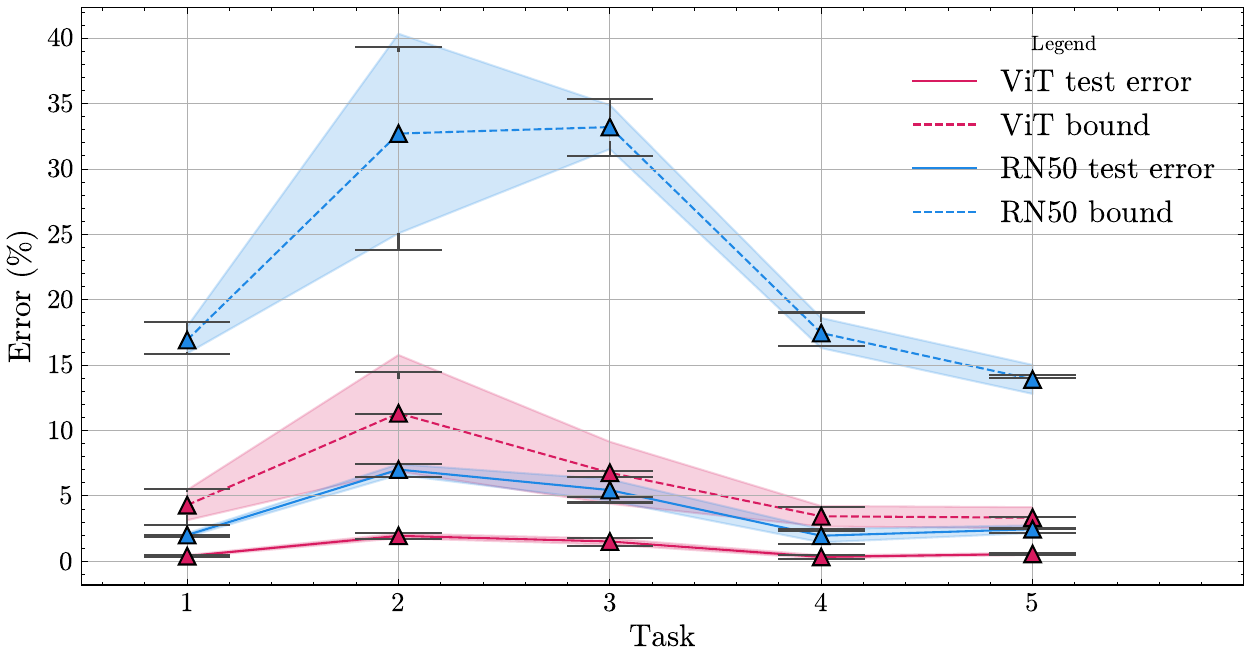}
        \caption{Task-incremental setting}
    \end{subfigure}
    \caption{Illustration of the behavior of the bound on CIFAR10 with 5 tasks.}
    \label{fig:main_paper_cifar10_5}
\end{figure*}

\begin{figure*}
        \centering
    \begin{subfigure}[b]{.49\textwidth}
        \centering
        \includegraphics[width=\textwidth, trim=0cm 0cm 0cm 0cm,clip]{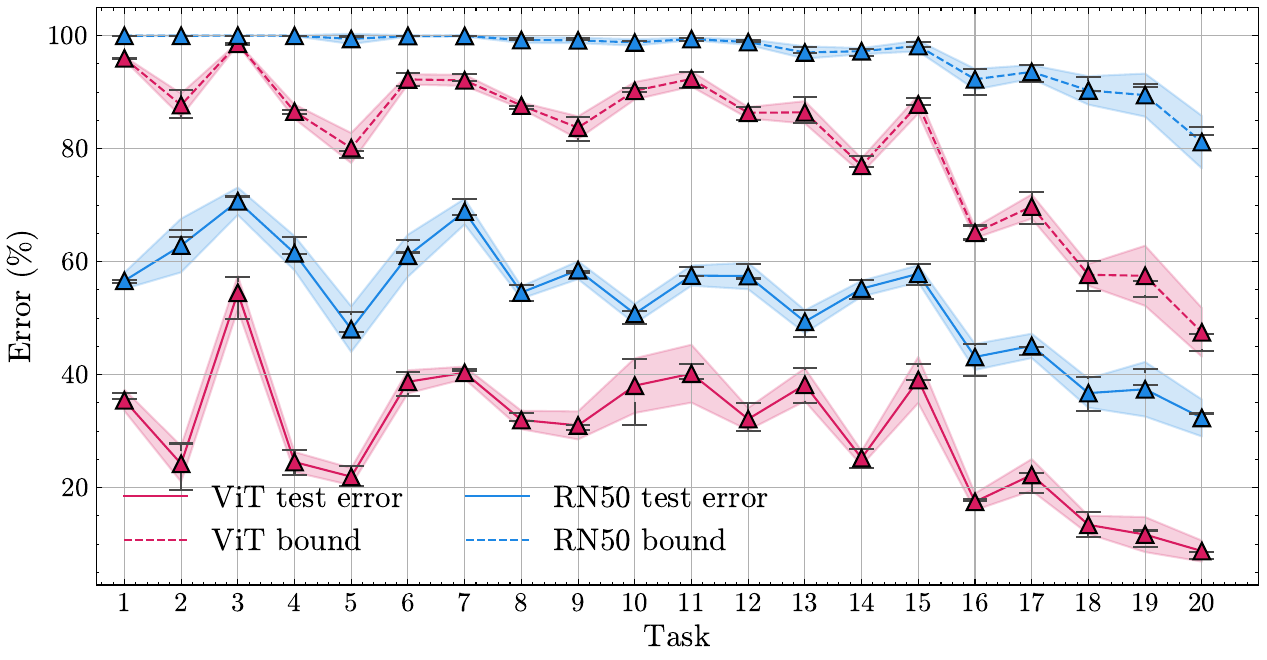} 
        \caption{Class-incremental setting}
    \end{subfigure}
    \begin{subfigure}[b]{.49\textwidth}
        \centering
        \includegraphics[width=\textwidth, trim=0cm 0cm 0cm 0cm,clip]{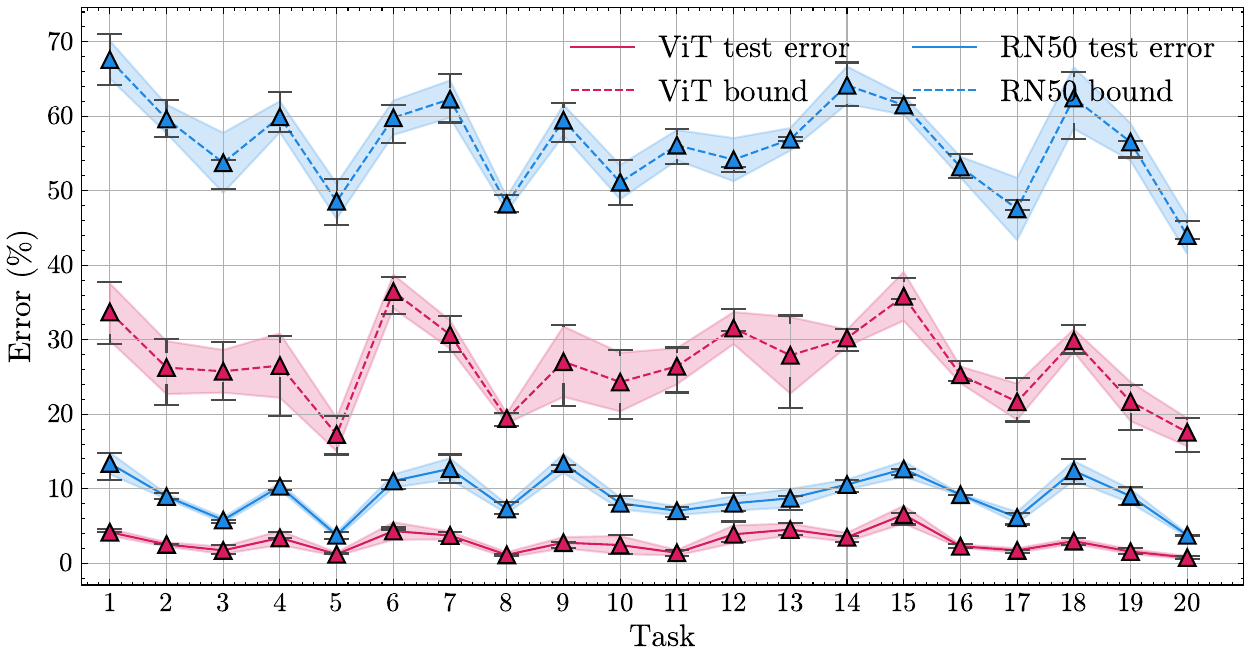}
        \caption{Task-incremental setting}
    \end{subfigure}
    \caption{Illustration of the behavior of the bound on CIFAR100 with 20 tasks.}
    \label{fig:main_paper_cifar100_20}
\end{figure*}

\begin{table*}[!t]
    \centering
    \caption{Summary of average accuracies and forgetting values obtained on various continual learning tasks on CIFAR10, CIFAR100 and TinyImageNet datasets.
    \vspace{-0.2cm}} 
    \label{tab:colorimages}
    \begin{subtable}[t]{\textwidth}
    \label{tab:cifar10_ci}
    \caption{Class-Incremental experiments}
    \centering
    \resizebox{.95\textwidth}{!}{%
        \begin{tabular}{lccccccccc}
     \toprule
     & \multicolumn{2}{c}{\small{\textit{CIFAR10 (5 tasks)}}} & \multicolumn{2}{c}{\small{\textit{CIFAR100 (10 tasks)}}} & \multicolumn{2}{c}{\small{\textit{CIFAR100 (20 tasks)}}} & \multicolumn{2}{c}{\small{\textit{TinyImNet (40 tasks)}}}\\
     \textbf{Method} & \textbf{Acc ($\uparrow$)} & \textbf{Forg ($\downarrow$)} & \textbf{Acc ($\uparrow$)} & \textbf{Forg ($\downarrow$)} & \textbf{Acc ($\uparrow$)} & \textbf{Forg ($\downarrow$)} & \textbf{Acc ($\uparrow$)} & \textbf{Forg ($\downarrow$)}  \\
     \midrule
      \textbf{CoP2L} (ViT) & 94.45 & \textbf{2.10 } & \textbf{73.49} & 17.06 & 70.56 & 21.15 & 51.59 & 36.32  \\
     Finetuning (ViT) & 27.55 & 89.74 & 12.87 & 93.27 & 8.49 & 94.64 & 2.83 & 95.36 \\
     Replay (ViT) & 94.00 & 6.11 & 68.71 & 30.02 & 69.21 & 29.52  & 47.49 & 48.39  \\
     DER (ViT) & \textbf{95.03} & 3.79 & 72.29 & 25.11 &\textbf{77.03} & 19.27 & 53.92 & 41.77\\
     iCaRL (ViT) & 88.60 & 3.13  &  65.54 & 13.64 & 62.36 & 15.88  & 21.06 & 34.49 \\
     GDumb (ViT) & 94.16 & 3.16 & 72.51 & \textbf{11.53}& 72.35 & \textbf{11.77} & \textbf{53.93} & \textbf{15.11}  \\
      CCLIS (ViT) & 93.15 & 4.44 & 67.95 & 23.68 & 66.81 & 24.42 & 48.55 & 32.26 \\
     \midrule
      \textbf{CoP2L} (RN50) & 80.98 & 5.84 &  \textbf{49.53} & 22.31& 46.74 & 22.54 & 33.30 & 35.93  \\
     Finetuning (RN50) & 19.63 & 97.22 & 9.36 & 89.93 & 4.88 & 92.81 & 1.80 & 78.66 \\
     Replay (RN50) & 81.38 & 17.72 & 49.41 & 42.12 & 49.44 & 44.96 & 28.38 & 63.36 \\
     DER (RN50) & \textbf{82.35} & 15.31 & 49.37 & 44.01 & \textbf{57.53} & 33.37 & 28.77 & 64.17\\
     iCaRL (RN50) & 74.49 & \textbf{1.79} & 46.06 & \textbf{8.50}& 46.47 & \textbf{6.74} & \textbf{33.89} & \textbf{10.42}   \\
     GDumb (RN50) & 80.84 & 9.35 & 43.37 & 20.23  & 43.60 & 19.93 & 24.29 & 21.29\\
     CCLIS (RN50) & 76.89 & 12.32 & 44.22 & 23.23 & 39.10 & 25.78 & 18.32 & 40.40 \\
     CSReL (RN50) & 34.50 & 26.44 & 38.75 & 42.12 & 32.57 & 40.23 & 12.84 & 73.69 \\
     \bottomrule
    \end{tabular}
    }
    \end{subtable}
    \newline
    \vspace*{0.001cm}
    \newline
    \begin{subtable}[t]{\textwidth}
    \label{tab:cifar10_ti}
    \caption{Task-Incremental experiments}
    \centering
    \resizebox{.95\textwidth}{!}{%
    \begin{tabular}{lccccccccc}
     \toprule
     & \multicolumn{2}{c}{\small{\textit{CIFAR10 (5 tasks)}}} & \multicolumn{2}{c}{\small{\textit{CIFAR100 (10 tasks)}}} & \multicolumn{2}{c}{\small{\textit{CIFAR100 (20 tasks)}}} & \multicolumn{2}{c}{\small{\textit{TinyImNet (40 tasks)}}} \\
     \textbf{Method} & \textbf{Acc ($\uparrow$)} & \textbf{Forg ($\downarrow$)} & \textbf{Acc ($\uparrow$)} & \textbf{Forg ($\downarrow$)} & \textbf{Acc ($\uparrow$)} & \textbf{Forg ($\downarrow$)} & \textbf{Acc ($\uparrow$)} & \textbf{Forg ($\downarrow$)}  \\
     \midrule
      \textbf{CoP2L} (ViT) & 99.04 & -0.00 & 95.03 & 0.90 & 97.18 & 0.56 & 93.81 & 1.50 \\
     Finetuning (ViT) & 99.15 & 0.23 & 95.17 & 1.84 & 97.57 & 0.86 & 93.62 & 2.61 \\
     Replay (ViT) & \textbf{99.29} & \textbf{-0.01} & \textbf{96.43} & \textbf{0.34} & \textbf{98.09} & \textbf{0.25} & \textbf{95.66} & \textbf{0.50} \\
    DER (ViT) & 99.15 & 0.21 & 96.22 & 0.71 & 98.08 & 0.31 & 95.08 & 1.13 \\
     LaMAML (ViT) &  99.25 & 0.05 & 95.58 & 0.58&  97.84 & 0.26 & 94.82 & 0.79 \\
     LwF (ViT) & 97.89 & 1.69 & 95.09 & 0.98&  95.74 & 2.15 & 85.18 & 9.72 \\
     CCLIS (ViT) & 98.81 & 0.48 & 90.99 & 4.49 & 94.86 & 2.88 & 86.82 & 6.14\\
     \midrule
      \textbf{CoP2L} (RN50) &  96.22 & 0.56 & 86.78 & 2.01 & 90.89 & 2.27 & 88.57 & 2.46  \\
     Finetuning (RN50) & 96.60 & 1.00 & 88.23 & 1.78 & 93.25 & 1.27 & 90.62 & 1.38 \\
     Replay (RN50) & \textbf{96.88} & 0.42  & 87.61 & 0.86 & 92.89 & 0.47  & 90.03 & 1.26\\
     DER (RN50) & 96.57 & 1.09 & \textbf{88.29 }& 1.72 & \textbf{93.43} & 1.11 & \textbf{90.66} & 1.12 \\
     LaMAML (RN50) & 96.65 & 0.73  & 83.57 & 6.18 & 89.89 & 4.12 & 87.20 & 4.53\\
     LwF (RN50) & 95.80 & \textbf{0.33} & 79.39 & \textbf{0.47} & 87.73 & \textbf{0.33} & 85.03 & \textbf{0.89} \\
     CCLIS (RN50) & 95.45 & 0.79 & 78.55 & 6.76 & 83.51 & 6.60 & 75.65 & 14.12\\
     CSReL (RN50) & 65.46 & 13.66 & 69.14 & 15.17 & 63.52 & 14.49 & 50.22 & 22.34 \\ 
     \bottomrule
    \end{tabular}
    }
    \end{subtable}
\end{table*}

 All the details about the experiments can be found in \cref{app:experiments}, alongside supplementary Class Incremental experiments on MNIST \citep{lecun2010mnist}, FashionMNIST (FMNIST) \citep{fashionmnist} and EMNIST \citep{cohen2017emnistextensionmnisthandwritten}. Moreover, we present Domain-Incremental (DI) experiments on PermutedMNIST \citep{goodfellow2013empirical} and RotatedMNIST \citep{ben2010theory}. We report all the results in Appendix~\ref{app:allresults}, including the standard deviations and the training times for each method. 

\paragraph{Accuracy and Forgetting.}
We denote $A(t, \theta_T)$ the accuracy obtained on a task $t$ of a model~$f_{\theta_T}$ trained on $T$ tasks. 
We report the average accuracy over $T$ tasks, $$\overline{A}(\theta_T) \,{=}\, \frac{1}{T} \sum_{t=1}^T A(t, \theta_T)\,,$$ 
and the average forgetting at task~$T$, 
$${\overline{F}(T) = \frac{1}{T-1}\sum_{t=1}^{T-1} [A(t, \theta_t) - A(t, \theta_T)]}\,.$$ 

\subsection{Study of our generalization bound}

In Figures~\ref{fig:main_paper_cifar10_5} and~\ref{fig:main_paper_cifar100_20}, we present our generalization bound on the true risk for each task for CIFAR10 with 5 tasks and CIFAR100 with 20 tasks.
These bounds are presented for Vision Transformer (ViT) \citep{dosovitskiy2020image} networks and ResNet50 (RN50) \citep{he2016deep} networks trained with a buffer size of 2000 under both Class-Incremental and Task-Incremental settings. We present the details for the computation of the bound in \cref{appendix:compression_reconstruction}.

We observe that although the bounds are calculated exclusively on the training set, their values are non-vacuous and follow the test set error trends.
\ouralgo{} achieves particularly tight generalization bounds in the Task-Incremental setting, where forgetting is inherently reduced due to access to task identities and the use of a separate classifier head for each task. These conditions allow the model to better compress task-specific knowledge, leading to both stronger theoretical guarantees and improved empirical performance. We also observe that with the ViT backbone, the bounds are significantly tighter than those obtained with the ResNet50 backbone. This is likely because the ViT backbone provides more structured and information-rich representations, which are easier to compress effectively, thereby enhancing the performance of \ouralgo{}. Furthermore, the tightness of the bounds improves when the size of the dataset increases, which explains that the bounds on CIFAR10 (5000 samples per class) are noticeably tighter than the bounds on CIFAR100 (about 500 samples per class).

In addition, the complexity of the incremental scenario plays a role: CIFAR10 with 5 tasks involve learning two classes at a time, whereas CIFAR100 with 10 tasks require learning ten classes simultaneously, further contributing to looser bounds.
Generalization bounds for multiple combinations of datasets and architectures are reported in \cref{app:allboundplots}.

\subsection{Study of the algorithm's performance}

\cref{tab:colorimages} presents results on CIFAR10, CIFAR100 and TinyImageNet datasets, for both Class-Incremental and Task-Incremental settings. We observe that in Class-Incremental settings, \ouralgo{} is competitive with most baselines in terms of accuracy while maintaining low forgetting. Alongside these baselines, we present results for finetuning a model without a replay buffer, which evidently leads to a drastic drop in performance. In Task-Incremental settings, our method remains competitive. We note that even simple finetuning performs well in this setting. This behavior is expected, since adding a new classification head for each task naturally reduces forgetting. All the while being a competitive continual learning approach, \ouralgo{} offers the additional advantage of certifying the generalization loss of the learned predictor. %
Further experiments on Imagenette and Imagewoof \citep{Howard_Imagewoof_2019} are presented in Appendix~\ref{app:experiments}. We do not present results for the ViT with CSReL, as the method is designed for small datasets ($\approx$1000 datapoints), and becomes prohibitively more expensive when the dataset and the model grow in size. With the ResNet50, running CSReL was 5 to 32 times longer than executing \ouralgo{}, while achieving much worse accuracy. 

Finally, we refer the reader to Appendix~\ref{app:ablation} for an extended analysis of \ouralgo{} behavior,
including an ablation study on the loss terms, a sensitivity analysis with respect to task ordering, an analysis of the impact of buffer size, an ablation study on the use of early stopping with the bound and an evaluation of memory cost. 
Furthermore, we report experiments that CoP2L is able to mitigate forgetting more than the other methods while still retaining plasticity, and is able to achieve a balanced performance over past tasks.  

\section{CONCLUSION}

We proposed \ouralgo{}, an algorithm rooted in the sample compression theory, {for self-certified continual learning}. To the best of our knowledge, this is the first attempt to employ sample compression theory within the continual learning context. We provided sample compression bounds for \ouralgo{} and verified empirically that they are non-vacuous and informative. We furthermore showed that on {several challenging datasets, our approach yields comparable results when compared to several strong baselines, while also providing learning guarantees for trustworthy continual learning.}
As the combination of Pick-to-Learn with experience replay leads to a successful learning scheme, we foresee that combining Pick-to-Learn with other continual learning approaches could lead to new compelling ways of obtaining self-certified predictors. 

\newpage
\bibliographystyle{plainnat}
\bibliography{references}

\clearpage
\appendix
\thispagestyle{empty}

\onecolumn
\title{Sample Compression for Self-Certified Continual Learning: \\
(Supplementary Materials)}
\maketitle
\longtrue

\section{Table of symbols}

We summarize the notation and their definitions in the following table.
\begin{table}[!h]
    \caption{Summary table of the symbols used in this paper.}
    \label{tab:of_symbols}
    \centering
    \begin{tabular}{c|l}
    \toprule
    Symbol & Definition \\ \midrule
         $\calX$ & A feature space\\
         $\calY$ & A label space\\
         $S$  & A dataset composed of datapoints in $\calX \times \calY$ \\
         $\calD$ & A distribution over $\calX \times \calY$\\
         $\frakD$ & A meta-distribution over the space of distributions $\calD$\\
         $\Theta$ & A set of learnable parameters $\theta$\\
         $f_{\theta}$ & A function $f_{\theta} : \calX \to \calY$ parametrized by $\theta \in \Theta$\\
          $A$  & An learning algorithm that returns parameters $\theta \in \Theta$ \\
          \midrule 
         $\bfi$  & The indices of the compression set $S^{\bfi}$\\
         $\bfi^{\setcomp}$ & The indices of the compression set's  complement $S^{\bfi^{\setcomp}}$\\
         $\m$  & The size of the compression set\\
         $\bfj$  & The indices of the second compression set $S^{\bfj}$ used by \ouralgo{}\\
         $\scriptP(n)$  & The powerset of $\{1,\ldots, n\}$\\
         $\mu$ & A message\\
         $\Sigma$  & The alphabet of symbols $\sigma$ used to create the message $\mu$\\
         $\Sigma^*$ & The set of all possible sequences of symbols $\sigma \in \Sigma$ \\
         $\mathscr{M}(\bfi)$ & The set of possible messages in $\Sigma^*$ given a sequence $\bfi$\\
         $\scriptC$  & The compression function\\
         $\scriptR$  & The reconstruction function\\
         $\boundfct$   & The sample-compression bound of \cref{eq:boundfct}\\
         \midrule
          $\ell$  & A loss function $\ell : \Theta \times \calX \times \calY \to [0,1]$\\
         $\hatL_S$  & The empirical loss over a dataset $S$ given a loss function $\ell$\\
          $\hatL_{S}^{\ \bfi}$ & The empirical loss over a dataset $S^{\bfi}$ given a loss function $\ell$\\
          $\calL_{\calD}$  & The true loss over a distribution $\calD$ given a loss function $\ell$\\\bottomrule
    \end{tabular}
\end{table}

\section{Compression and Reconstruction Functions}
\label{appendix:compression_reconstruction}

In line with the prevalent literature, we introduced the sample compression framework based on a compression function providing a compression set and optionally a message.
 In the case of \ouralgo{}, the reconstruction scheme relies on two compression sets~$(S^{\bfi},S^{\bfj})$ and a message pair $(\mu_1,\mu_2)$, which we explain later in this section.\footnote{The idea of using multiple compression sets appeared in \citet{marchand2002set} and \citet{marchand2003set}, but in a setting without a message.}

\begin{algorithm}[h]\small
\caption{Compression function of Continual Pick-To-Learn (\ouralgo{})}\label{alg:comp_p2lcl}
\begin{algorithmic}[1] 
\INPUT{$\theta_0$}
\hfill\COMMENT{Initialization parameters of the model}
\INPUT{$S_1, S_2, \ldots, S_T$}
\hfill\COMMENT{Training sets}
\INPUT{$\gamma$} \hfill\COMMENT{P2L's stopping criteria}
\INPUT{$m$} \hfill\COMMENT{Buffer's max sampling size}
\INPUT{$\omega$} \hfill\COMMENT{Weight for buffer tasks}
\STATE{$B_t \gets \emptyset, S^{\bfi}_t \gets \emptyset, S^{\bfj}_t \gets \emptyset, \mu_{1,t} \gets \emptyset \quad  \forall t=1,\ldots, T$}
\STATE{$B^{\star} \gets \emptyset, \mu_2 \gets \emptyset$}
\FOR{$t\in \{1,\ldots, T\}$}
\STATE $\hat{S_t} \gets \{(\xbf, y, 1)\}_{(\xbf,y)\in S_t}$
\STATE $\theta_t, C^\star\gets \mbox{\bfseries mP2L}(\theta_{t-1}, \hat{S_t}, B^{\star}, \gamma, \infty)$
\STATE $\mu_2^t \gets |C^{\star}|$
\STATE $S^{\bfi}_i \gets S^{\bfi}_i \cup (C^\star\cap S_i) \quad \forall i = 1,\ldots, t$
\STATE $B_i \gets \mbox{\bfseries sample}(B_i, \lfloor\frac{m}{t}\rfloor)
\quad \forall i = 1,\ldots, t-1$
\STATE $B_t \gets \mbox{\bfseries sample}(S_t \setminus C^\star, \lfloor\frac{m}{t}\rfloor)$
\FOR{$i \in \{1, \ldots, t-1\}$}
\FOR{$(\xbf_{i,j}, y_{i,j}, \cdot) \in B^{\star}$ and $(\xbf_{i,j}, y_{i,j}, \cdot) \notin B_i$}
\STATE $S^{\bfj}_i \gets S^{\bfj}_i \cup \{(\xbf_{i,j}, y_{i,j})\}$
\STATE $S^{\bfi}_i \gets S^{\bfi}_i \setminus \{(\xbf_{i,j}, y_{i,j})\}$
\STATE $\mu_{1,i}^{j} \gets \{t\}$
\ENDFOR
\ENDFOR
\STATE $B^\star \gets \bigcup_{i=1}^t \{(\xbf, y, \omega)\}_{(\xbf, y, \cdot)\in B_i}$
\ENDFOR
\STATE {\bfseries return } $\theta_T, \{S^{\bfi}_t\}_{t=1}^T, \{S^{\bfj}_t\}_{t=1}^T, \{\mu_{1,t}\}_{t=1}^T, \mu_2$. 
\hfill \COMMENT{Learned parameters, compression sets and message sets}
\end{algorithmic}
\end{algorithm}

\subsection{Compression Function}

Two challenges prevent \ouralgo{} (as presented by Algorithm~\ref{alg:p2lcl}) from being used as its own compression and reconstruction function, as it is done when working with the original P2L algorithm in a standard (non continual learning) setting. In this subsection, we expose each of these challenges while explaining how the proposed compression algorithm (Algorithm~\ref{alg:comp_p2lcl}) allows to circumvent them. 

The first challenge comes from the sampling of the buffer~$B^\star$ (Lines 6 to 8 of Algorithm~\ref{alg:p2lcl}). When training using \ouralgo{}, datapoints from the buffer can be chosen to be part of the compression set. At the next task, these datapoints may still be available to train the model (if they are not excluded from the buffer by the sampling step). Therefore, the reconstruction function needs to know when to remove the datapoint. The message $\mu_1$ returned by Algorithm~\ref{alg:comp_p2lcl} thus provides the task index after which the datapoint was removed during the execution of \ouralgo{}. As the reconstruction function only requires a message for datapoints who were removed from the buffer after being added to the compression set~$S^{\bfi}$, a second compression set $S^{\bfj}$ is dedicated to datapoints that necessitate a message. The message $\mu_1$ given is chosen among $\{2,\ldots, T\}$ for each datapoint belonging to $S^{\bfj}$. 

The second challenge is the use of the bound as stopping criterion in mP2L. Recall that the reconstruction function does not have access to the whole dataset. Thus, it cannot recover the bound value computed during the initial training phase (see Line~11 of \cref{alg:modified-p2l}) and use it as a stopping criterion. To address this inconvenience, the number of iterations to perform is provided to mP2L as a message $\mu_2 = (\mu_2^1, \ldots, \mu_2^T)$. That is, for each task~$t$, with a buffer $\mathcal{B}$, the message component $\mu_2^t$ is chosen in $\{1, \ldots, n_t + |\mathcal{B}|\}$, giving the number of mP2L's iterations to perform.

After learning on $T$ tasks, the compression function provides the sample compression set~$S^{\bfi}$ and ~$S^{\bfj}$, along with a message pair $(\mu_1, \mu_2)$ chosen among the set of all possible messages, denoted as
\begin{equation} \label{eq:CoP2Lmessages}
\mathscr{M}_{1:T}(\bfj) \,=\, \{2, \ldots, T\}^{|\bfj|}\times \left[\textstyle\bigtimes_{t=1}^T \{1, \ldots, n_t + |\Bcal|\}\right].
\end{equation}

\subsection{Reconstruction Function}
Under the assumption that the input parameters $\theta_0, \gamma, m,\omega$ are the same for the compression function (Algorithm~\ref{alg:comp_p2lcl}) and for \ouralgo{} (Algorithm~\ref{alg:p2lcl}), the composition of the compression and the reconstruction function must output the exact same predictor as \ouralgo{}. However, it only has access to the compression sets and the messages. The first challenge of the compression function is addressed using $\mu_1$ from line~6 to 11 of \cref{alg:recon_p2lcl}. For each datasets, we verify if a datapoint was previously excluded from the buffer by the sampling function. If so, we remove it from the buffer. The second challenge is addressed in line~4 of \cref{alg:recon_p2lcl} by giving the message $\mu_2$ to mP2L as stopping criterion.

\begin{algorithm}[t] \small
\caption{Reconstruction function of Continual Pick-To-Learn (\ouralgo{})}\label{alg:recon_p2lcl}
\begin{algorithmic}[1]
\INPUT{$\theta_0$}
\hfill\COMMENT{Initialization parameters of the model}
\INPUT{$S^{\bfi}_1, S^{\bfi}_2, \ldots, S^{\bfi}_T$}
\hfill\COMMENT{Compression sets}
\INPUT{$S^{\bfj}_1, S^{\bfj}_2, \ldots, S^{\bfj}_T$}
\hfill\COMMENT{Compression sets}
\INPUT{$\mu_{1,1}, \mu_{1,2}, \ldots, \mu_{1,T}$}\hfill\COMMENT{Message sets}
\INPUT{$\mu_2$}\hfill\COMMENT{Message set}
\INPUT{$\gamma$} \hfill\COMMENT{P2L's stopping criteria}
\INPUT{$m$} \hfill\COMMENT{Buffer's max sampling size}
\INPUT{$\omega$} \hfill\COMMENT{Weight for buffer tasks}
\STATE{$B^{\star} \gets \emptyset$}
\FOR{$t\in \{1,\ldots, T\}$}
\STATE $\hat{S_t} \gets \{(\xbf, y, 1)\}_{(\xbf,y)\in S^{\bfi}_t}$
\STATE
$\theta_t, C^{\star}\gets \mbox{\bfseries mP2L}(\theta_{t-1}, \hat{S_t}, B^\star, \gamma, \mu_2^t)$
\STATE $B_t \gets (S^{\bfi}_t \setminus C^{\star}) \cup S^{\bfj}_t$
\FOR{$i \in \{1, \ldots, t-1\}$}
\FOR{$(\bx_{i,j}, y_{i,j}, \omega_i) \in S^{\bfj}_i$}
\IF{$\mu_{1,i}^j = t$}
\STATE $B_i \gets B_i \setminus \{(\bx_{i,j}, y_{i,j}, \omega_i)\}$
\ENDIF
\ENDFOR
\STATE $B^\star \gets \bigcup_{i=1}^t \{(\xbf, y, \omega)\}_{(\xbf, y, \cdot)\in B_i}$
\ENDFOR
\ENDFOR
\STATE {\bfseries return } $\theta_T$. \hfill\COMMENT{Learned parameters}
\end{algorithmic}
\end{algorithm}

\subsection{Computation of the Bound}
To compute the bound of \cref{thm:main_results}, we need three things : the size of the first compression set $\bfi$, the size of the second compression set $\bfj$ and the message $\mu$. By simply running our \ouralgo{}, we are able to retrieve $|\bfi|+|\bfj|$, but not the size of each compression set individually. Moreover, we are completely unable to retrieve the messages. Thus, instead of implementing \cref{alg:p2lcl}, we actually implemented the compression function as described in \cref{alg:comp_p2lcl}. 

Moreover, we use a block version of P2L, as defined by Algorithm~2 of \citet{paccagnan_pick_learn_2023}. We thus add $k$ datapoints at once to the compression set at Line~6 of Algorithm~\ref{alg:modified-p2l} (the used values of $k$ for each experiments is given in \cref{app:trainingdetails}. Thus, the value of $\mu_2^t$ (Line 6 of \cref{alg:comp_p2lcl}) becomes the number of iterations of the mP2L algorithm at for task $t$. Knowing that mP2L outputs a compression set $C^{\star}_t$, and adds $k$ datapoints to the compression set at each iteration, the number of iterations is $\mu_2^t = \frac{|C^{\star}_t|}{k}$. Finally, when computing the bound, we use the code provided by \citet{viallard2021self} to invert the $\kl$ divergence.

\newpage
\section{Tighter sample compression $\kl$ bound}\label{sec:tighter_kl}

In this section, we prove a tighter $\kl$ bound that is not derived from \cref{thm:sample_compress} but proved in a very similar way. To do so, we start by providing the Chernoff test-set bound for losses in $[0,1]$ \citep{langford2005tutorial,foong2022note}.
\begin{theorem}[\citealp{foong2022note}]\label{thm:chernoff_test_set}
    Let $X_1, \ldots, X_n$ be \emph{i.i.d.} random variables with $X_i \in [0,1]$ and $\E[X_i] = p$. Then, for any $\delta \in (0,1]$, with probability at least $1-\delta$
    \begin{equation*}
        p \ \leq\ \kl^{-1}\qty(\tfrac{1}{n} \textstyle\sum_{i=1}^n X_i, \tfrac{1}{n} \ln\tfrac{1}{\delta}).
    \end{equation*}
\end{theorem}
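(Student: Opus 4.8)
The plan is to derive the bound from the Cram\'er--Chernoff method applied to the \emph{lower} tail of the empirical mean, then to translate the result through the monotonicity of $\kl$. Throughout, write $\hat{p} = \tfrac{1}{n}\sum_{i=1}^n X_i$ and $\varepsilon = \tfrac{1}{n}\ln\tfrac{1}{\delta}$, so that the target inequality reads $p \le \kl^{-1}(\hat{p},\varepsilon)$. I would bound the probability of the complementary (failure) event by $\delta$.

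First I would establish the one-sided Chernoff inequality: for every fixed threshold $q$ with $0 \le q \le p$,
\begin{equation*}
\Pr\!\left(\hat{p} \le q\right) \;\le\; \exp\!\left(-\,n\,\kl(q,p)\right).
\end{equation*}
For any $\lambda > 0$, Markov's inequality applied to $\exp(-\lambda\sum_i X_i)$ gives $\Pr(\hat{p}\le q) \le e^{\lambda n q}\big(\E\, e^{-\lambda X_1}\big)^n$. Since $e^{-\lambda x} \le (1-x) + x\,e^{-\lambda}$ for all $x\in[0,1]$ (convexity of $e^{-\lambda\,\cdot}$ together with its endpoint values), taking expectations yields $\E\, e^{-\lambda X_1} \le (1-p) + p\,e^{-\lambda}$, the moment generating function of a Bernoulli$(p)$ law. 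Substituting this and optimizing over $\lambda > 0$ recovers precisely the binary Chernoff bound $\exp(-n\,\kl(q,p))$; this is the only step where the $[0,1]$-boundedness is used.

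It remains to identify the failure event. Because $\kl(\hat p,\cdot)$ vanishes at $\hat p$ and increases away from it, the set $\{p' : \kl(\hat p,p')\le\varepsilon\}$ is an interval whose right endpoint is $\kl^{-1}(\hat p,\varepsilon)\ge \hat p$; hence $\{p > \kl^{-1}(\hat p,\varepsilon)\} = \{p > \hat p\}\cap\{\kl(\hat p,p) > \varepsilon\}$. When $\varepsilon \ge \kl(0,p)$ this set is empty (failure probability $0$); otherwise let $q^\ast \in [0,p]$ be the unique solution of $\kl(q^\ast,p)=\varepsilon$. Since $q\mapsto\kl(q,p)$ is strictly decreasing on $[0,p]$, the failure event is contained in $\{\hat p \le q^\ast\}$, so the first step gives
\begin{equation*}
\Pr\!\left(p > \kl^{-1}(\hat p,\varepsilon)\right) \;\le\; \Pr\!\left(\hat p \le q^\ast\right) \;\le\; e^{-n\,\kl(q^\ast,p)} \;=\; e^{-n\varepsilon} \;=\; \delta .
\end{equation*}
The main obstacle is the first step: justifying that the worst-case law is Bernoulli (the pointwise convexity bound) and carrying out the $\lambda$-optimization cleanly, together with the care needed around the boundary cases of the $\kl$-inversion. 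The reduction of the failure event is then routine bookkeeping.
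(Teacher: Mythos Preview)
Your proof is correct. The paper does not actually supply its own proof of this statement: it cites the result from \citet{foong2022note} and \citet{langford2005tutorial} and then uses it as a black box in the proof of the subsequent theorem. Your Cram\'er--Chernoff derivation---reducing to the Bernoulli moment generating function via the convexity bound $e^{-\lambda x}\le(1-x)+xe^{-\lambda}$ on $[0,1]$, optimizing in $\lambda$, and then translating the tail event through the monotonicity of $\kl(\cdot,p)$ on $[0,p]$---is the standard route and is carried out cleanly, including the boundary case $\varepsilon\ge\kl(0,p)$.
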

We now prove a tighter $\kl$ sample compression bound. As our algorithm \ouralgo{} needs two compression sets, we consider a second compression set $\bfj$. To use these two compression sets, we need to first redefine the reconstruction function
\begin{equation*}
    \scriptR : \pmb\cup_{m \leq n} (\calX \times \calY)^m \times \pmb\cup_{k \leq n} (\calX \times \calY)^k \times \pmb\cup_{\bfj \in \scriptP(n)}\mathscr{M}(\bfj) \to \Theta.
\end{equation*}
We define a conditional probability distribution $P_{\scriptP(n)}(\bfj|\bfi)$ that incorporates the knowledge that a vector $\bfi$ was already drawn from $\scriptP(n)$ and that $\bfi \cap \bfj = \emptyset$. Thus, we have $\sum_{\bfi \in \scriptP(n)}\sum_{\bfj \in \scriptP(n)} P_{\scriptP(n)}(\bfi) P_{\scriptP(n)}(\bfj|\bfi) \leq 1.$ If the choice of $\bfj$ isn't conditional to the choice of $\bfi$, we simply have $P_{\scriptP(n)}(\bfj|\bfi) = P_{\scriptP(n)}(\bfj)$.
\begin{theorem}\label{corr:tighter_kl}
For any distribution $\calD$ over $\calX \times \calY$, for any family of set of messages $\{\mathscr{M}(\bfj)\, | \bfj \in \scriptP(n)\}$, for any deterministic reconstruction function $\scriptR$ , for any loss $\ell: \Theta \times \calX \times \calY \to [0,1]$ and for any $\delta \in (0,1]$, with probability at least $1-\delta$ over the draw of $S \sim \calD^n$, we have 
\begin{align*}
&\forall \bfi \in \scriptP(n), \bfj \in \scriptP(n),   \mu \in \mathscr{M}(\bfj) : \\
&\calL_{\calD}\qty(\scriptR\qty(S^{\bfi}, S^{\bfj}, \mu)) \leq \kl^{-1}\qty(\hatL_{S}^{\, \notbfi \cap \notbfj}\qty(\scriptR\qty(S^{\bfi}, S^{\bfj}, \mu)), \frac{1}{n-\m-|\bfj|}\ln\frac{1}{P_{\scriptP(n)}(\bfi)P_{\scriptP(n)}(\bfj|\bfi)P_{\mathscr{M}(\bfj)}(\mu)\delta})
    \end{align*}
\end{theorem}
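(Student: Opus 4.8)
The plan is to follow the classical sample-compression argument, but to replace the concentration step that produces the superfluous $2\sqrt{n-\m}$ factor of \cref{thm:sample_compress} by a direct application of the Chernoff test-set bound of \cref{thm:chernoff_test_set}. The overall structure has three stages: (i) fix a triple $(\bfi,\bfj,\mu)$ and argue that the reconstructed predictor is independent of the held-out complement, so that the complement losses form a genuine i.i.d.\ test set; (ii) invoke \cref{thm:chernoff_test_set} on that test set to obtain a per-triple bound holding with a carefully chosen confidence; (iii) combine all triples through a weighted union bound whose weights are exactly the prior masses $P_{\scriptP(n)}(\bfi)P_{\scriptP(n)}(\bfj|\bfi)P_{\mathscr{M}(\bfj)}(\mu)$.

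First I would fix disjoint index sets $\bfi,\bfj\in\scriptP(n)$ and a message $\mu\in\mathscr{M}(\bfj)$, and set the per-triple confidence level to $\delta_{\bfi,\bfj,\mu}\eqdef P_{\scriptP(n)}(\bfi)\,P_{\scriptP(n)}(\bfj|\bfi)\,P_{\mathscr{M}(\bfj)}(\mu)\,\delta$. Since the positions in $\bfi$ and $\bfj$ are now fixed, the reconstructed parameters $\scriptR(S^{\bfi},S^{\bfj},\mu)$ depend on the draw $S$ only through the datapoints indexed by $\bfi\cup\bfj$ (together with the fixed symbol $\mu$). Conditioning on those datapoints, the remaining $n-\m-|\bfj|$ points indexed by $\notbfi\cap\notbfj$ stay i.i.d.\ from $\calD$ and are independent of the predictor, so their losses are i.i.d.\ $[0,1]$ variables with common mean $\calL_{\calD}(\scriptR(S^{\bfi},S^{\bfj},\mu))$. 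Applying \cref{thm:chernoff_test_set} with sample size $n-\m-|\bfj|$ and failure probability $\delta_{\bfi,\bfj,\mu}$ then shows, conditionally, that the per-triple bound $\calL_{\calD}(\scriptR(S^{\bfi},S^{\bfj},\mu))\le\kl^{-1}\big(\hatL_{S}^{\,\notbfi\cap\notbfj}(\scriptR(S^{\bfi},S^{\bfj},\mu)),\,\tfrac{1}{n-\m-|\bfj|}\ln\tfrac{1}{\delta_{\bfi,\bfj,\mu}}\big)$ is violated with probability at most $\delta_{\bfi,\bfj,\mu}$. As this conditional statement holds for every realization of the held-in datapoints, it holds unconditionally after averaging over them.

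Then I would assemble the final guarantee by a union bound over all triples. Let $E_{\bfi,\bfj,\mu}$ be the event that the displayed per-triple bound fails; the previous stage gives $\Pr[E_{\bfi,\bfj,\mu}]\le\delta_{\bfi,\bfj,\mu}$. Summing the masses, $\sum_{\bfi,\bfj,\mu}\delta_{\bfi,\bfj,\mu}=\delta\sum_{\bfi\in\scriptP(n)}P_{\scriptP(n)}(\bfi)\sum_{\bfj\in\scriptP(n)}P_{\scriptP(n)}(\bfj|\bfi)\sum_{\mu\in\mathscr{M}(\bfj)}P_{\mathscr{M}(\bfj)}(\mu)\le\delta$, where I use that each message distribution sums to one and, crucially, the stated normalization $\sum_{\bfi}\sum_{\bfj}P_{\scriptP(n)}(\bfi)P_{\scriptP(n)}(\bfj|\bfi)\le1$. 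Hence the probability that at least one bound fails is at most $\delta$, so with probability at least $1-\delta$ every triple satisfies its bound simultaneously, which is the claim.

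The step I expect to be the main obstacle is the independence/conditioning argument in the per-triple analysis: one must argue rigorously that, with the index sets fixed, the reconstructed predictor depends on $S$ only through the datapoints it compresses, so that the held-out points indexed by $\notbfi\cap\notbfj$ constitute a genuine i.i.d.\ test set for it and \cref{thm:chernoff_test_set} legitimately applies. The delicate point specific to this \emph{tighter} bound is the bookkeeping of the conditional prior $P_{\scriptP(n)}(\bfj|\bfi)$: I must ensure that the joint masses over the two nested compression sets sum to at most one (the property the statement records as $\sum_{\bfi}\sum_{\bfj}P_{\scriptP(n)}(\bfi)P_{\scriptP(n)}(\bfj|\bfi)\le1$), since this is precisely what lets the union bound close without reintroducing the $\sqrt{n-\m}$ penalty of \cref{thm:sample_compress}.
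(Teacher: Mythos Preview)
Your proposal is correct and follows essentially the same route as the paper: fix a triple $(\bfi,\bfj,\mu)$, use the i.i.d.\ assumption to condition on $S^{\bfi}\cup S^{\bfj}$ so that the complement becomes a genuine test set, apply \cref{thm:chernoff_test_set} with the weighted confidence $P_{\scriptP(n)}(\bfi)P_{\scriptP(n)}(\bfj|\bfi)P_{\mathscr{M}(\bfj)}(\mu)\delta$, and close with a union bound using the normalization $\sum_{\bfi,\bfj}P_{\scriptP(n)}(\bfi)P_{\scriptP(n)}(\bfj|\bfi)\le 1$. The paper orders the steps as union bound first and then conditioning/Chernoff, while you condition first and union-bound after, but the two arguments are identical in substance.
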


\begin{proof}
Let us prove the complement of the expression in \cref{corr:tighter_kl}.
Denote the upper bound 
\begin{equation*}
    U_{\bfi, \bfj, \mu}(\delta) \ = \ \kl^{-1}\qty(\hatL_{S}^{\, \notbfi \cap \notbfj}\qty(\scriptR\qty(S^{\bfi}, S^{\bfj}, \mu)), \frac{1}{n-\m-|\bfj|}\ln\frac{1}{P_{\scriptP(n)}(\bfi)P_{\scriptP(n)}(\bfj|\bfi)P_{\mathscr{M}(\bfj)}(\mu)\delta})\,.
\end{equation*}
We have
\begin{align*}
   &\Prob_{S \sim \calD^n}\qty(\exists \bfi,\bfj \in \scriptP(n), \mu \in \mathscr{M}(\bfj) : 
    \calL_{\calD}\qty(\scriptR\qty(S^{\bfi}, S^{\bfj}, \mu)) > U_{\bfi, \bfj, \mu}(\delta)) \\
   &\leq \sum_{\bfi \in \scriptP(n)} \sum_{\bfj \in \scriptP(n)} \Prob_{S \sim \calD^n}\qty(\exists \mu \in \mathscr{M}(\bfj) : 
     \calL_{\calD}\qty(\scriptR\qty(S^{\bfi}, S^{\bfj}, \mu)) > U_{\bfi, \bfj, \mu}(\delta)) \numberthis\label{eq:union_bound_1}\\
    &\leq  \sum_{\bfi \in \scriptP(n)}\sum_{\bfj \in \scriptP(n)} \sum_{\mu \in \mathscr{M}(\bfj)} \Prob_{S \sim \calD^n}\qty(
    \calL_{\calD}\qty(\scriptR\qty(S^{\bfi}, S^{\bfj}, \mu)) > U_{\bfi, \bfj, \mu}(\delta)) \numberthis\label{eq:union_bound_2}\\
    &\leq  \sum_{\bfi \in \scriptP(n)}\sum_{\bfj \in \scriptP(n)} \sum_{\mu \in \mathscr{M}(\bfj)} \E_{S^{\bfi} \sim \calD^{\m}} \E_{S^{\bfj} \sim \calD^{|\bfj|}}\Prob_{S^{\notbfi \cap \notbfj} \sim \calD^{n-\m-|\bfj|}}\qty(
     \calL_{\calD}\qty(\scriptR\qty(S^{\bfi}, S^{\bfj}, \mu)) > U_{\bfi, \bfj, \mu}(\delta)) \numberthis\label{eq:idd_assump_tight}\\
    &\leq  \sum_{\bfi \in \scriptP(n)} \sum_{\bfj \in \scriptP(n)}\sum_{\mu \in \mathscr{M}(\bfj)} \E_{S^{\bfi} \sim \calD^{\m}}\E_{S^{\bfj} \sim \calD^{|\bfj|}}P_{\scriptP(n)}(\bfi)P_{\scriptP(n)}(\bfj|\bfi)P_{\mathscr{M}(\bfj)}(\mu)\delta \numberthis \label{eq:use_chernoff}\\
    &\leq  \sum_{\bfi \in \scriptP(n)} \sum_{\bfj \in \scriptP(n)}\sum_{\mu \in \mathscr{M}(\bfj)} P_{\scriptP(n)}(\bfi)P_{\scriptP(n)}(\bfj|\bfi)P_{\mathscr{M}(\bfj)}(\mu)\delta\\
    &\leq \delta.
\end{align*}

\cref{eq:union_bound_1,eq:union_bound_2} use the union bound, \cref{eq:idd_assump_tight} uses the \emph{i.i.d.} assumption, and \cref{eq:use_chernoff} uses the Chernoff Test-set bound of \cref{thm:chernoff_test_set} with $p= \calL_{\calD}(\scriptR(S^{\bfi},S^{\bfj}, \mu)) = \E_{(\bx,y) \sim \calD} \ell(\scriptR(S^{\bfi}, S^{\bfj},\mu), \bx, y)$ and $X_i = \ell(\scriptR(S^{\bfi}, S^{\bfj}, \mu), \bx_i, y_i) \forall i \in \notbfi$. Finally, the last inequality is obtained from the definition of $P_{\scriptP(n)}$ and $P_{\mathscr{M}(\bfj)}$.
\end{proof}

\section{Proof of Theorem~\ref{thm:main_results}}\label{app:proof_main}

\mainresults*

\begin{proof}
Let us choose $t \in [1,T]$. Let us sample $S_1, \ldots, S_{t-1}, S_{t+1}, \ldots,  S_T$.

Let 
\begin{equation*}
    U_{\bfi, \bfj, \mu}(\delta) = \kl^{-1}\qty(\hatL_{S}^{\, \notbfi \cap \notbfj}\qty(\theta_{\bfi,\bfj,\mu}^{(t)}), \frac{1}{n_t-\m-|\bfj|}\ln\frac{1}{P_{\scriptP(n_t)}(\bfi)P_{\scriptP(n_t)}(\bfj|\bfi)P_{\mathscr{M}(\bfj)}(\mu)\delta}).
\end{equation*}

Then, we have
\begin{align*}
    \Prob_{S^t}\qty(\forall \bfi \in \scriptP(n_t), \bfj \in \scriptP(n_t),  \mu \in \mathscr{M}(\bfj) : \calL_{\calD}\qty(\theta_{\bfi,\bfj,\mu}^{(t)}) \leq U_{\bfi, \bfj, \mu}(\delta)).
\end{align*}

As all the datasets (except $S^t$) are sampled beforehand, we can define $\scriptR_{1:T}(\cdot) = \scriptR\qty(\cdot;S^1, \ldots, S_{t-1}, S_{t+1}, \ldots, S_T)$ before drawing $S^t$. Thus, the reconstruction function is only a function of the dataset $S^t$, which is the setting of the result of \cref{corr:tighter_kl}. We can then lower bound this probability by $1-\delta$.

We finish the proof by applying the bound to all datasets. We have
\begin{align*}
&\Prob_{S_1,\ldots, S_T}\Biggl(
\forall \bfi , \bfj ,  \mu : \calL_{\calD}\qty(\theta_{\bfi,\bfj,\mu}^{(t)}) \leq U_{\bfi, \bfj, \mu}(\delta)\Biggr) \\
=&\E_{S_1,\ldots, S_T}\indicator\Bigg(
\forall \bfi , \bfj ,  \mu : \calL_{\calD}\qty(\theta_{\bfi,\bfj,\mu}^{(t)}) \leq U_{\bfi, \bfj, \mu}(\delta)\Biggr) \\
=&\E_{S_1,\ldots,S_{t-1}, S_{t+1},\ldots, S_T}\E_{S_t}\indicator\Bigg(
\forall \bfi , \bfj ,  \mu :  \calL_{\calD}\qty(\theta_{\bfi,\bfj,\mu}^{(t)}) \leq U_{\bfi, \bfj, \mu}(\delta)\Biggr) \\
=&\E_{S_1,\ldots,S_{t-1}, S_{t+1},\ldots, S_T}\Prob_{S_t}\Bigg(
\forall \bfi , \bfj ,  \mu :  \calL_{\calD}\qty(\theta_{\bfi,\bfj,\mu}^{(t)}) \leq U_{\bfi, \bfj, \mu}(\delta)\Biggr) \\
\geq& \E_{S_1,\ldots,S_{t-1}, S_{t+1},\ldots, S_T} 1-\delta \\
\geq& 1-\delta.
\end{align*}
We use a union bound argument to add the $\forall t \in [1,T]$, leading to the term $\frac{\delta}{T}$\,. 

As this theorem holds specifically for \ouralgo{}, we specify the distributions $P_{\scriptP(n_t)}$ and $P_{\mathscr{M}(\bfj)}$. Following the work of \citet{marchand2003set}, which used sample compression bounds with three compression sets, with $\zeta(k) = \frac{6}{\pi}(k+1)^{-1}$, we choose
$$P_{\scriptP(n_t)}(\bfi) = \mqty(n_t \\ \m )^{-1}\zeta(\m) \quad \mbox{ and }\quad P_{\scriptP(n_t)}(\bfj|\bfi) = \mqty(n_t-\m \\ |\bfj| )^{-1}\zeta(|\bfj|)\,.$$

Finally, we split the message $\mu$ into two messages $\mu_1$ and $\mu_2$. The first message is defined using the alphabet $\Sigma_1 = \{2,\ldots, T\}$. This message indicates the task number where a datapoint is sampled out of the buffer. Thus, the size of $\Sigma_1$ is $T-1$ and the probability of a symbol is $\frac{1}{T-1}$. For any vector $\bfj$, we have a sequence of length $|\bfj|$, and thus the probability of choosing each sequence is $(\frac{1}{T-1})^{|\bfj|}$. 

The second message $\mu_2$ is defined using the alphabet $\Sigma_2 = \{1,\ldots, n_t + |\Bcal|\}$. We choose to use $\zeta(k) = \frac{6}{\pi^2}(k+1)^{-2}$ as the probability distribution over each character. We know that for any $N \geq 0$, $\sum_{k=1}^{N} \zeta(k) \leq 1$. Thus, we have the probability of a sequence $\mu_2 = \mu_2^{1}\ldots \mu_2^{T}$ is $\prod_{i=1}^{T} \zeta(\mu_2^{i})$.

We define $\mathscr{M}(\bfj, T)$ the set of messages such that the sequence of symbols from $\Sigma_1$ is of length $|\mathbf{j}|$ and the sequence of symbols from $\Sigma_2$ is of length $T$ (Equation~\ref{eq:CoP2Lmessages}).
\end{proof}
\newpage

\section{Experiments}
\label{app:experiments}

We denote the licensing information of the principal assets used in this project. Notably, we use Avalanche \citep{avalanchelib} (MIT License), PyTorch \citep{Ansel_PyTorch_2_Faster_2024} (BSD 3-Clause License) and NumPy \citep{harris2020numpy} (NumPy license). For the datasets, we use the MNIST dataset \citep{lecun1998_mnist} (MIT License), the CIFAR10 and CIFAR100 datasets \citep{cifar10cifar100} and subsets of the ImageNet dataset \citep{deng2009imagenet}. There is no license information for CIFAR10 and CIFAR100, but it is  freely available at \url{https://www.cs.toronto.edu/~kriz/cifar.html} and the authors only ask for their technical report to be cited \citep{cifar10cifar100}. The ImageNet dataset \citep{deng2009imagenet} and its subsets (TinyImageNet \citep{tiny-imagenet}, ImageNette \citep{imagenette} and ImageWoof \citep{Howard_Imagewoof_2019}) are restricted to non-commercial uses and educational purposes \footnote{The licensing information of ImageNet can be found here : \url{https://image-net.org/download.php}}.

\subsection{Dataset summary}
\label{app:datasetdetails}
Datasets used for class incremental learning experiments:

\begin{itemize}
    \item CIFAR10, 5 tasks of 2 classes each (total of 10 classes)
    \item CIFAR100 (10 tasks), 10 tasks of 10 classes each (total of 100 classes)
    \item CIFAR100 (20 tasks), 20 tasks of 5 classes each (total of 100 classes)
    \item TinyImageNet (40 tasks), 40 tasks of 5 classes each  (total of 200 classes)
    \item ImageNette (5 tasks), 5 tasks of 2 classes each (total of 10 classes)
    \item ImageWoof (5 tasks), 5 tasks of 2 classes each (total of 10 classes)
\end{itemize}

\subsection{Training, architecture details and hyperparameter settings for the experiments}
\label{app:trainingdetails}

Most of our experiments are conducted using the \textbf{Avalanche} continual learning framework \citep{avalanchelib}, which provides a unified interface for datasets, benchmarks, and baseline implementations. \cem{specify which code came from where.} \mathieu{Mention which experiments do not come from avalanche} \jacob{CCLIS and CSReL only were not from Avalanche}
We use the \texttt{SimpleMLP} architecture and customized \texttt{SimpleCNN} architectures from the Avalanche toolkit for the MNIST, FMNIST, and EMNIST experiments. For CIFAR10, CIFAR100 and TinyImageNet, we use a vision transformer (ViT) \citep{dosovitskiy2020image}, a ResNet50 and a ResNet18 \citep{he2016deep}. The first two models are pretrained with Dino \citep{caron2021emerging}, frozen and followed by a trainable MLP head. ResNet18 was pretrained on ImageNet \citep{deng2009imagenet}, frozen and followed by a trainable MLP head. In the Task-Incremental setting, we follow the pretrained models with a linear layer and a MLP head for each task. All models are trained with a NVIDIA GeForce RTX 4090 GPU using the SGD optimizer.

For all experiments, please refer to \cref{tab:hyperparameter_mnist_mlp,tab:hyperparameter_mnist_cnn,tab:hyperparameter_ci,tab:hyperparameter_ti,tab:hyperparameter_di} for the hyperparameter configuration. We denote the number of samples added to the compression set at each iteration with $k$. We denote the weight value used to handle the class-imbalance problem with $\omega$ (Both $k$ and $\omega$ apply only for the \ouralgo{} (our) algorithm). We also provide the number of epochs, the training batch size and the learning rate for each experiment. For $\ouralgo{}$, we have chosen the aforementioned hyperparameters by optimizing the performance on the validation set, and for the baseline methods, we have used the hyperparameter values reported in their official implementations. We use $\gamma =-\ln(0.5)$ as stopping criterion for mP2L, as for the cross-entropy loss, this is equivalent to achieving zero errors on the complement set~$S^{\notbfi}$.

In experiments using a Dino backbone (both ResNet50 and ViT backbones), we upsampled the images to 64x64, since the backbones were pretrained on 224x224 images. We use feature extractors pre-trained on ImageNet using Dino \citep{caron2021emerging}, which are available on \url{https://github.com/facebookresearch/dino}. For experiments with a ResNet18, we use a backbone pretrained on ImageNet. For all experiments with pretrained backbones, we freeze the backbone and follow it by a linear head. In Task-Incremental learning, we follow the backbone with a linear layer and a multi-head classification layer, that is a layer that uses a different linear head for each task. We observed better performance from the pretrained feature extractor when the input image size is closer to the original training image size of Dino.

\begin{table}[t]
\caption{Hyperparameter setup for MNIST, FMNIST and EMNIST using a MLP architecture.}
\label{tab:hyperparameter_mnist_mlp}
\centering\small
\begin{tabular}{|l|c|c|c|c|c|}
\hline
\textbf{Method} & ${k}$ & ${\omega}$ & \textbf{Epochs} & \textbf{Batch size} & \textbf{Learning rate} \\
\hline
CoP2L & $8$ & $15.0$ & $10$ & $256$ & $0.001$ \\
\hline
Replay & $-$ & $-$ & $20$ & $128$ & $0.01$ \\
\hline
GDumb & $-$ & $-$ & $20$ & $128$ & $0.01$ \\
\hline
\end{tabular}
\end{table}

\vspace{-2mm}

\begin{table}[t]
\caption{Hyperparameter setup for MNIST, FMNIST and EMNIST using a CNN architecture.}
\label{tab:hyperparameter_mnist_cnn}
\centering\small
\begin{tabular}{|l|c|c|c|c|c|}
\hline
\textbf{Method} & ${k}$ & ${\omega}$ & \textbf{Epochs} & \textbf{Batch size} & \textbf{Learning rate} \\
\hline
CoP2L (MNIST \& FMNIST) & $48$ & $25.0$ & $25$ & $256$ & $0.0009$ \\
\hline
CoP2L (EMNIST) & $105$ & $20.0$ & $15$ & $130$ & $0.0006$ \\
\hline
Replay & $-$ & $-$ & $20$ & $128$ & $0.01$ \\
\hline
GDumb & $-$ & $-$ & $20$ & $128$ & $0.01$ \\
\hline
\end{tabular}
\end{table}

\begin{table}[t]
\caption{Hyperparameter setup for Class-Incremental setup.}
\label{tab:hyperparameter_ci}
\centering
\small{
\begin{tabular}{|l|c|c|c|c|c|}
\hline
\textbf{Method} & ${k}$ & ${\omega}$ & \textbf{Epochs} & \textbf{Batch size} & \textbf{Learning rate} \\
\hline
CoP2L (ViT) & $4$ & $25.0$ & $2$ & $256$ & $0.001$ \\
\hline
Finetuning (ViT) &  $-$ & $-$ & $20$ & $128$ & $0.01$ \\
\hline
DER (ViT) & $-$ & $-$ & $20$ & $128$ & $0.01$ \\
\hline
Replay (ViT) & $-$ & $-$ & $20$ & $128$ & $0.01$ \\
\hline
iCaRL (ViT) & $-$ & $-$ & $10$ & $128$ & $0.01$ \\
\hline
GDumb (ViT) & $-$ & $-$ & $20$ & $128$ & $0.01$ \\
\hline
CCLIS (ViT) & $-$ & $-$ & $50$ & $512$ & $1.0$ \\
\hline
CoP2L (RN50) & $16$ & $25.0$ & $2$ & $256$ & $0.001$ \\
\hline
Finetuning (RN50) &  $-$ & $-$ & $20$ & $128$ & $0.01$ \\
\hline
DER (RN50) &  $-$ & $-$ & $20$ & $128$ & $0.01$ \\
\hline
Replay (RN50) & $-$ & $-$ & $20$ & $128$ & $0.01$ \\
\hline
iCaRL (RN50) & $-$ & $-$ & $30$ & $128$ & $0.01$ \\
\hline
GDumb (RN50) &  $-$ & $-$ & $20$ & $128$ & $0.01$ \\
\hline
CCLIS (RN50) &  $-$ & $-$ & $50$ & $512$ & $1.0$ \\
\hline
CSReL (RN50) &  $-$ & $-$ & $100$ & $256$ & $0.001$ \\
\hline
\end{tabular}
}
\end{table}

\begin{table}[t]
\caption{Hyperparameter setup for Task-Incremental setup.}
\label{tab:hyperparameter_ti}
\centering
{\small
\begin{tabular}{|l|c|c|c|c|c|}
\hline
\textbf{Method} & ${k}$ & ${\omega}$ & \textbf{Epochs} & \textbf{Batch size} & \textbf{Learning rate} \\
\hline
CoP2L (ViT) & $4$ & $1.0$ & $2$ & $256$ & $0.001$ \\
\hline
Finetuning (ViT) & $-$ & $-$ & $10$ & $128$ & $0.01$ \\
\hline
DER (ViT) & $-$ & $-$ & $10$ & $128$ & $0.01$ \\
\hline
Replay (ViT) & $-$ & $-$ & $10$ & $128$ & $0.01$ \\
\hline
LaMAML (ViT) & $-$ & $-$ & $10$ & $10$ & $0.01$ \\
\hline
LwF (ViT) & $-$ & $-$ & $10$ & $200$ & $0.001$ \\
\hline
CCLIS (ViT) & $-$ & $-$ & $50$ & $512$ & $1.0$ \\
\hline
CoP2L (RN50) & $16$ & $1.0$ & $2$ & $256$ & $0.001$ \\
\hline
Finetuning (RN50) & $-$ & $-$ & $10$ & $128$ & $0.01$ \\
\hline
DER (RN50) & $-$ & $-$ & $10$ & $128$ & $0.01$ \\
\hline
Replay (RN50) & $-$ & $-$ & $10$ & $128$ & $0.01$ \\
\hline
LaMAML (RN50) & $-$ & $-$ & $10$ & $10$ & $0.01$ \\
\hline
LwF (RN50) & $-$ & $-$ & $10$ & $200$ & $0.001$ \\
\hline
CCLIS (RN50) & $-$ & $-$ & $50$ & $512$ & $1.0$ \\
\hline
CSReL (RN50) & $-$ & $-$ & $100$ & $256$ & $0.001$ \\
\hline
\end{tabular}
}
\end{table}

\clearpage

\begin{table}[t]
\caption{Hyperparameter setup for Domain-Incremental setup (MLP and CNN).}
\label{tab:hyperparameter_di}
\centering\small
\begin{tabular}{|l|c|c|c|c|c|}
\hline
\textbf{Method} & ${k}$ & ${\omega}$ & \textbf{Epochs} & \textbf{Batch size} & \textbf{Learning rate} \\
\hline
CoP2L & $16$ & $15.0$ & $1$ & $256$ & $0.001$ \\
\hline
Replay & $-$ & $-$ & $20$ & $128$ & $0.01$ \\
\hline
LFL & $-$ & $-$ & $3$ & $128$ & $0.01$ \\
\hline
EWC & $-$ & $-$ & $10$ & $256$ & $0.001$ \\
\hline
SI & $-$ & $-$ & $10$ & $256$ & $0.001$ \\
\hline
\end{tabular}
\end{table}

\subsection{All results with standard deviation and training time}
\label{app:allresults}

In this section, we provide the average accuracy values obtained on CIFAR10, CIFAR100, TinyImageNet, ImageNette and ImageWoof datasets with the standard deviation estimates. Note that we use different random number generator seeds to account for variability in random number generations (e.g., model parameter initializations) -- but train / test splits are kept fixed with respect to seeds across different experiments.

We present new CI results on MNIST, FMNIST and EMNIST, with additional Domain-Incremental (DI) experiments on PermutedMNIST and RotatedMNIST. We evaluate against Replay, Less Forgetful Learning (LFL) \citep{jung2018less}, Elastic Weight Consolidation (EWC) \citep{kirkpatrick2017overcoming} and Synaptic Intelligence (SI) \citep{zenke2017continual}.

Note that, for CIFAR10, CIFAR100, TinyImageNet, ImageWoof, ImageNette, PermutedMNIST and RotatedMNIST experiments  we also report the total time it takes to complete each experiment (for all tasks).

\subsubsection{Class-Incremental Learning problems}

\begin{table}[h]
\caption{Class-incremental learning on CIFAR10 with 5 tasks.}
\centering\small
\begin{tabular}{p{4cm}ccc}
 \toprule
 \textbf{Methods} & \textbf{Average Accuracy (\%)} & \textbf{Average Forgetting (\%)} & \textbf{Total time (s)} \\ \midrule
 CoP2L (ViT) & $94.45 \pm 0.14$ & $2.10 \pm 0.28$ & $118.07 \pm 2.59$ \\
 Finetuning (ViT) & $27.55 \pm 1.04$ & $89.74 \pm 1.26$ & $880.59 \pm 8.33$ \\
 Replay (ViT) & $94.00 \pm 0.19$ & $6.11 \pm 0.25$ & $486.25 \pm 43.07$ \\
 DER (ViT) & $95.03\pm0.06$ & $3.79\pm0.15$ & $436.07\pm1.21$ \\
 iCaRL (ViT) & $88.60 \pm 10.04$ & $3.13 \pm 4.20$ & $884.59 \pm 1.39$ \\
 GDumb (ViT) & $94.16 \pm 0.24$ & $3.16 \pm 0.39$ & $47.72 \pm 0.12$ \\
 CCLIS (ViT) & $93.15 \pm 0.34$ & $4.44 \pm 0.67$ & $2527.15 \pm 44.56$ \\
 \hline
 CoP2L (ResNet50) & $80.98 \pm 0.17$ & $5.84 \pm 0.37$ & $527.95 \pm 8.43$ \\
 Finetuning (ResNet50) & $19.63 \pm 0.02$ & $97.22 \pm 0.15$ & $819.80 \pm 2.81$ \\
 Replay (ResNet50) & $81.38 \pm 0.39$ & $17.72 \pm 0.51$ & $287.38 \pm 25.06$ \\
 DER (ResNet50) & $82.35\pm0.22$ & $15.31\pm0.33$ & $285.65\pm0.69$ \\
 iCaRL (ResNet50) & $74.49 \pm 0.22$ & $1.79 \pm 0.43$ & $579.17 \pm 2.39$ \\
 GDumb (ResNet50) & $80.84 \pm 0.14$ & $9.35 \pm 0.30$ & $31.98 \pm 0.12$  \\
 CCLIS (ResNet50) & $76.89\pm 0.30$ & $12.32\pm0.33$ & $2654.32 \pm 52.34$ \\
 CSReL (ResNet50) & $34.50\pm0.29$ & $26.44\pm0.11$ & $17284.41\pm157.43$ \\
 \hline
\end{tabular}
\end{table}

\begin{table}[h]
\caption{Class-incremental learning on CIFAR100 with 10 tasks.}
\centering\small
\begin{tabular}{p{4cm}ccc}
 \toprule
 \textbf{Methods} & \textbf{Average Accuracy (\%)} & \textbf{Average Forgetting (\%)} & \textbf{Total time (s)} \\
 \midrule
 CoP2L (ViT) & $73.49 \pm 0.27$ & $17.06 \pm 0.29$ & $522.18 \pm 3.90$ \\
 Finetuning (ViT) & $12.87 \pm 0.15$ & $93.27 \pm 0.20$ & $901.37 \pm 1.73$ \\
 Replay (ViT) & $68.71 \pm 0.25$ & $30.02 \pm 0.31$ & $453.04 \pm 11.93$ \\
 DER (ViT) & $72.29\pm0.53$ & $25.11\pm0.53$ & $489.57\pm7.50$ \\
 iCaRL (ViT) & $65.54 \pm 2.23$ & $13.64 \pm 0.64$ & $2491.81 \pm 2889.82$ \\
 GDumb (ViT) & $72.51 \pm 0.26$ & $11.53 \pm 0.65$ & $95.43 \pm 0.34$ \\
 CCLIS (ViT) & $67.95 \pm 0.65$ & $23.68 \pm 0.19$ & $3023.03 \pm 87.54$ \\
 \midrule
 CoP2L (ResNet50) & $49.53 \pm 0.24$ & $22.31 \pm 0.20$ & $1724.15 \pm 23.50$  \\
 Finetuning (ResNet50) & $9.36 \pm 0.05$ & $89.93 \pm 0.02$ & $823.61 \pm 2.81$ \\
 Replay (ResNet50) & $49.41 \pm 0.38$ & $42.12 \pm 0.54$ & $295.80 \pm 15.88$ \\
 DER (ResNet50) & $49.37\pm0.16$ & $44.01\pm0.18$ & $307.86\pm13.37$ \\
 iCaRL (ResNet50) & $46.06 \pm 0.40$ & $8.50 \pm 0.28$ & $676.76 \pm 56.25$ \\
 GDumb (ResNet50) & $43.37 \pm 0.24$ & $20.23 \pm 0.42$ & $63.58 \pm 0.36$ \\
 CCLIS (ResNet50) & $44.22\pm0.28$ & $23.23\pm0.10$ & $3122.45 \pm 78.87$ \\
 CSReL (ResNet50) & $38.75\pm0.25 $ & $42.12\pm0.48$ & $ 9154.71\pm278.30$\\
 \bottomrule
\end{tabular}
\end{table}

\begin{table}[t]
\caption{Class-incremental learning on CIFAR100 with 20 tasks.}
\centering \small
\begin{tabular}{p{4cm}ccc}
 \toprule
 \textbf{Methods} & \textbf{Average Accuracy (\%)} & \textbf{Average Forgetting (\%)} & \textbf{Total time (s)} \\
 \midrule
 CoP2L (ViT) & $70.56 \pm 0.99$ & $21.15 \pm 2.10$ & $820.66 \pm 176.27$ \\
 Finetuning (ViT) & $8.49 \pm 0.59$ & $94.64 \pm 0.54$ & $902.33 \pm 8.58$ \\
 Replay (ViT) & $69.21 \pm 0.26$ & $29.52 \pm 0.25$ & $457.46 \pm 2.07$ \\
 DER (ViT) & $77.03\pm0.34$ & $19.27\pm0.48$ & $1096.62\pm19.46$ \\
 iCaRL (ViT) & $62.36 \pm 1.46$ & $15.88 \pm 0.84$ & $1359.13 \pm 4.58$ \\
 GDumb (ViT) & $72.35 \pm 0.30$ & $11.77 \pm 0.40$ & $190.80 \pm 0.42$ \\
 CCLIS (ViT) & $66.81 \pm 0.34$ & $24.42 \pm 0.29$ & $3947.38 \pm 89.23$ \\
 \midrule
 CoP2L (ResNet50) & $46.74 \pm 1.63$ & $22.54 \pm 0.92$ & $1771.12 \pm 757.50$ \\
 Finetuning (ResNet50) & $4.88 \pm 0.03$ & $92.81 \pm 0.31$ & $1247.98 \pm 506.91$ \\
 Replay (ResNet50) & $49.44 \pm 0.30$ & $44.96 \pm 0.36$ & $293.25 \pm 1.77$ \\
 DER (ResNet50) & $57.53\pm0.05$ & $33.37\pm0.05$ & $510.52\pm99.48$ \\
 iCaRL (ResNet50) & $46.47 \pm 0.33$ & $6.74 \pm 0.31$ & $858.11 \pm 120.41$ \\
 GDumb (ResNet50) & $43.60 \pm 0.38$ & $19.93 \pm 0.56$ & $126.74 \pm 0.46$ \\
CCLIS (ResNet50) & $39.10\pm0.04$ & $25.78\pm0.03$ & $4021.21 \pm 59.34$ \\
 CSReL (ResNet50) & $32.57\pm0.02$ & $40.23\pm0.27$ & $13404.29\pm320.31$\\
 \bottomrule
\end{tabular}
\end{table}

\begin{table}[t]
\caption{Class-incremental learning on TinyImageNet with 40 tasks.}
\centering\small
\begin{tabular}{p{4cm}ccc}
 \toprule
 \textbf{Methods} & \textbf{Average Accuracy (\%)} & \textbf{Average Forgetting (\%)} & \textbf{Total time (s)} \\
 \midrule
 CoP2L (ViT) & $51.59 \pm 0.24$ & $36.32 \pm 0.29$ & $1451.93 \pm 4.71$ \\
 Finetuning (ViT) & $2.83 \pm 0.17$ & $95.36 \pm 0.31$ & $2304.54 \pm 37.18$ \\
 Replay (ViT) & $47.49 \pm 0.47$ & $48.39 \pm 0.49$ & $1128.97 \pm 16.95$\\
 DER (ViT) & $53.92\pm0.36$ & $41.77\pm0.45$ & $1280.76\pm131.53$ \\
 iCaRL (ViT) & $21.06 \pm 1.11$ & $34.49 \pm 9.37$ & $3919.23 \pm 2170.08$ \\
 GDumb (ViT) & $53.93 \pm 0.34$ & $15.11 \pm 0.19$ & $471.23 \pm 17.85$ \\
 CCLIS (ViT) & $48.55 \pm 0.46$ & $32.26 \pm 0.43$ & $8000.69 \pm 355.45$ \\
 \midrule
 CoP2L (ResNet50) & $33.30 \pm 0.32$ & $35.93 \pm 0.27$ & $3671.69 \pm 29.19$ \\
 Finetuning (ResNet50) & $1.80 \pm 0.36$ & $78.66 \pm 1.04$ & $2167.76 \pm 161.91$ \\
 Replay (ResNet50) & $28.38 \pm 0.42$ & $63.36 \pm 0.44$ & $882.12 \pm 180.18$\\
 DER (ResNet50) & $28.77\pm0.34$ & $64.17\pm0.34$ & $818.11\pm10.28$ \\
 iCaRL (ResNet50) & $33.89 \pm 0.47$ & $10.42 \pm 0.44$ & $1905.26 \pm 295.18$ \\
 GDumb (ResNet50) & $24.29 \pm 0.58$ & $21.29 \pm 0.56$ & $332.22 \pm 0.98$ \\
CCLIS (ResNet50) & $18.32\pm0.11$ & $40.40\pm0.02$ & $8349.31 \pm 378.39$ \\
 CSReL (ResNet50) & $12.84\pm0.22$ & $73.69\pm0.40$ & $39644.46\pm237.23$ \\
 \bottomrule
\end{tabular}
\end{table}

\begin{table}[t]
\caption{Class-incremental learning on MNIST with 5 tasks.}
\centering\small
\begin{tabular}{p{4cm}ccc}
 \toprule
 \textbf{Methods} & \textbf{Average Accuracy (\%)} & \textbf{Average Forgetting (\%)} \\
 \midrule
 CoP2L (MLP) & $95.12 \pm 0.18$ & $3.35 \pm 0.24$ \\
 Replay (MLP) & $94.48 \pm 0.28$ & $6.08 \pm 0.32$ \\
 GDumb (MLP) & $91.70 \pm 0.31$ & $3.84 \pm 0.29$ \\
 \midrule
 CoP2L (CNN) & $96.61 \pm 0.07$ & $2.16 \pm 0.28$ \\
 Replay (CNN) & $96.93 \pm 0.29$ & $3.31 \pm 0.38$ \\
 GDumb (CNN) & $94.41 \pm 0.13$ & $2.97 \pm 0.46$ \\
 \bottomrule
\end{tabular}
\end{table}

\begin{table}[t]
\caption{Class-incremental learning on Fashion-MNIST with 5 tasks.}
\centering\small
\begin{tabular}{p{4cm}cc}
 \toprule
 \textbf{Methods} & \textbf{Average Accuracy (\%)} & \textbf{Average Forgetting (\%)} \\
 \midrule
 CoP2L (MLP) & $84.23 \pm 0.27$ & $12.10 \pm 0.66$ \\
 Replay (MLP) & $83.99 \pm 0.52$ & $15.92 \pm 0.42$ \\
 GDumb (MLP) & $81.76 \pm 0.36$ & $10.44 \pm 0.50$ \\
 \midrule
 CoP2L (CNN) & $87.35 \pm 0.15$ & $11.12 \pm 0.36$ \\
 Replay (CNN) & $87.59 \pm 0.11$ & $13.32 \pm 0.29$ \\
 GDumb (CNN) & $84.98 \pm 0.21$ & $8.78 \pm 0.41$ \\
 \bottomrule
\end{tabular}
\end{table}

\begin{table}[t]
\caption{Class-incremental learning on EMNIST with 13 tasks.}
\centering\small
\begin{tabular}{p{4cm}cc}
 \toprule
 \textbf{Methods} & \textbf{Average Accuracy (\%)} & \textbf{Average Forgetting (\%)} \\
 \midrule
 CoP2L (MLP) & $78.03 \pm 0.17$ & $16.67 \pm 0.31$ \\
 Replay (MLP) & $77.22 \pm 0.44$ & $22.08 \pm 0.50$ \\
 GDumb (MLP) & $62.64 \pm 0.20$ & $15.96 \pm 0.58$  \\
 \midrule
 CoP2L (CNN) & $87.90 \pm 0.41$ & $8.38 \pm 0.54$ \\
 Replay (CNN) & $87.70 \pm 0.50$ & $12.29 \pm 0.49$ \\
 GDumb (CNN) & $85.05 \pm 0.27$ & $7.50 \pm 0.61$ \\
 \bottomrule
\end{tabular}
\end{table}

\clearpage
\pagebreak

\subsubsection{Task-Incremental Learning problems}

\begin{table}[h]
\caption{Task-incremental learning on CIFAR10 with 5 tasks.}
\centering\small
\begin{tabular}{p{4cm}ccc}
 \toprule
 \textbf{Methods} & \textbf{Average Accuracy (\%)} & \textbf{Average Forgetting (\%)} & \textbf{Total time (s)} \\
 \midrule
 CoP2L (ViT)  & $99.04 \pm 0.05$ & $-0.00 \pm 0.10$ & $64.51 \pm 0.68$\\
 Finetuning (ViT) & $99.15 \pm 0.06$ & $0.23 \pm 0.04$ & $466.97 \pm 11.76$ \\
 Replay (ViT) & $99.29 \pm 0.04$ & $-0.01 \pm 0.03$ & $217.83 \pm 0.76$ \\
 DER (ViT) & $99.15\pm0.08$ & $0.21\pm0.05$ & $133.94\pm1.02$ \\
 LaMAML (ViT) & $99.25 \pm 0.06$ & $0.05 \pm 0.07$ & $344.65 \pm 1.46$ \\
 LwF (ViT) & $97.89 \pm 0.41$ & $1.69 \pm 0.48$ & $165.41 \pm 0.75$ \\
 CCLIS (ViT) & $98.81 \pm 0.89$ & $0.48 \pm 0.12$ & $2534.15 \pm 53.56$ \\
 \midrule
 CoP2L (ResNet50) & $96.22 \pm 0.18$ & $0.56 \pm 0.28$ & $172.51 \pm 4.42$ \\
 Finetuning (ResNet50) & $96.60 \pm 0.08$ & $1.00 \pm 0.11$ & $460.78 \pm 3.38$ \\
 Replay (ResNet50) & $96.88 \pm 0.06$ & $0.42 \pm 0.13$ & $144.57 \pm 0.51$ \\
 DER (ResNet50) & $96.57\pm0.10$ & $1.09\pm0.14$ & $90.56\pm0.36$ \\
 LaMAML (ResNet50) & $96.65 \pm 0.21$ & $0.73 \pm 0.21$ & $310.51 \pm 1.64$\\
 LwF (ResNet50) & $95.80 \pm 0.15$ & $0.33 \pm 0.20$ & $97.09 \pm 0.21$\\
CCLIS (ResNet50) & $95.45\pm0.02$ & $0.79\pm0.13$ & $2637.16 \pm 61.23$ \\
 CSReL (ResNet50) & $65.46\pm0.56$ & $13.66\pm0.19$ & $12354.00\pm160.45$\\
 \bottomrule
\end{tabular}
\end{table}

\begin{table}[h]
\caption{Task-incremental learning on CIFAR100 with 10 tasks.}
\centering\small
\begin{tabular}{p{4cm}ccc}
 \toprule
 \textbf{Methods} & \textbf{Average Accuracy (\%)} & \textbf{Average Forgetting (\%)} & \textbf{Total time (s)} \\
 \midrule
 CoP2L (ViT) & $95.03 \pm 0.08$ & $0.90 \pm 0.11$ & $290.95 \pm 30.98$ \\
 Finetuning (ViT) & $95.17 \pm 0.11$ & $1.84 \pm 0.13$ & $478.64 \pm 6.32$ \\
 Replay (ViT) & $96.43 \pm 0.14$ & $0.34 \pm 0.17$ & $235.81 \pm 1.10$ \\
 DER (ViT) & $96.22\pm0.12$ & $0.71\pm0.13$ & $137.12\pm1.96$ \\
 LaMAML (ViT) & $95.58 \pm 0.16$ & $0.58 \pm 0.12$ & $397.78 \pm 3.40$ \\
 LwF (ViT) & $95.09 \pm 0.10$ & $0.98 \pm 0.10$ & $173.57 \pm 0.63$ \\
 CCLIS (ViT) & $90.99 \pm 0.45$ & $4.49 \pm 0.69$ & $2532.34 \pm 49.57$ \\
 \midrule
 CoP2L (ResNet50) & $86.78 \pm 0.16$ & $2.01 \pm 0.17$ & $435.10 \pm 7.74$ \\
 Finetuning (ResNet50) & $88.23 \pm 0.09$ & $1.78 \pm 0.08$ & $430.99 \pm 2.11$ \\
 Replay (ResNet50) & $87.61 \pm 0.05$ & $0.86 \pm 0.16$ & $157.05 \pm 0.14$ \\
 DER (ResNet50) & $88.29\pm0.20$ & $1.72\pm0.28$ & $91.77\pm1.07$ \\
 LaMAML (ResNet50) & $83.57 \pm 0.43$ & $6.18 \pm 0.53$ & $357.24 \pm 4.58$ \\
 LwF (ResNet50) & $79.39 \pm 0.13$ & $0.47 \pm 0.11$ & $101.27 \pm 0.48$ \\
 CCLIS (ResNet50) & $78.55\pm0.09$ & $6.76\pm0.21$ & $2659.12 \pm 43.02$  \\
 CSReL (ResNet50) & $69.14\pm0.14$ & $15.17\pm0.41$ & $5670.32\pm256.32$\\
 \bottomrule
\end{tabular}
\end{table}

\begin{table}[h]
\caption{Task-incremental learning with CIFAR100 with 20 tasks.}
\centering\small
\begin{tabular}{p{4cm}ccc}
 \toprule
 \textbf{Methods} & \textbf{Average Accuracy (\%)} & \textbf{Average Forgetting (\%)} & \textbf{Total time (s)} \\
 \midrule
 CoP2L (ViT) & $97.18 \pm 0.13$ & $0.56 \pm 0.18$ & $226.65 \pm 21.43$ \\
 Finetuning (ViT) & $97.57 \pm 0.23$ & $0.86 \pm 0.22$ & $473.71 \pm 6.70$ \\
 Replay (ViT) & $98.09 \pm 0.09$ & $0.25 \pm 0.12$ & $250.78 \pm 1.45$ \\
 DER (ViT) & $98.08\pm0.10$ & $0.31\pm0.17$ & $137.98\pm0.24$ \\
 LaMAML (ViT) & $97.84 \pm 0.10$ & $0.26 \pm 0.14$ & $448.43 \pm 2.81$ \\
 LwF (ViT) & $95.74 \pm 0.19$ & $2.15 \pm 0.21$ & $180.36 \pm 0.98$ \\
 CCLIS (ViT) & $94.86 \pm 0.54$ & $2.88 \pm 0.45$ & $3951.13 \pm 68.94$ \\
 \midrule
 CoP2L (ResNet50) & $90.89 \pm 0.22$ & $2.27 \pm 0.28$ & $278.67 \pm 6.20$ \\
 Finetuning (ResNet50) & $93.25 \pm 0.04$ & $1.27 \pm 0.06$ & $436.93 \pm 2.10$ \\
 Replay (ResNet50) & $92.89 \pm 0.18$ & $0.47 \pm 0.24$ & $167.65 \pm 1.26$ \\
 DER (ResNet50) & $93.43\pm0.02$ & $1.11\pm0.05$ & $94.25\pm0.30$ \\
 LaMAML (ResNet50) & $89.89 \pm 0.75$ & $4.12 \pm 0.78$ & $407.84 \pm 7.38$ \\
 LwF (ResNet50) & $87.73 \pm 0.18$ & $0.33 \pm 0.23$ & $105.46 \pm 0.49$ \\
 CCLIS (ResNet50) & $83.51\pm0.14$ & $6.60\pm0.11$ & $4079.41 \pm 61.34$ \\
 CSReL (ResNet50) & $63.52\pm0.32$ & $14.49\pm0.22$ & $8364.45\pm289.56$\\
 \bottomrule
\end{tabular}
\end{table}

\begin{table}[h]
\caption{Task-incremental learning on TinyImageNet with 40 tasks.}
\centering\small
\begin{tabular}{p{4cm}ccc}
 \toprule
 \textbf{Methods} & \textbf{Average Accuracy (\%)} & \textbf{Average Forgetting (\%)} & \textbf{Total time (s)} \\
 \midrule
 CoP2L (ViT) & $93.81 \pm 0.26$ & $1.50 \pm 0.28$ & $1039.14 \pm 146.21$ \\
 Finetuning (ViT) & $93.62 \pm 0.20$ & $2.61 \pm 0.21$ & $983.62 \pm 9.49$ \\
 Replay (ViT) & $95.66 \pm 0.08$ & $0.50 \pm 0.09$ & $632.18 \pm 2.97$ \\
 DER (ViT) & $95.08\pm0.07$ & $1.13\pm0.15$ & $316.37\pm0.81$ \\
 LaMAML (ViT) & $94.82 \pm 0.23$ & $0.79 \pm 0.19$ & $1527.24 \pm 891.43$ \\
 LwF (ViT) & $85.18 \pm 0.76$ & $9.72 \pm 0.71$ & $430.63 \pm 2.14$ \\
 CCLIS (ViT) & $86.82 \pm 0.23$ & $6.14 \pm 0.56$ & $8012.79 \pm 321.46$ \\
 \midrule
 CoP2L (ResNet50) & $88.57 \pm 0.17$ & $2.46 \pm 0.19$ & $890.94 \pm 3.44$ \\
 Finetuning (ResNet50) & $90.62 \pm 0.35$ & $1.38 \pm 0.40$ & $905.9 \pm 10.10$ \\
 Replay (ResNet50) & $90.03 \pm 0.26$ & $1.26 \pm 0.27$ & $514.42 \pm 102.82$ \\
 DER (ResNet50) & $90.66\pm0.06$ & $1.12\pm0.15$ & $237.61\pm1.12$ \\
 LaMAML (ResNet50) & $87.20 \pm 0.34$ & $4.53 \pm 0.28$ & $1268.50 \pm 551.93$ \\
 LwF (ResNet50) & $85.03 \pm 0.17$ & $0.89 \pm 0.21$ & $284.79 \pm 5.67$ \\
 CCLIS (ResNet50) & $75.65\pm0.05$ & $14.12\pm0.31$ & $8379.89 \pm 381.29$ \\
 CSReL (ResNet50) & $50.22\pm0.21$ & $22.34\pm0.51$ & $21455.97\pm231.76$\\
 \bottomrule
\end{tabular}
\end{table}

\begin{table}[h]
\caption{Task-incremental learning on ImageNette with 5 tasks.}
\centering\small
\begin{tabular}{p{4cm}ccc}
 \toprule
 \textbf{Methods} & \textbf{Average Accuracy (\%)} & \textbf{Average Forgetting (\%)} & \textbf{Total time (s)} \\
 \midrule
 CoP2L (ViT)  & $98.22 \pm 0.19$ & $-0.43 \pm 0.12$ & $118.71 \pm 24.35$ \\
 Finetuning (ViT) & $98.24 \pm 0.22$ & $0.53 \pm 0.17$ & $221.15 \pm 2.16$ \\
 Replay (ViT) & $98.61 \pm 0.03$ & $-0.02 \pm 0.10$ & $356.58 \pm 2.89$ \\
 LaMAML (ViT) & $98.42 \pm 0.02$ & $0.03 \pm 0.00$ & $309.91 \pm 5.42$ \\
 LwF (ViT) & $96.51 \pm 0.67$ & $1.97 \pm 0.79$ & $164.80 \pm 1.90$ \\
 \bottomrule
\end{tabular}
\end{table}

\begin{table}[h]
\caption{Task-incremental learning on ImageWoof with 5 tasks.}
\centering\small
\begin{tabular}{p{4cm}ccc}
 \toprule
 \textbf{Methods} & \textbf{Average Accuracy (\%)} & \textbf{Average Forgetting (\%)} & \textbf{Total time (s)} \\
 \midrule
 CoP2L (ViT) & $93.34 \pm 0.30$ & $-0.65 \pm 0.40$ & $246.77 \pm 13.43$ \\
 Finetuning (ViT) & $93.20 \pm 0.49$ & $2.00 \pm 0.84$ & $201.91 \pm 1.89$ \\
 Replay (ViT) & $94.29 \pm 0.30$ & $0.16 \pm 0.26$ & $316.87 \pm 0.93$ \\
 LaMAML (ViT) & $94.00 \pm 0.21$ & $0.12 \pm 0.38$ & $276.19 \pm 3.45$ \\
 LwF (ViT) & $89.75 \pm 1.88$ & $5.76 \pm 2.61$ & $175.82 \pm 39.30$ \\
 \bottomrule
\end{tabular}
\end{table}

\clearpage
\pagebreak

\subsubsection{Domain-Incremental Learning problems}
\begin{table}[h]
\caption{Domain-incremental learning on Permuted-MNIST with 10 tasks.}
\centering\small
\begin{tabular}{p{4cm}ccc}
 \toprule
 \textbf{Methods} & \textbf{Average Accuracy (\%)} & \textbf{Average Forgetting (\%)} & \textbf{Total time (s)} \\
 \midrule
 CoP2L (MLP) & $91.61 \pm 0.39$ & $6.69 \pm 0.40$ & $1381.06 \pm 1232.99$ \\
 Replay (MLP) & $91.28 \pm 0.18$ & $6.75 \pm 0.21$ & $1128.99 \pm 49.82$\\
 LFL (MLP) & $82.76 \pm 1.23$ & $10.15 \pm 1.36$ & $92.86 \pm 0.24$ \\
 EWC (MLP) & $82.53 \pm 0.82$ & $6.50 \pm 0.88$ & $340.16 \pm 3.81$ \\
 SI (MLP) & $85.85 \pm 0.61$ & $11.69 \pm 0.67$ & $319.95 \pm 1.38$ \\
 \bottomrule
\end{tabular}
\end{table}

\begin{table}[h]
\caption{Domain-incremental learning on Rotated-MNIST on 10 tasks.}
\centering\small
\begin{tabular}{p{4cm}ccc}
 \toprule
 \textbf{Methods} & \textbf{Average Accuracy (\%)} & \textbf{Average Forgetting (\%)} & \textbf{Total time (s)} \\
 \midrule
 CoP2L (MLP) & $89.40 \pm 1.00$ & $8.71 \pm 1.04$ & $4970.86 \pm 1452.34$ \\
 Replay (MLP) & $88.75 \pm 0.67$ & $9.53 \pm 0.72$ & $3894.62 \pm 75.11$ \\
 LFL (MLP) & $53.09 \pm 7.06$ & $44.18 \pm 7.93$ & $387.61 \pm 160.54$ \\
 EWC (MLP) & $55.59 \pm 5.57$ & $36.58 \pm 6.27$ & $1133.19 \pm 18.10$ \\
 SI (MLP) & $59.90 \pm 6.48$ & $41.38 \pm 7.18$ & $1043.86 \pm 7.28$\\
 \bottomrule
\end{tabular}
\end{table}

\clearpage
\subsection{Bound Plots}
\label{app:allboundplots}

\subsubsection{Bound plots for different buffer sizes}
This section presents the computed bound values over task risks on MNIST, Fashion-MNIST and EMNIST datasets (letters) in Class-Incremental settings. \cref{fig:bound} shows results for different replay buffer sizes (1000, 2000, 3000, 4000, 5000) obtained using the MLP architecture. \cref{fig:bound-mnist-cnn} shows the results obtained with the CNN architecture. We report the estimated deviation of the bound values over different seeds using the shaded regions which show the $\pm$1 standard deviation, and also the whiskers (plotted using the seaborn toolkit's boxplot function) which show the spread of the values excluding the outliers. This plotting style for showing the deviation across seeds applies to all bound plots presented in this paper. 

\begin{figure}[ht]
    \begin{center}
        \includegraphics[width=.325\textwidth, trim=0cm 0cm 0cm 0cm,clip]{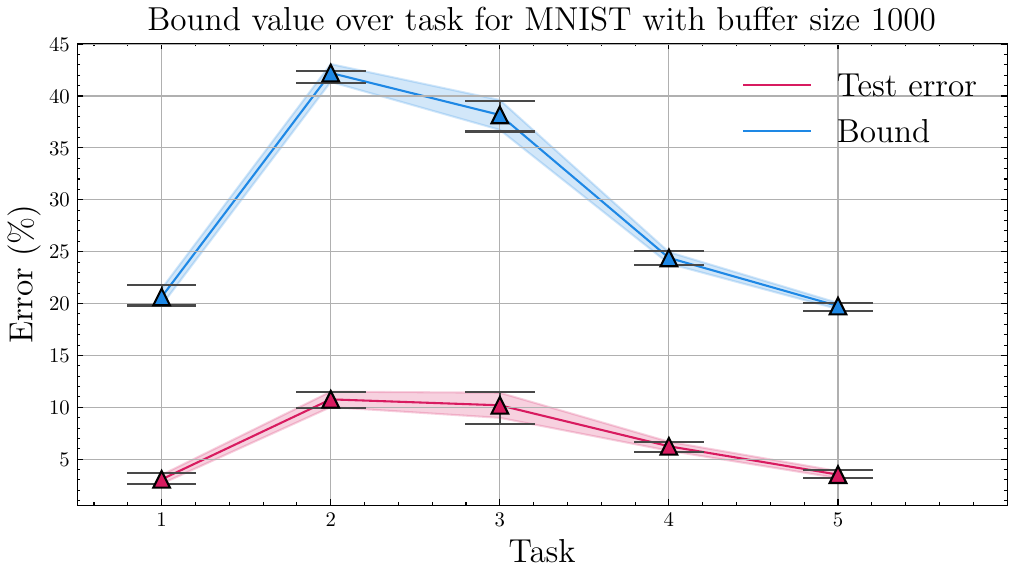}
        \includegraphics[width=.325\textwidth, trim=0cm 0cm 0cm 0cm,clip]{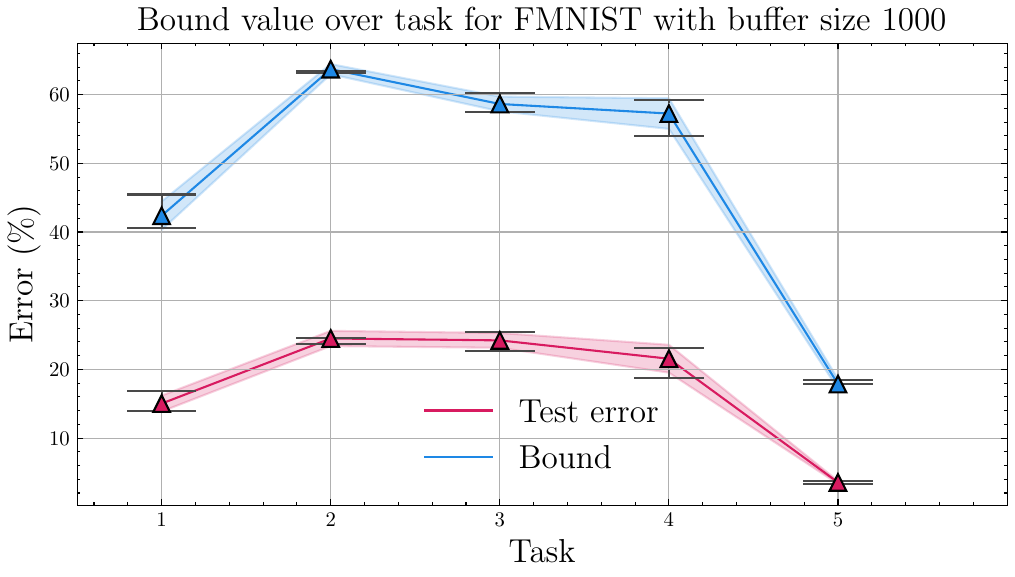}
        \includegraphics[width=.325\textwidth, trim=0cm 0cm 0cm 0cm,clip]{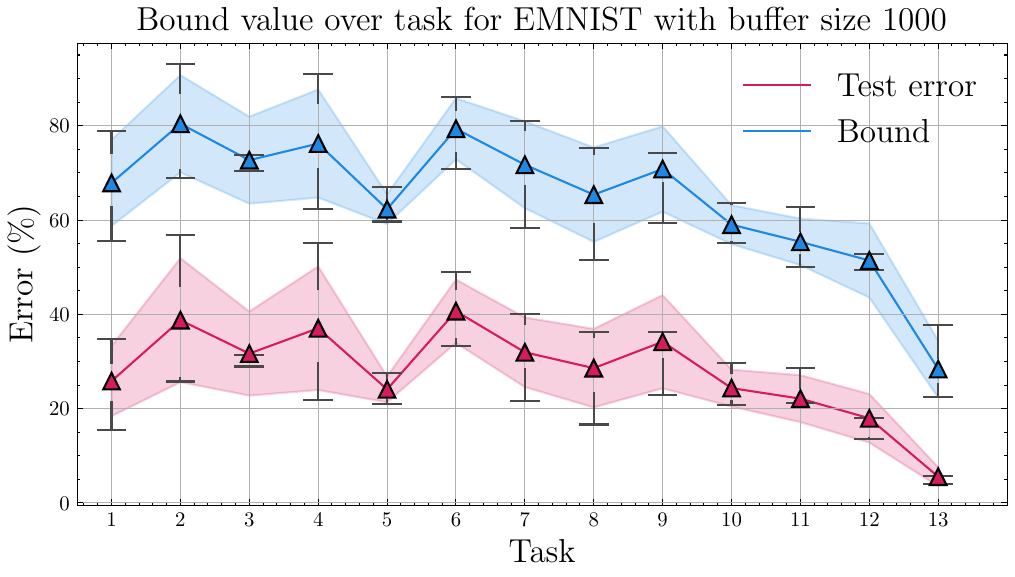}
        \includegraphics[width=.325\textwidth, trim=0cm 0cm 0cm 0cm,clip]{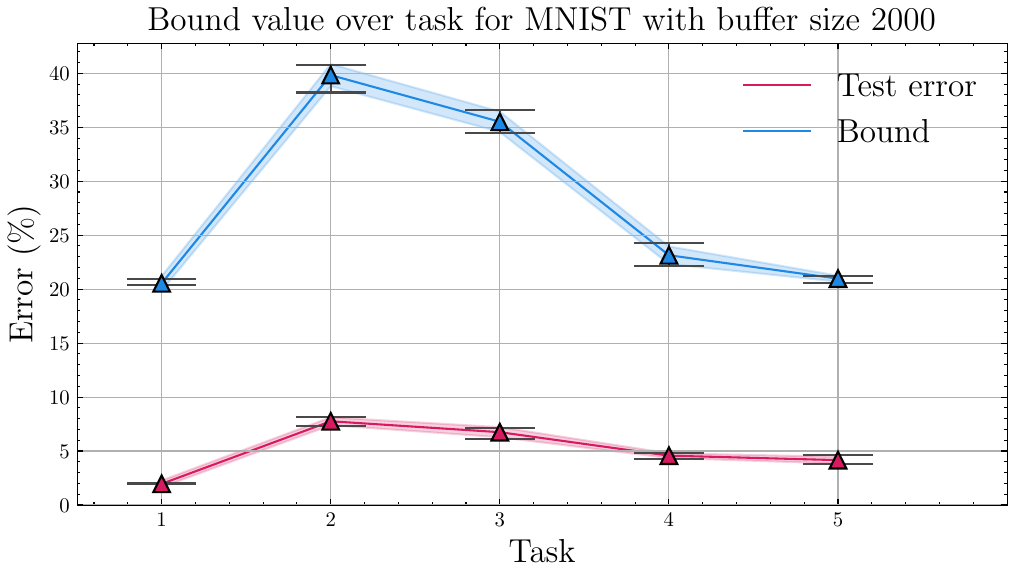}
        \includegraphics[width=.325\textwidth, trim=0cm 0cm 0cm 0cm,clip]{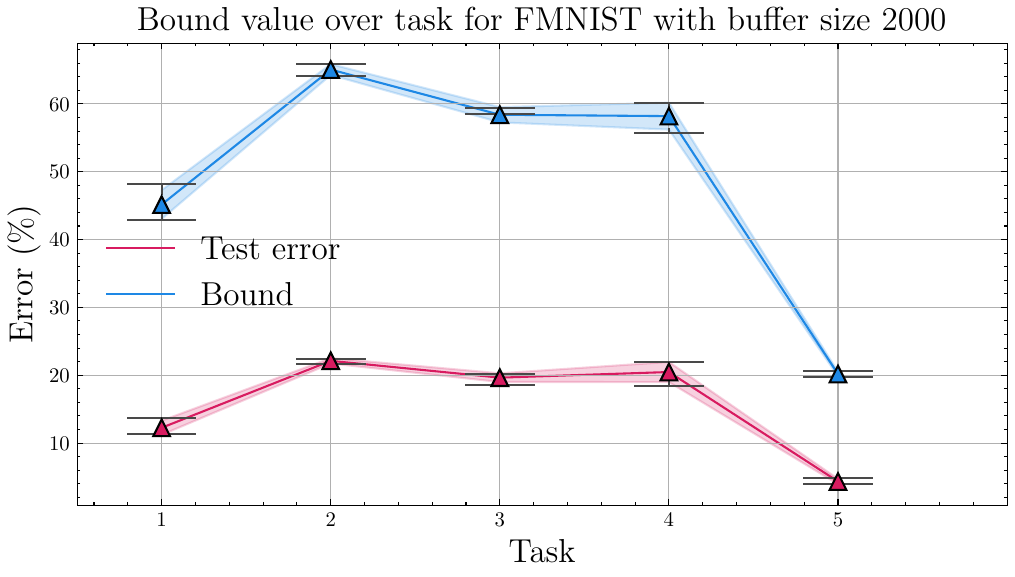}
        \includegraphics[width=.325\textwidth, trim=0cm 0cm 0cm 0cm,clip]{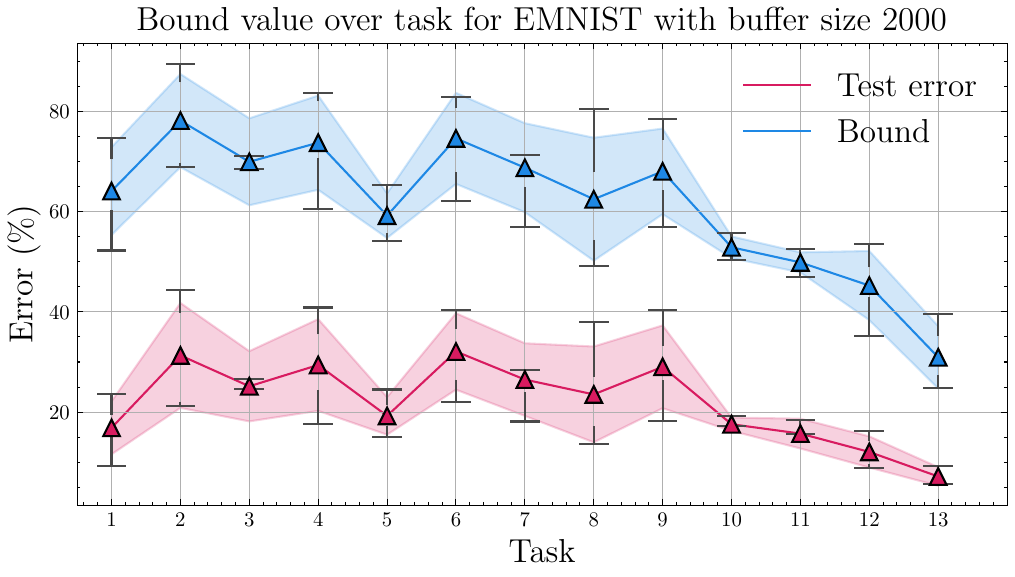}
        \includegraphics[width=.325\textwidth, trim=0cm 0cm 0cm 0cm,clip]{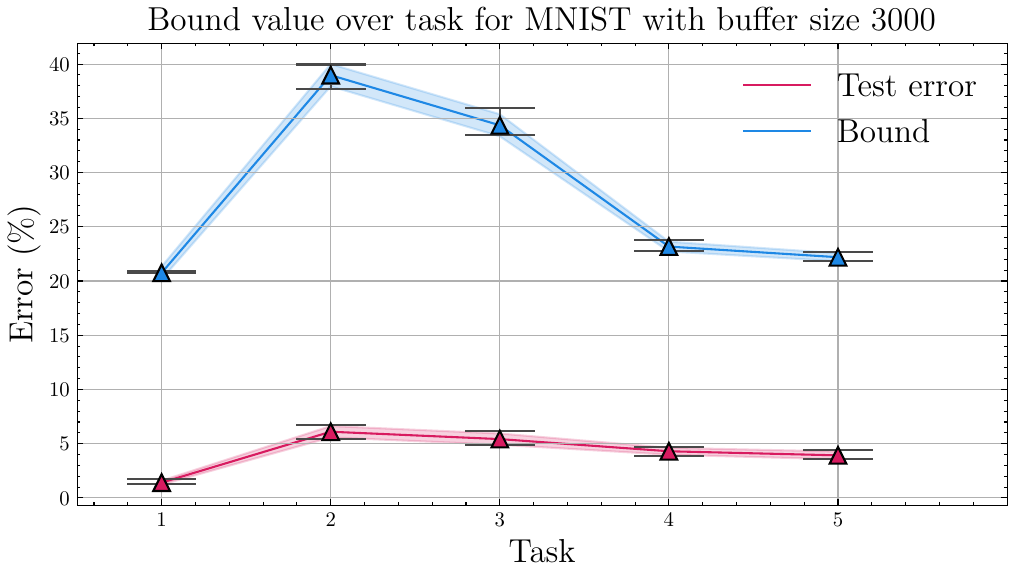}
        \includegraphics[width=.325\textwidth, trim=0cm 0cm 0cm 0cm,clip]{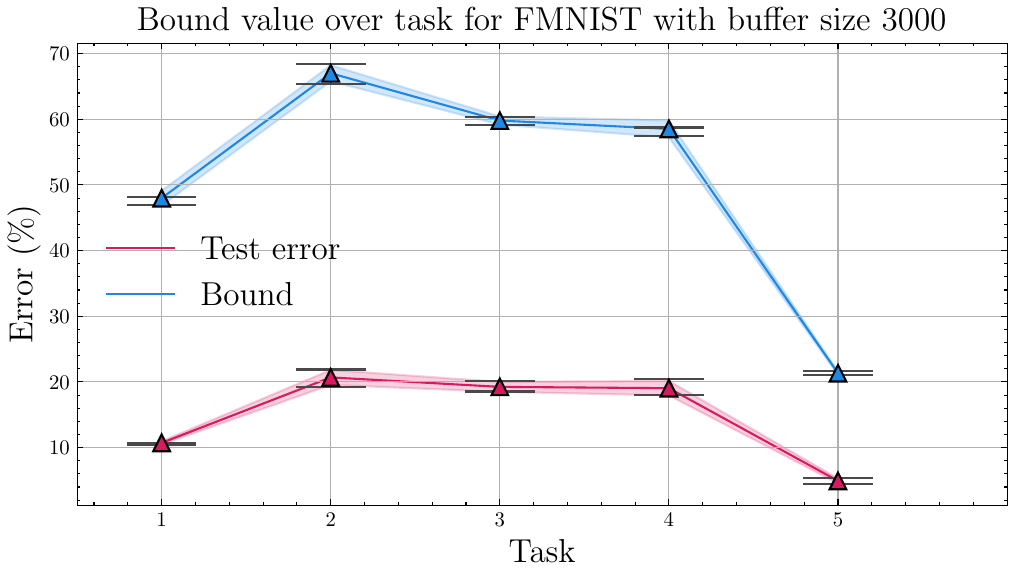}
        \includegraphics[width=.325\textwidth, trim=0cm 0cm 0cm 0cm,clip]{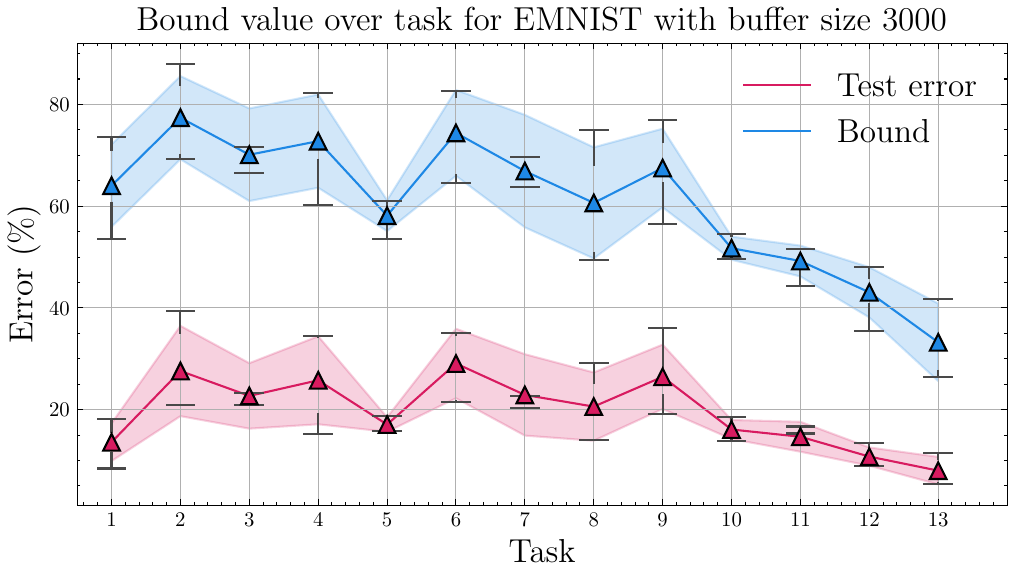}
        \includegraphics[width=.325\textwidth, trim=0cm 0cm 0cm 0cm,clip]{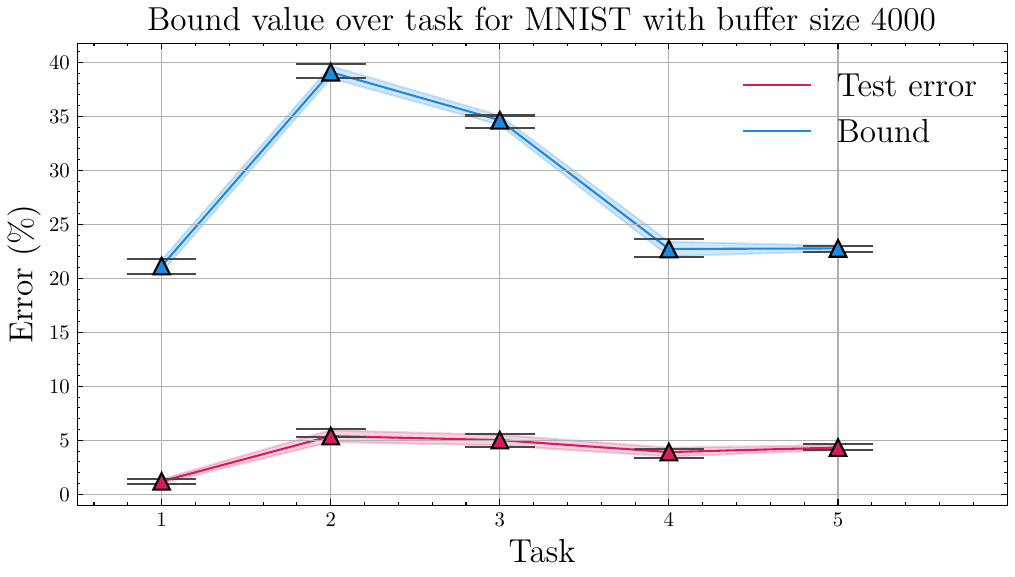}
        \includegraphics[width=.325\textwidth, trim=0cm 0cm 0cm 0cm,clip]{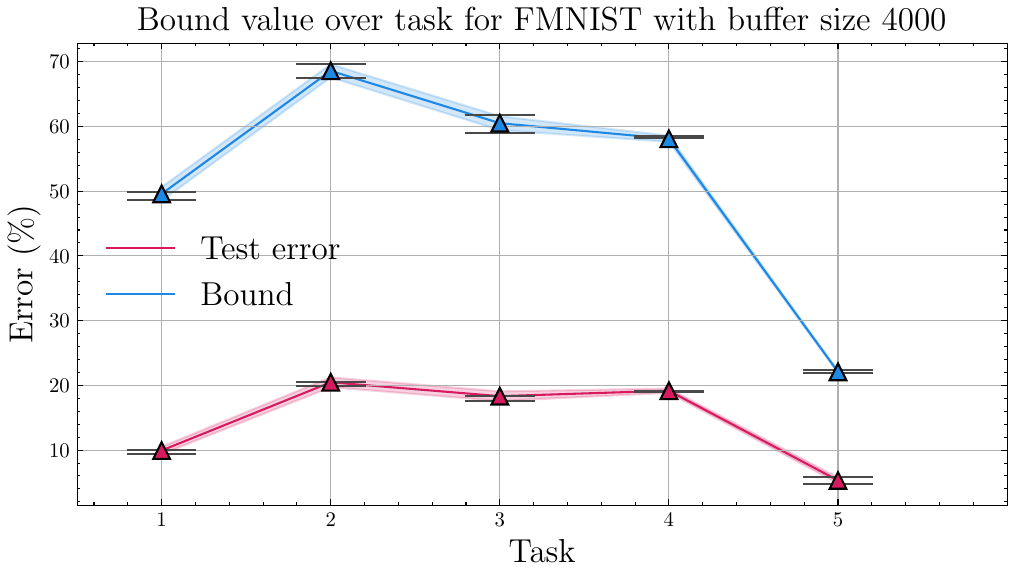}
        \includegraphics[width=.325\textwidth, trim=0cm 0cm 0cm 0cm,clip]{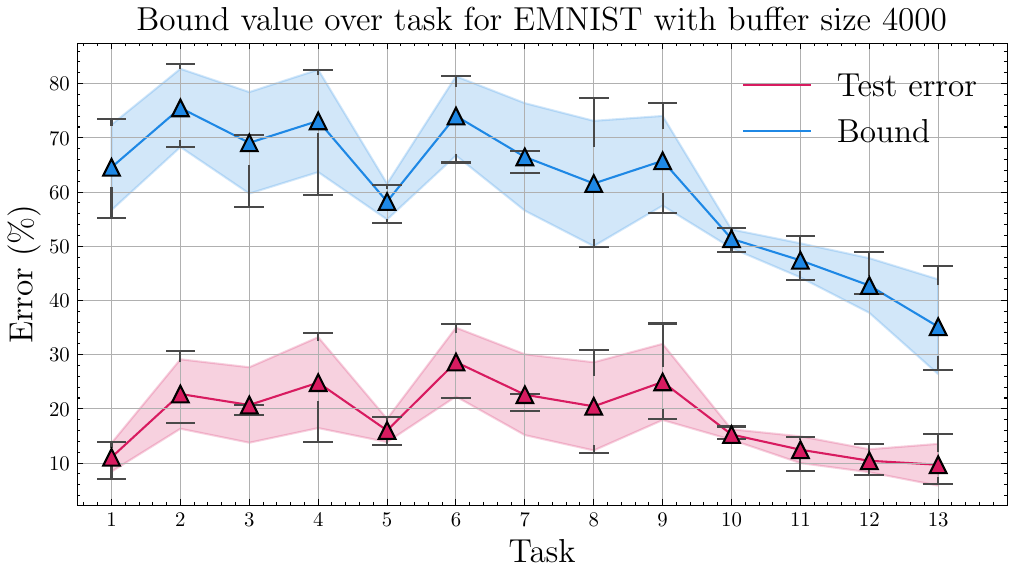}
        \includegraphics[width=.325\textwidth, trim=0cm 0cm 0cm 0cm,clip]{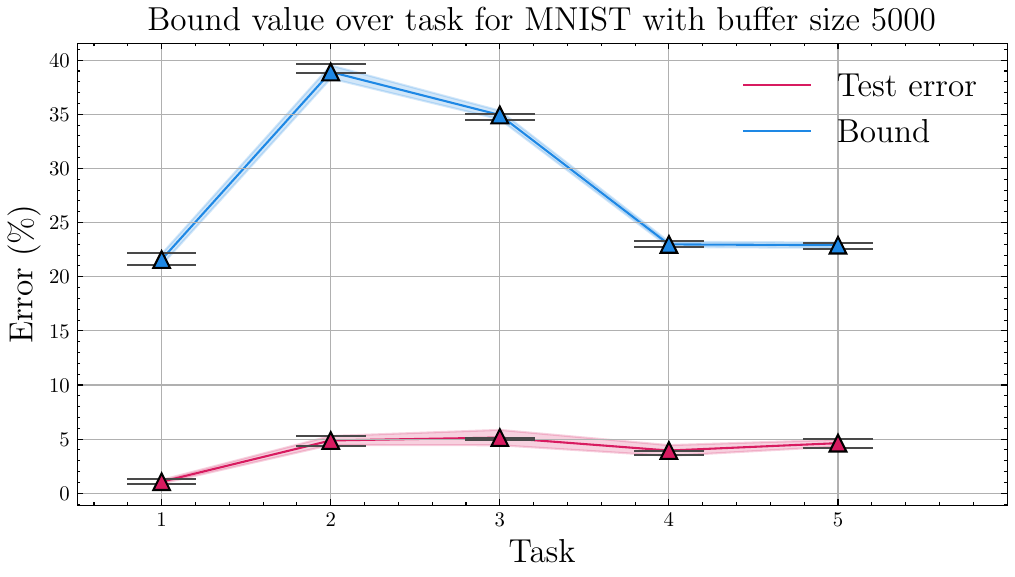}
        \includegraphics[width=.325\textwidth, trim=0cm 0cm 0cm 0cm,clip]{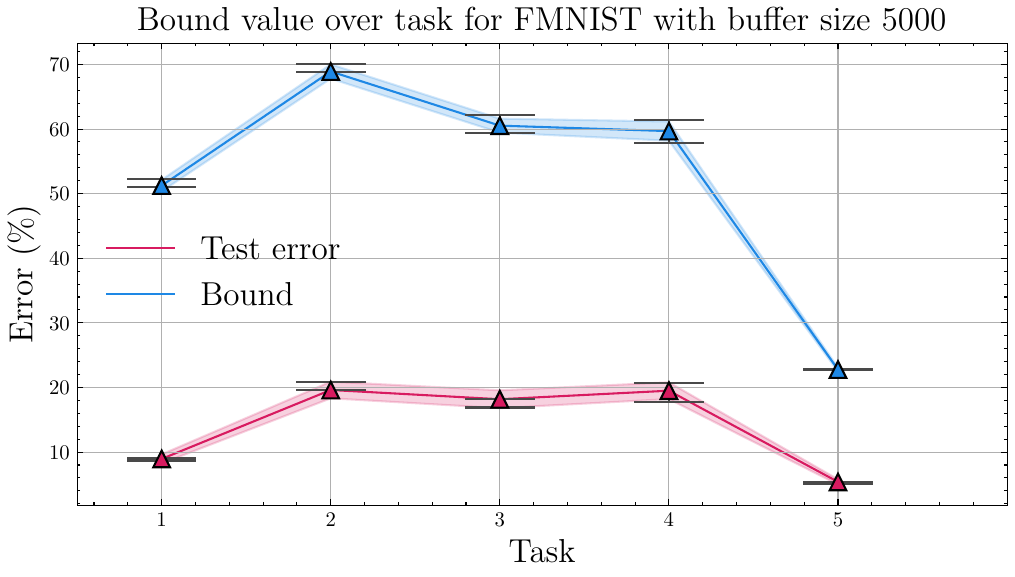}
        \includegraphics[width=.325\textwidth, trim=0cm 0cm 0cm 0cm,clip]{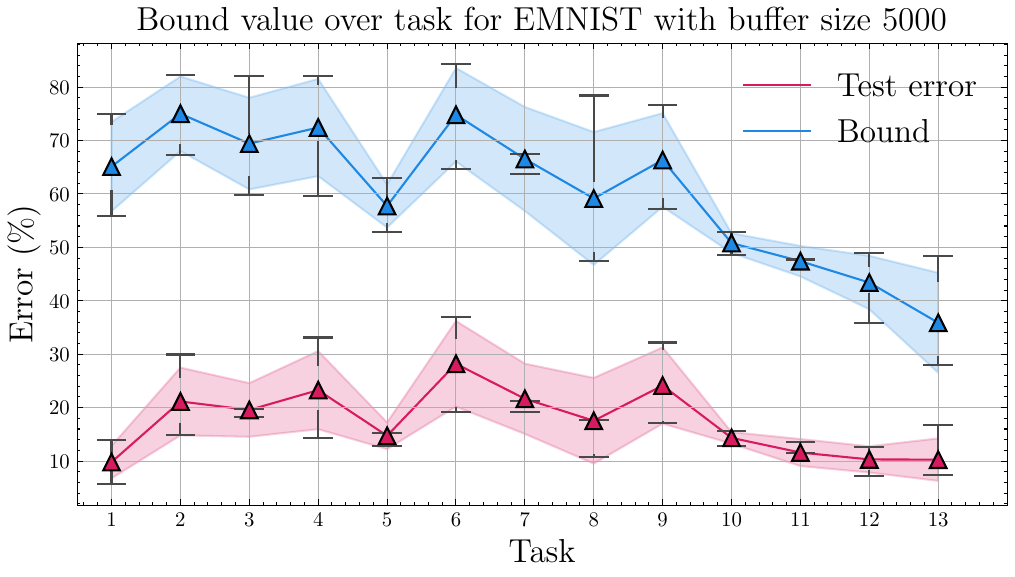}
    \end{center}
    \caption{\ouralgo{} Bounds with respect to tasks on the MNIST, Fashion-MNIST and EMNIST datasets using an MLP architecture}
    \label{fig:bound}
\end{figure}

\begin{figure}[h]
    \begin{center}
        \includegraphics[width=.325\textwidth, trim=0cm 0cm 0cm 0cm,clip]{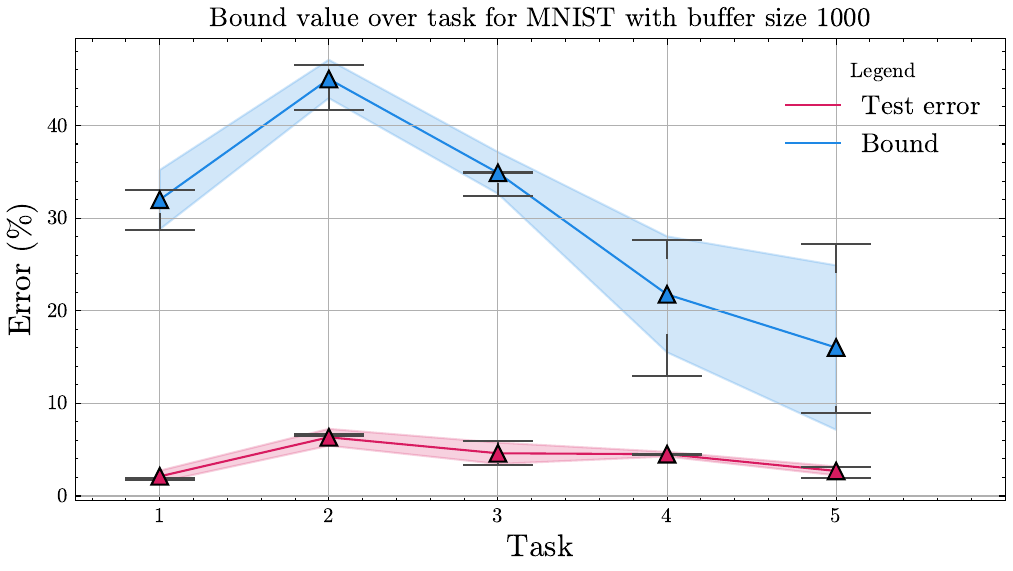}
        \includegraphics[width=.325\textwidth, trim=0cm 0cm 0cm 0cm,clip]{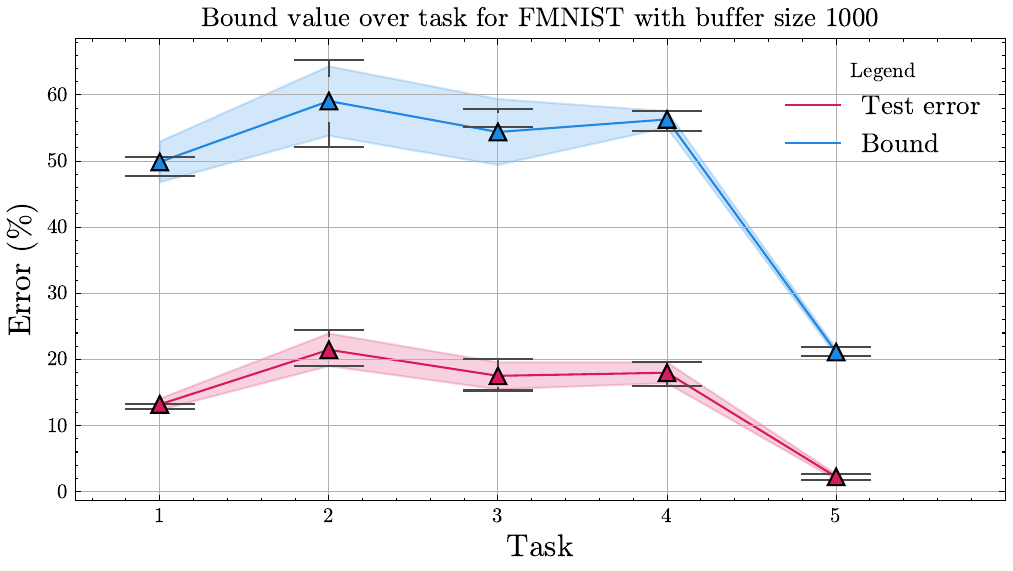}
        \includegraphics[width=.325\textwidth, trim=0cm 0cm 0cm 0cm,clip]{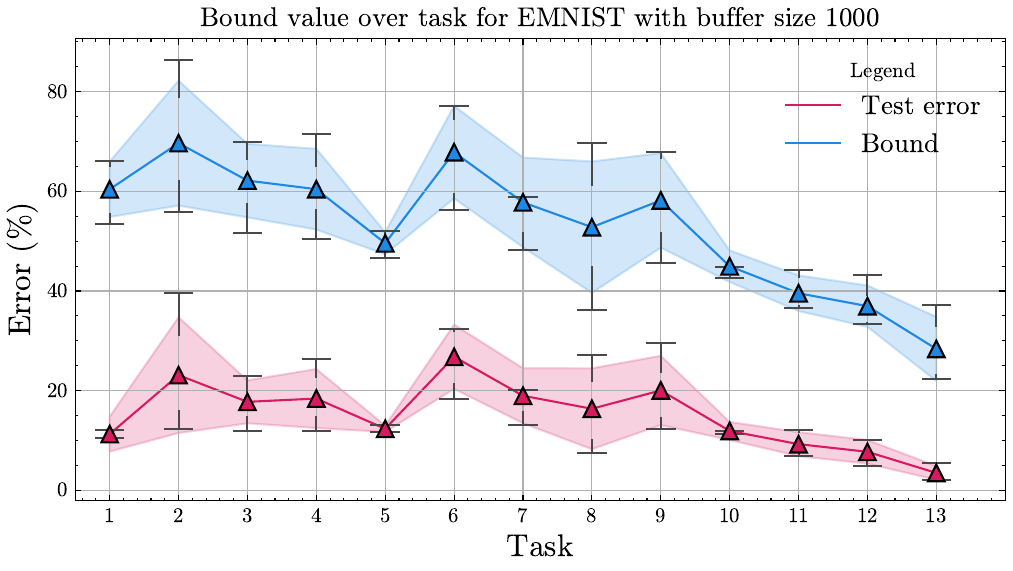}
        \includegraphics[width=.325\textwidth, trim=0cm 0cm 0cm 0cm,clip]{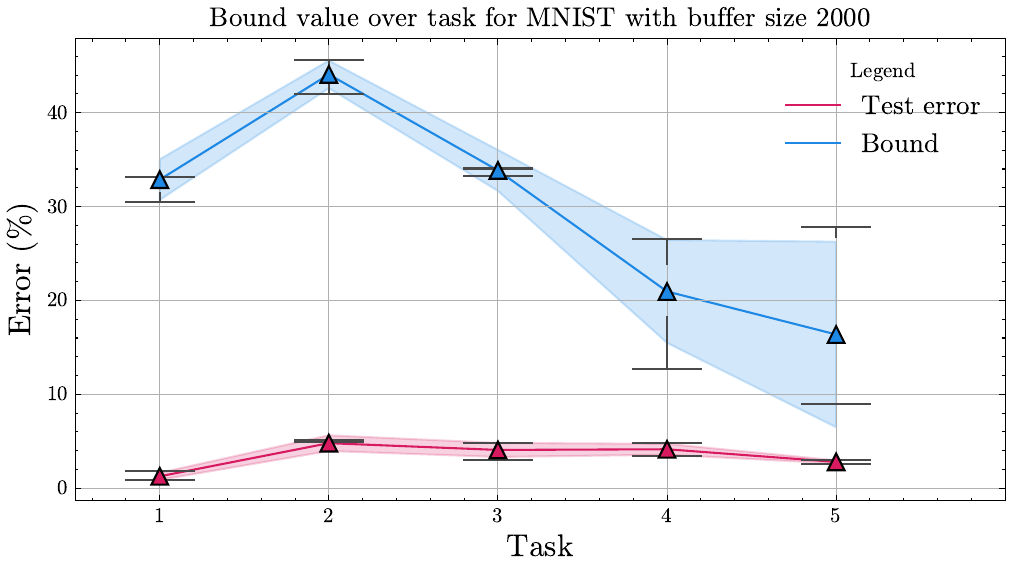}
        \includegraphics[width=.325\textwidth, trim=0cm 0cm 0cm 0cm,clip]{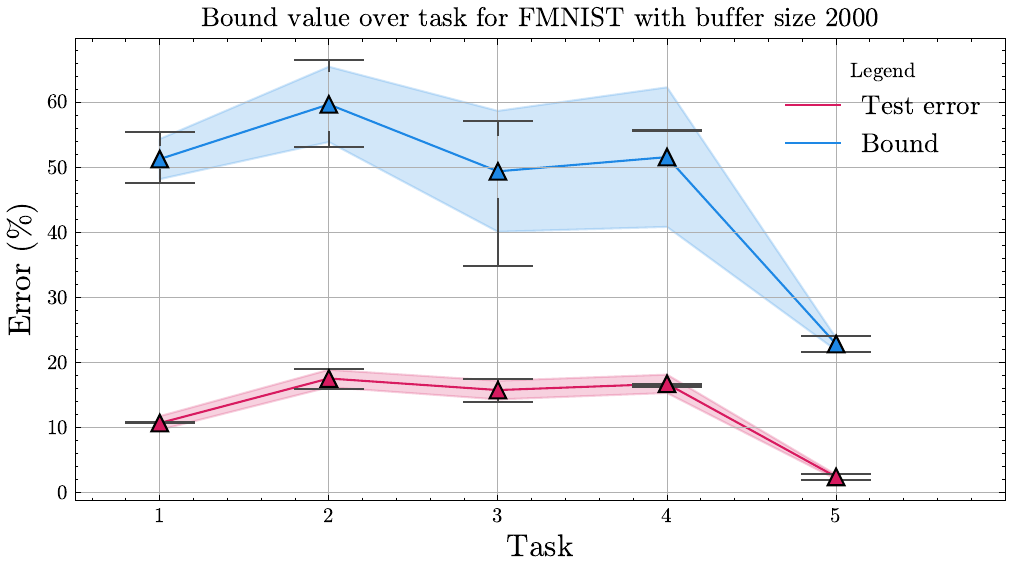}
        \includegraphics[width=.325\textwidth, trim=0cm 0cm 0cm 0cm,clip]{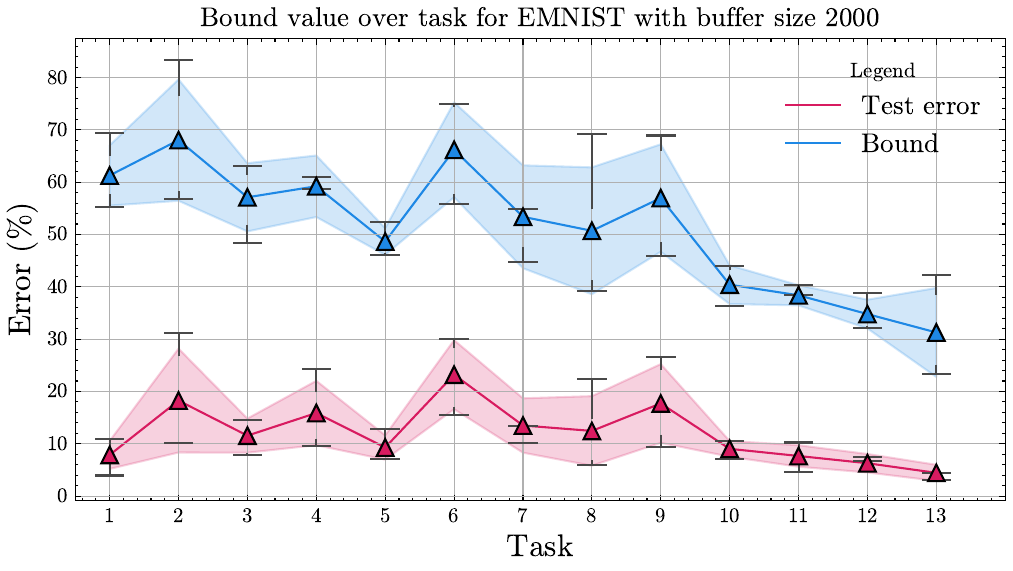}
        \includegraphics[width=.325\textwidth, trim=0cm 0cm 0cm 0cm,clip]{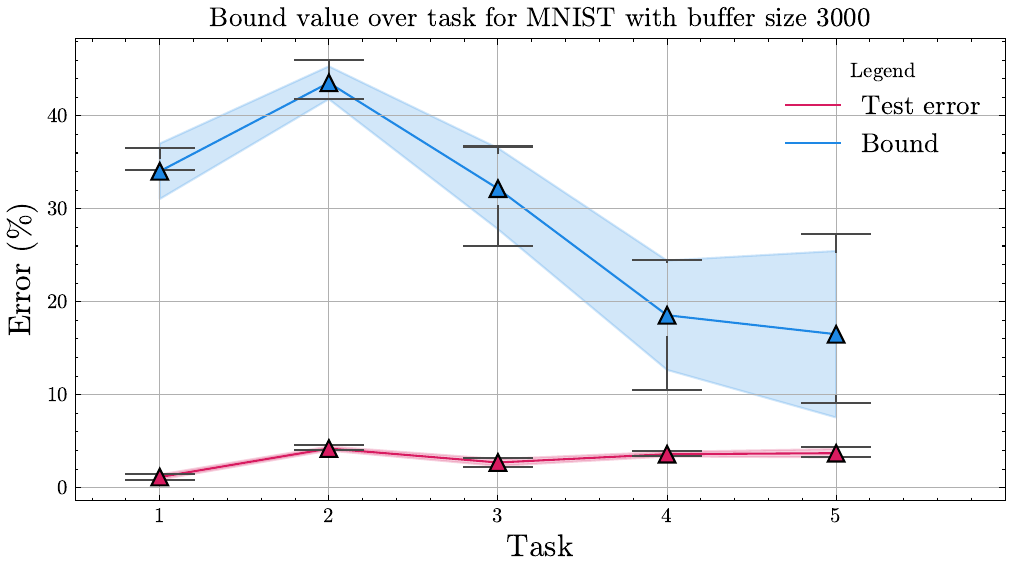}
        \includegraphics[width=.325\textwidth, trim=0cm 0cm 0cm 0cm,clip]{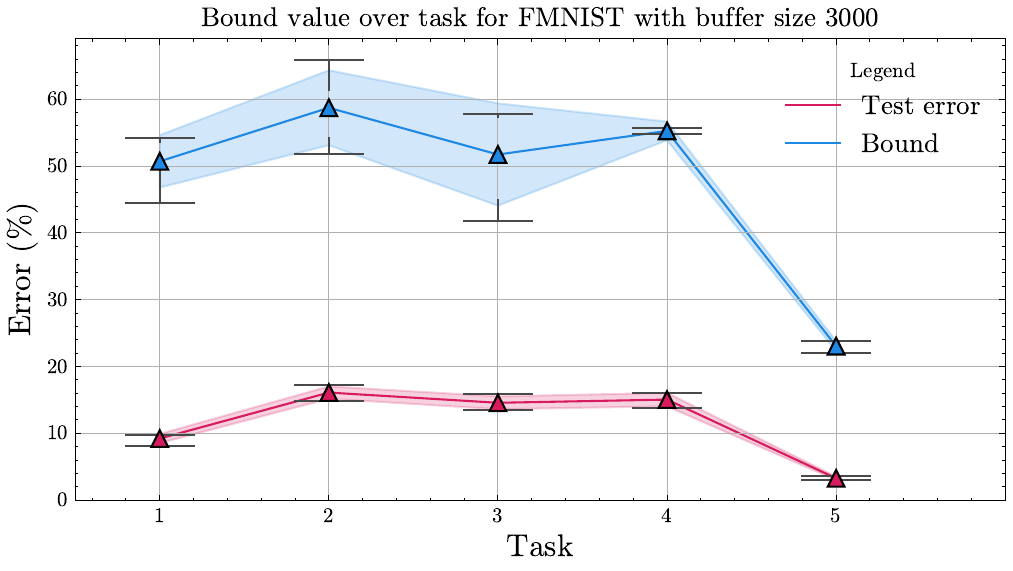}
        \includegraphics[width=.325\textwidth, trim=0cm 0cm 0cm 0cm,clip]{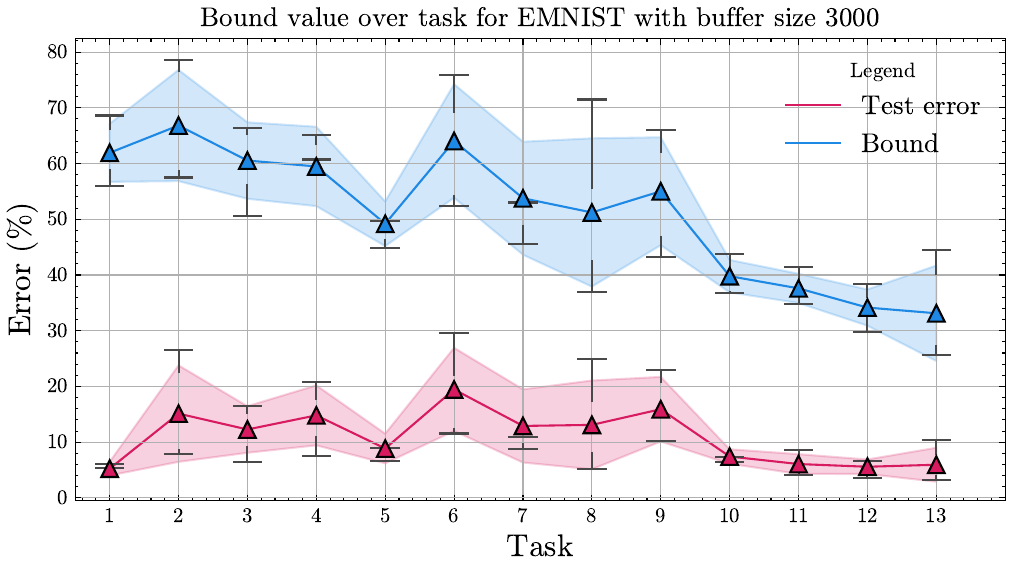}
        \includegraphics[width=.325\textwidth, trim=0cm 0cm 0cm 0cm,clip]{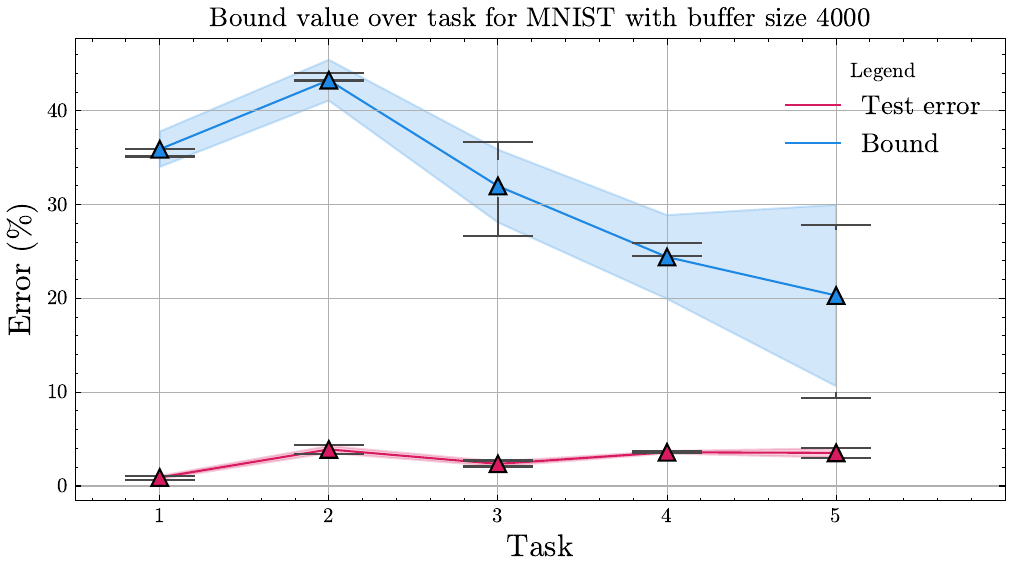}
        \includegraphics[width=.325\textwidth, trim=0cm 0cm 0cm 0cm,clip]{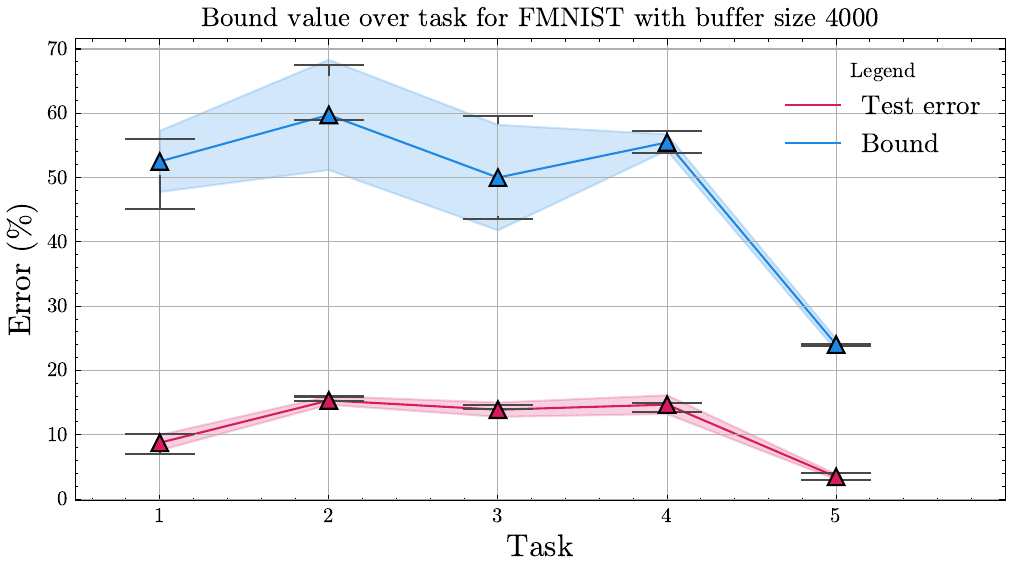}
        \includegraphics[width=.325\textwidth, trim=0cm 0cm 0cm 0cm,clip]{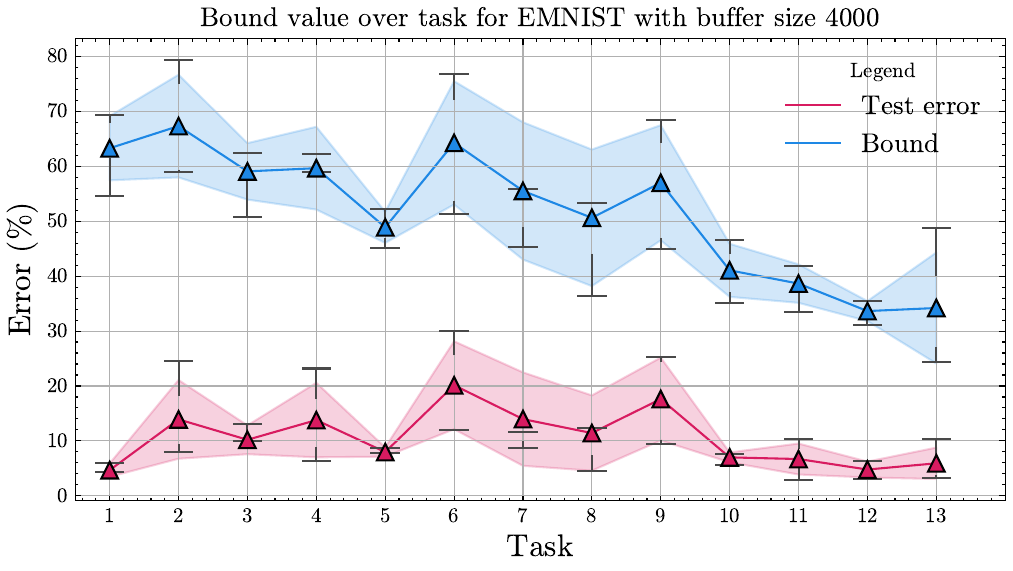}
        \includegraphics[width=.325\textwidth, trim=0cm 0cm 0cm 0cm,clip]{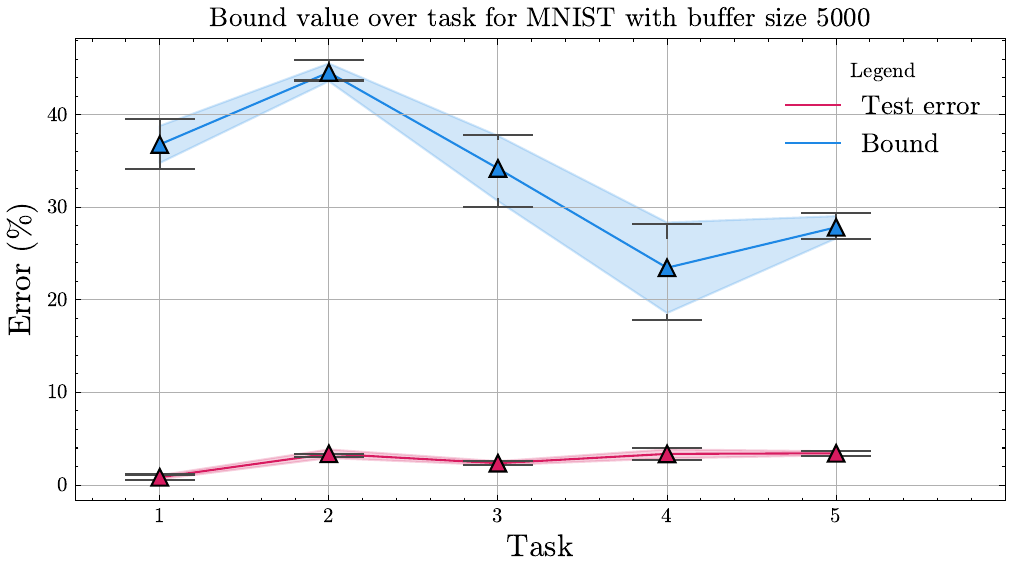}
        \includegraphics[width=.325\textwidth, trim=0cm 0cm 0cm 0cm,clip]{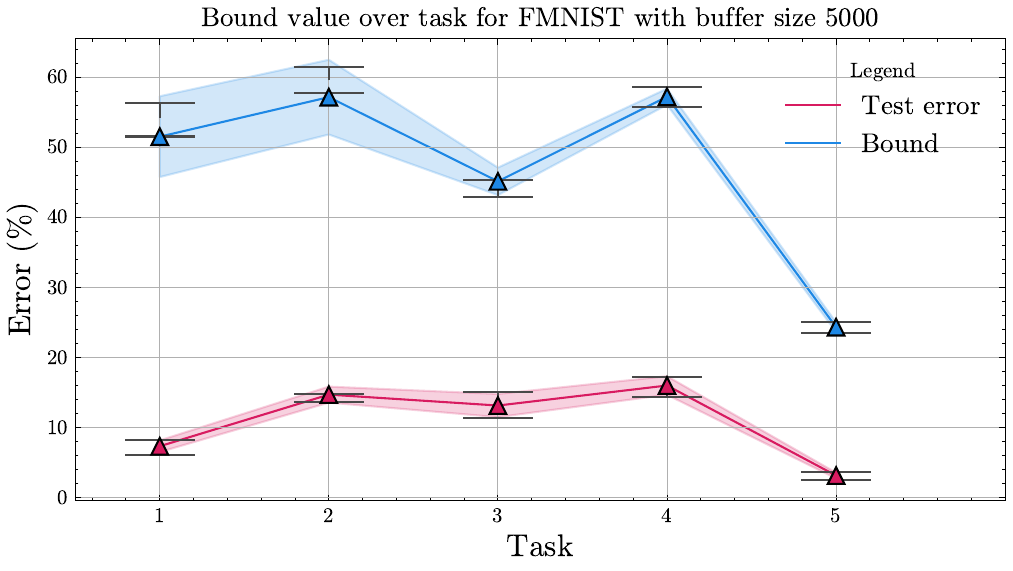}
        \includegraphics[width=.325\textwidth, trim=0cm 0cm 0cm 0cm,clip]{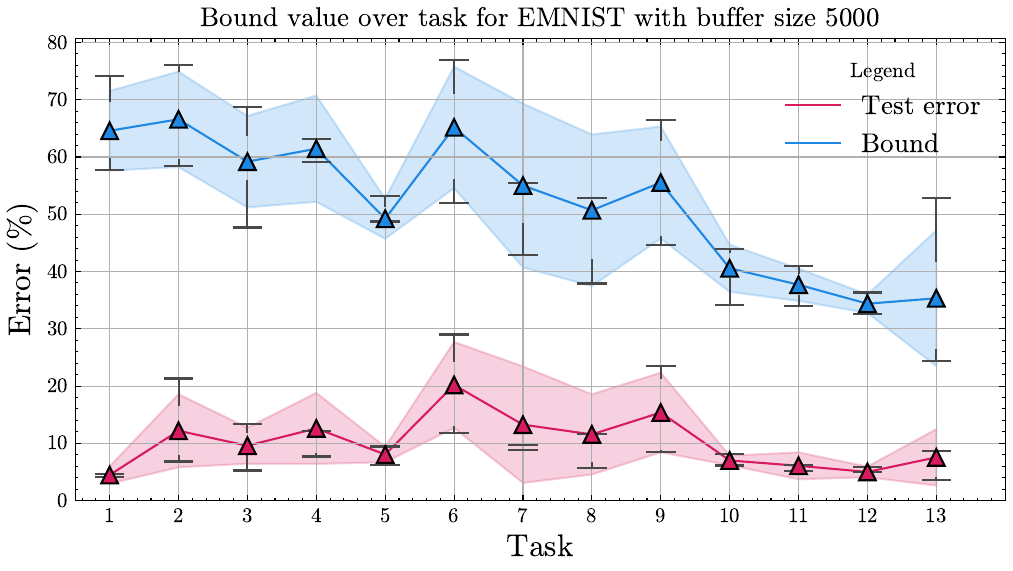}
    \end{center}
    \caption{\ouralgo{} Bounds with respect to tasks on the MNIST, Fashion-MNIST and EMNIST datasets using a CNN architecture}
    \label{fig:bound-mnist-cnn}
\end{figure}

{We observe from Figures \ref{fig:bound} and \ref{fig:bound-mnist-cnn} that the MLP architecture and the CNN architecture give non-trivial bounds that follow the trends exhibited by the test error curve.}

\clearpage

\subsubsection{Bound plots for CIFAR10, CIFAR100, TinyImageNet and ImageNet (subsets) datasets}

{In this section, we present the plots for the bound values obtained for the additional datasets, namely CIFAR10, CIFAR100, and ImageNet subsets. The ImageNet subsets include the Tinyimagenet \citep{tiny-imagenet}, ImageWoof \citep{Howard_Imagewoof_2019}, and ImageNette \citep{imagenette} datasets.} We provide the results on ImageWoof and ImageNette to showcase that the estimated bounds get tighter when more data is avaible per each task. 

We note that as the pretrained models used as backbones for our experiments were pretrained on the ImageNet, we provide the bounds on the ImageNet subsets simply to give insight to the reader. Indeed, the dependency of the backbones to the dataset over which we certify breaks the \emph{i.i.d.} assumption of the bound. 

\begin{figure}[H]
    \begin{center}
        \includegraphics[width=.48\textwidth, trim=0cm 0cm 0cm 0cm,clip]{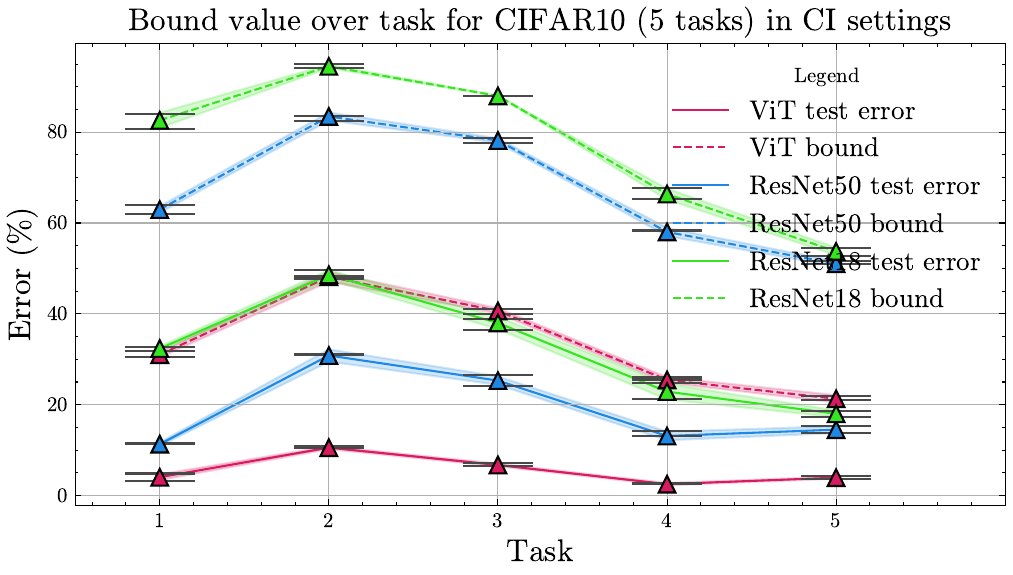}
        \includegraphics[width=.48\textwidth, trim=0cm 0cm 0cm 0cm,clip]{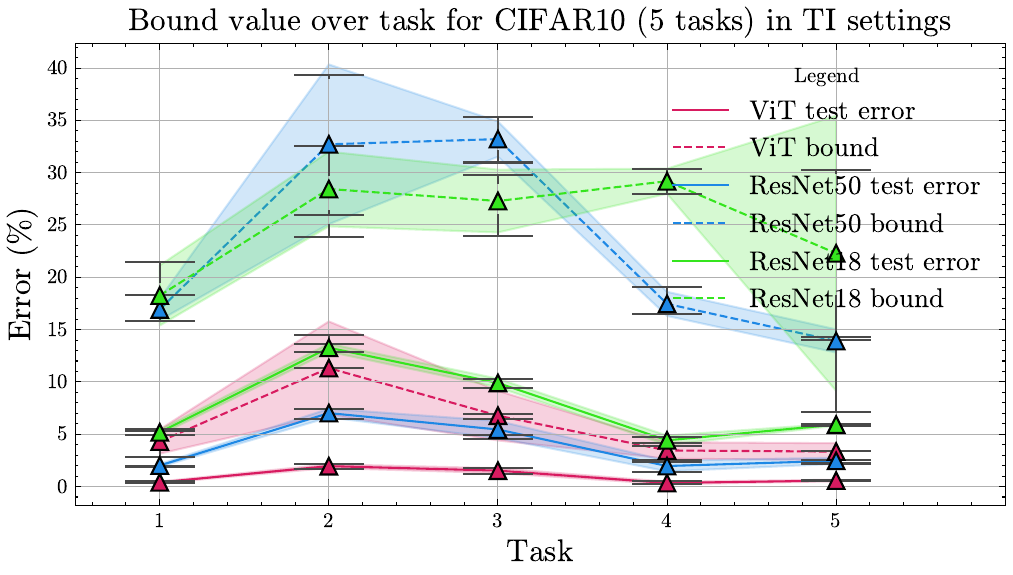}
    \end{center}
    \vspace{-.4cm}
    \caption{\ouralgo{} bounds with respect to tasks on the CIFAR10 (5 tasks) dataset in Class-Incremental (left) and Task-Incremental (right) settings}
\end{figure}

\vspace{-2mm}

\begin{figure}[h]
    \begin{center}
        \includegraphics[width=.49\textwidth, trim=0cm 0cm 0cm 0cm,clip]{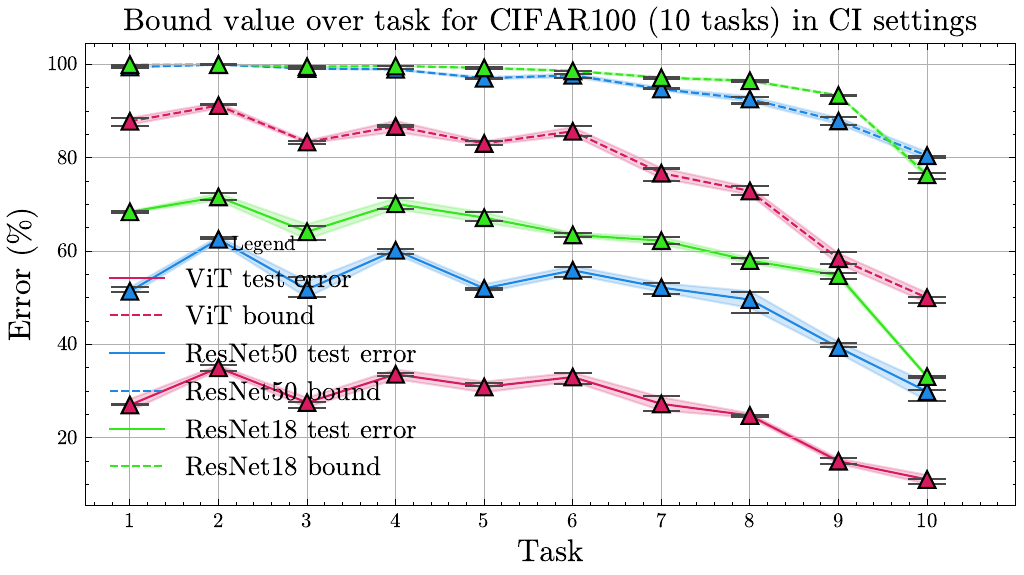}
        \includegraphics[width=.49\textwidth, trim=0cm 0cm 0cm 0cm,clip]{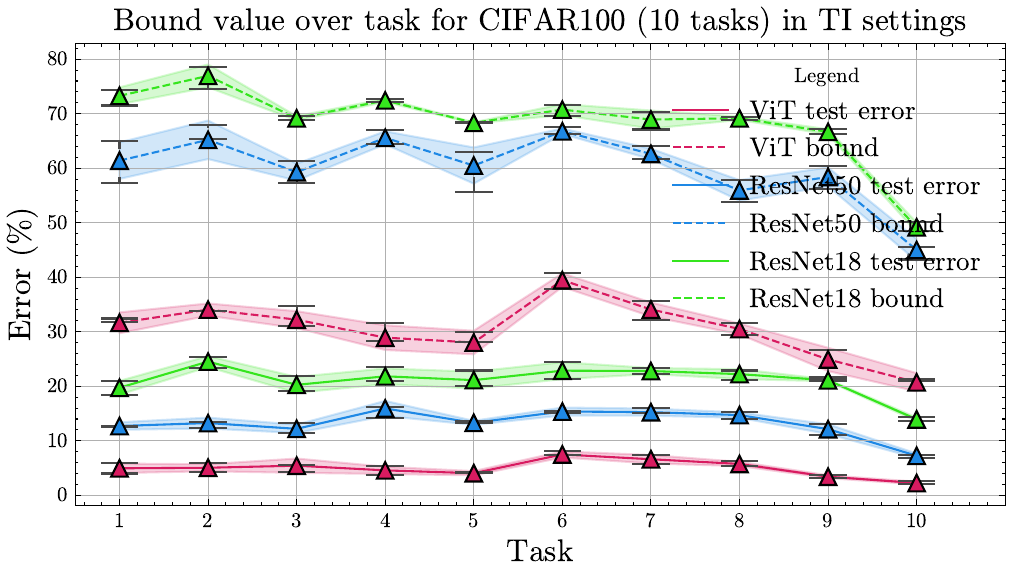}
    \end{center}
    \vspace{-.4cm}
    \caption{\ouralgo{} bounds with respect to tasks on the CIFAR100 (10 tasks) dataset in Class-Incremental (left) and Task-Incremental (right) settings}
\end{figure}
\vspace{-2mm}

\begin{figure}[h]
    \begin{center}
        \includegraphics[width=.49\textwidth, trim=0cm 0cm 0cm 0cm,clip]{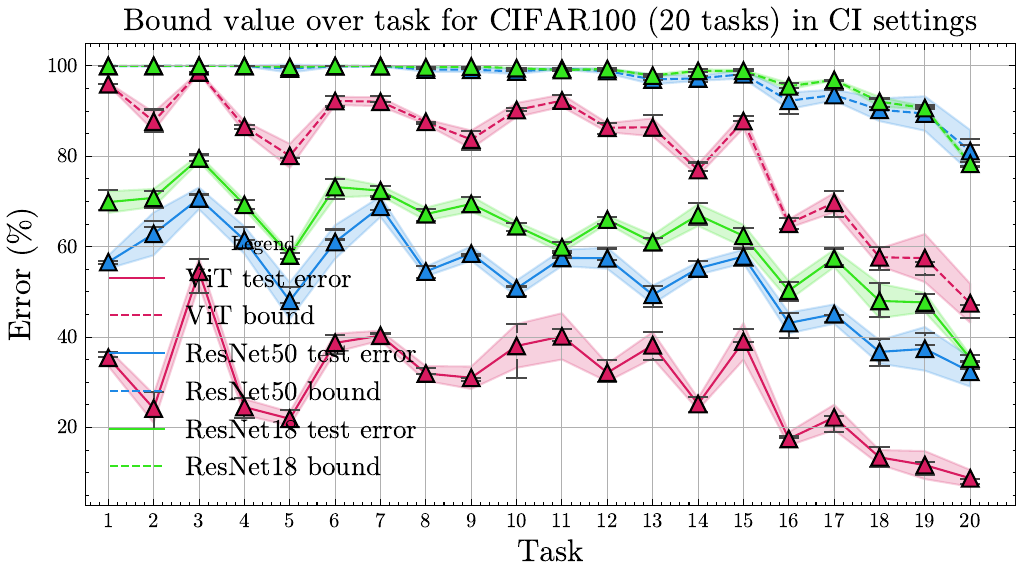}
        \includegraphics[width=.49\textwidth, trim=0cm 0cm 0cm 0cm,clip]{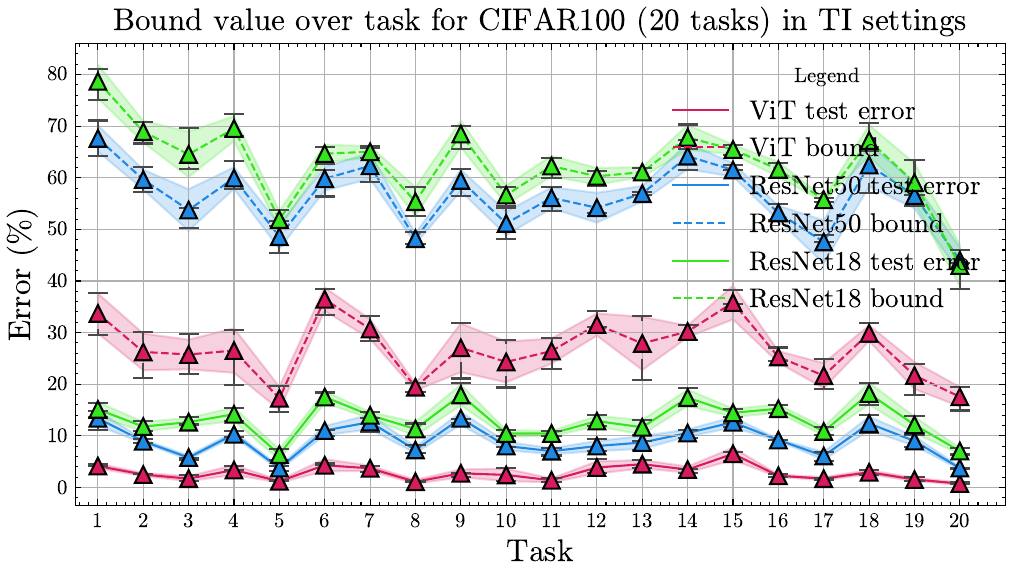}
    \end{center}
    \caption{\ouralgo{} bounds with respect to tasks on the CIFAR100 (20 tasks) dataset in Class-Incremental (left) and Task-Incremental (right) settings}
\end{figure}

\begin{figure}[h!]
    \begin{center}
        \includegraphics[width=.49\textwidth, trim=0cm 0cm 0cm 0cm,clip]{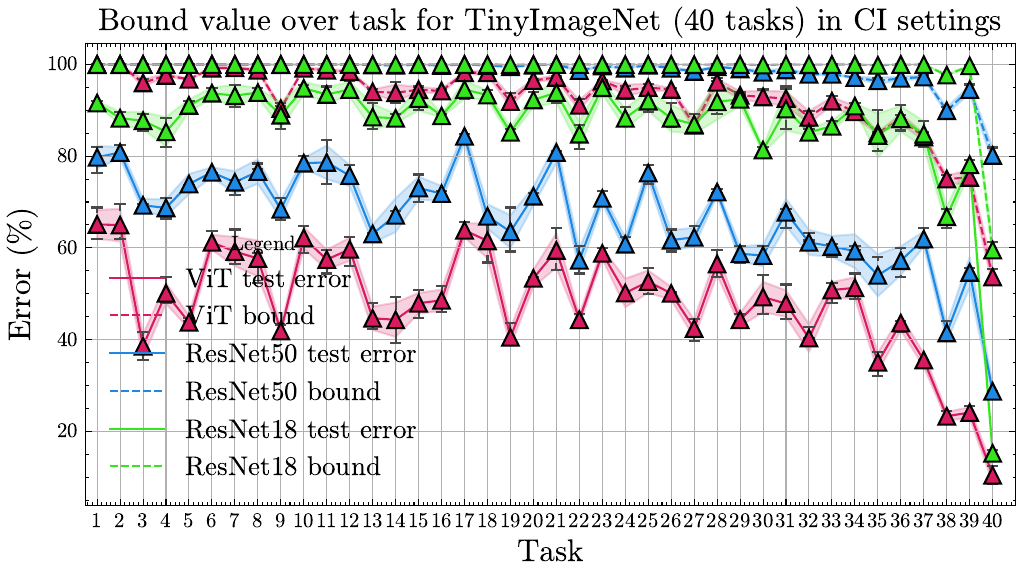}
        \includegraphics[width=.49\textwidth, trim=0cm 0cm 0cm 0cm,clip]{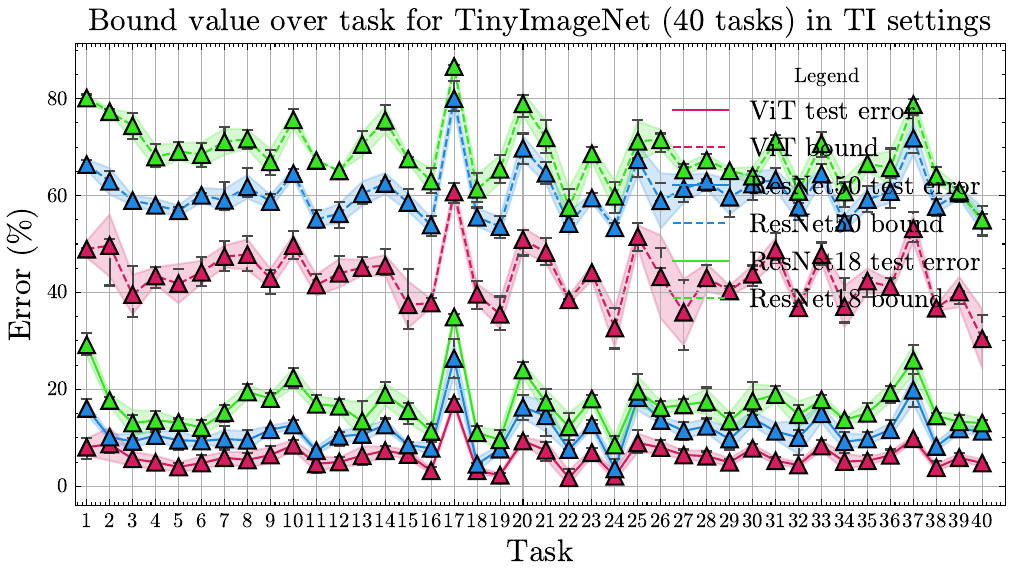}
    \end{center}
    \vspace{-.4cm}
    \caption{\ouralgo{} bounds with respect to tasks on the TinyImageNet (40 tasks) dataset in Class-Incremental (left) and Task-Incremental (right) settings}
\end{figure}

\vspace{-2mm}

\begin{figure}[h]
    \begin{center}
        \includegraphics[width=.49\textwidth, trim=0cm 0cm 0cm 0cm,clip]{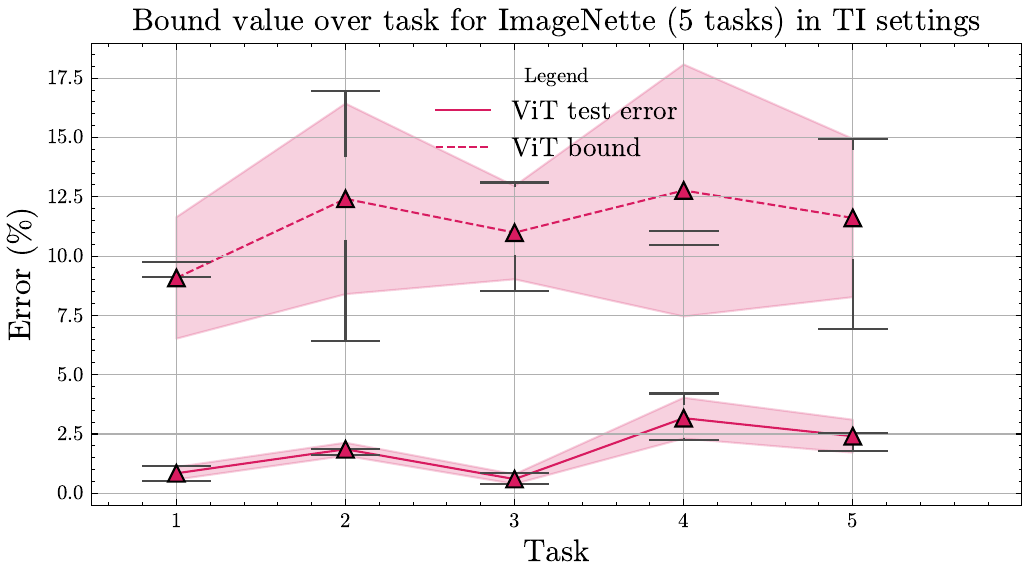}
        \includegraphics[width=.49\textwidth, trim=0cm 0cm 0cm 0cm,clip]{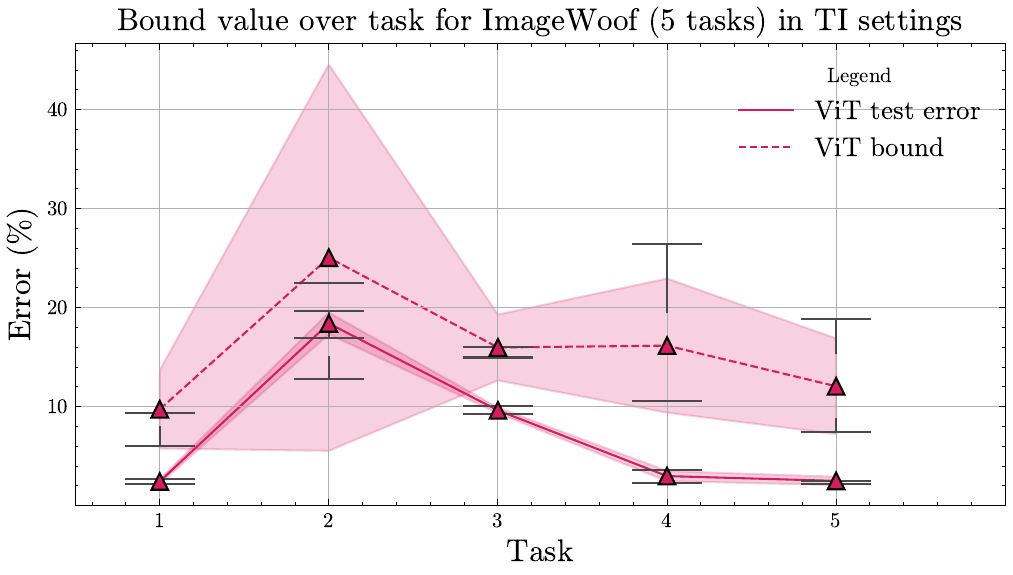}
    \end{center}
    \vspace{-.4cm}
    \caption{\ouralgo{} bounds with respect to tasks on the ImageNette (5 tasks) dataset (left) and ImageWoof dataset (right) in Task-Incremental setting}
\end{figure}

\begin{figure}[h]
    \begin{center}
        \includegraphics[width=.49\textwidth, trim=0cm 0cm 0cm 0cm,clip]{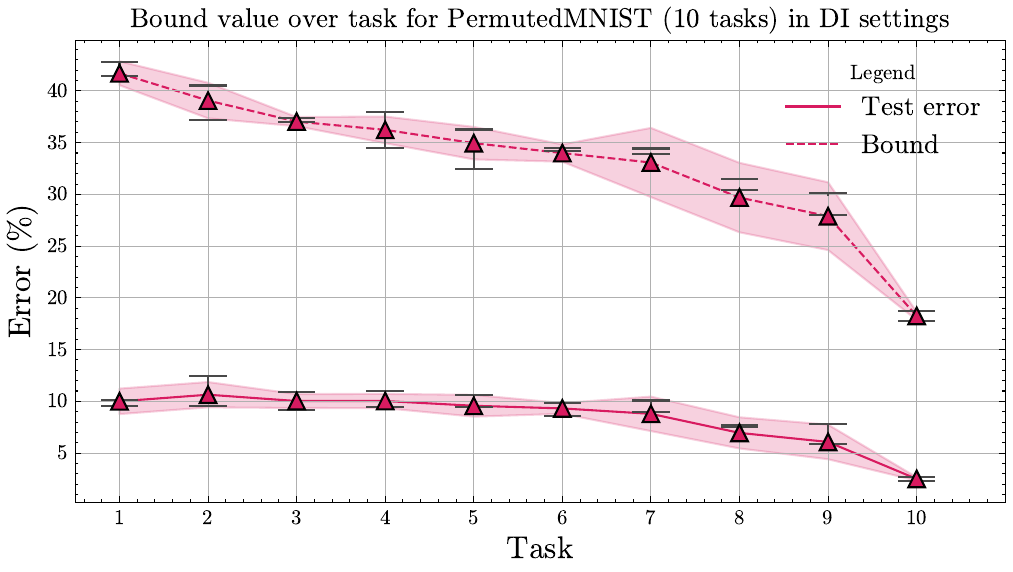}
        \includegraphics[width=.49\textwidth, trim=0cm 0cm 0cm 0cm,clip]{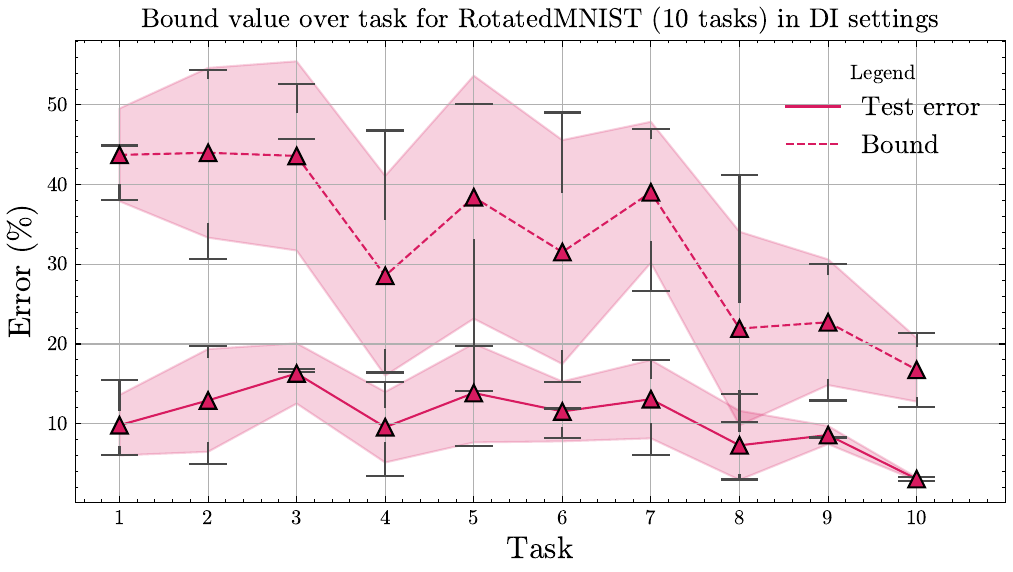}
    \end{center}
    \vspace{-.4cm}
    \caption{\ouralgo{} bounds with respect to tasks on the PermutedMNIST (10 tasks) dataset (left) and on the RotatedMNIST dataset in Domain-Incremental setting}
\end{figure}

\clearpage
\newpage

\section{Analysis of \ouralgo{}}
\label{app:ablation}

In this section we provide multiple analyses and ablation studies of \ouralgo{}.

\subsection{Ablation on \ouralgo{} loss terms.}
We have conducted an ablation study to understand the effect of additional weighting that counteracts class imbalance that arises in the continual learning setup when new tasks are introduced (Standard replay counteracts this by extending the batch size to have equal class representation). In the modified P2L (Algorithm \ref{alg:modified-p2l}), we incorporate an additional weighting term over classes to adapt P2L for class-balanced continual learning. In order to assess whether this idea is indeed helpful, we have run an experiment on the MNIST dataset with and without the additional weighting idea. In Figure \ref{fig:lossablation}, we show the results of this experiment. For both accuracy (left panel) and forgetting (right panel), we observe that the inclusion of the additional loss weighting term significantly improves the results. 

 \begin{figure}[h]
    \begin{center}
    \includegraphics[width=.49\columnwidth,trim=0cm 0cm 0cm 0cm,clip]{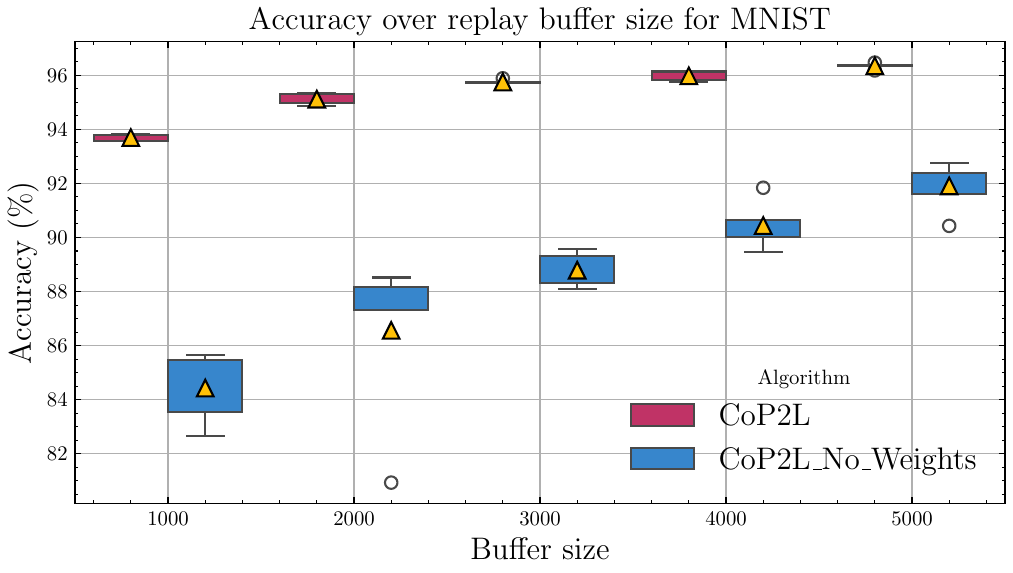}
    \includegraphics[width=.49\columnwidth,trim=0cm 0cm 0cm 0cm,clip]{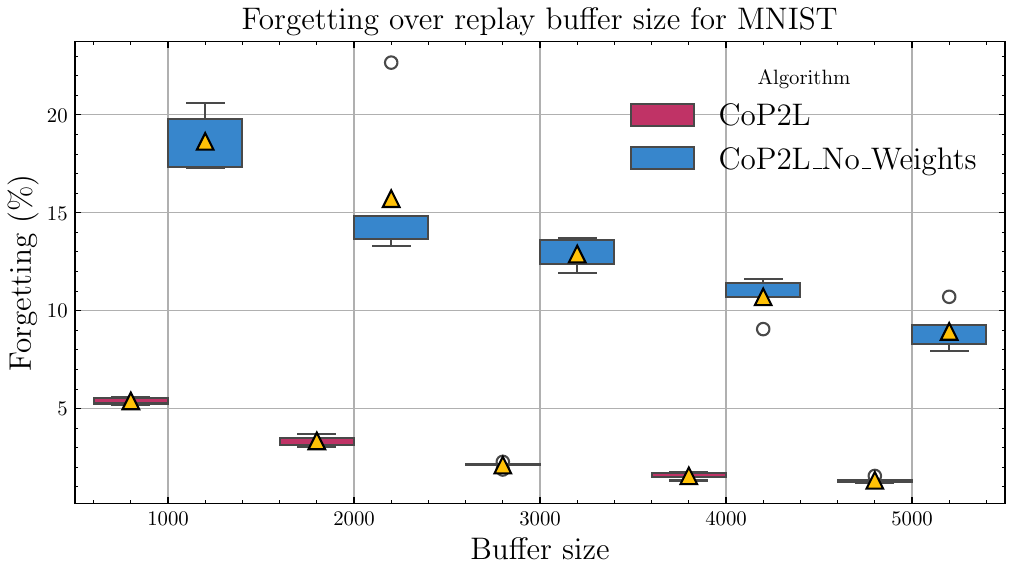}
    \vspace{-.4cm}
    \caption{Ablation study on  MNIST and Fashion-MNIST on the additional weighting to ensure class balance we propose in Algorithm \ref{alg:modified-p2l} on the \ouralgo{} loss function. Accuracy \textbf{(left)} and forgetting \textbf{(right)} over replay buffer size for the MNIST dataset}
    \vspace{-.3cm}
    \label{fig:lossablation}
    \end{center}
\end{figure}

\subsection{Sensitivity analysis with respect to the task ordering.}
In order to also assess the sensitivity of \ouralgo{} with respect to ordering of the tasks, on MNIST and Fashion-MNIST experiments we have conducted an experiment where we randomly shuffle the task order. For both datasets, we have tested our algorithm with 5 different random task ordering. 
For each task, we report the cumulative accuracy such that, $\text{Average Cumulative Acc.}_{T} = \frac{1}{O \cdot T} \sum_{o=1}^O \sum_{t=1}^T \text{Accuracy}_{T, t}^o$, 
where $\text{Accuracy}_{T, t}^o$ denotes the accuracy obtained at task $t$ after finishing training until task $T$, and the $o$ index denotes the task order. We have used $O=5$ different task orderings. We present the results of this experiment in Figure \ref{fig:permutation}. On both MNIST and Fashion-MNIST datasets \ouralgo{} is able to outperform with more tasks. We also observe that the variability of the results decrease more significantly for \ouralgo{} compared to replay. 

\begin{figure}[h]
    \begin{center}
    
    \includegraphics[width=.49\columnwidth,trim=0cm 0cm 0cm 0cm,clip]{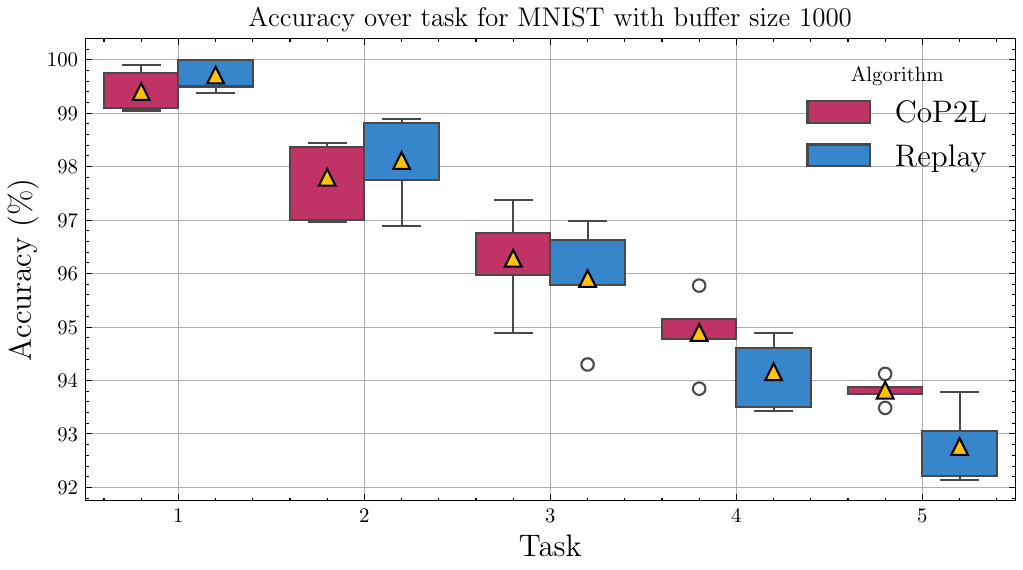}
    \includegraphics[width=.49\columnwidth,trim=0cm 0cm 0cm 0cm,clip]{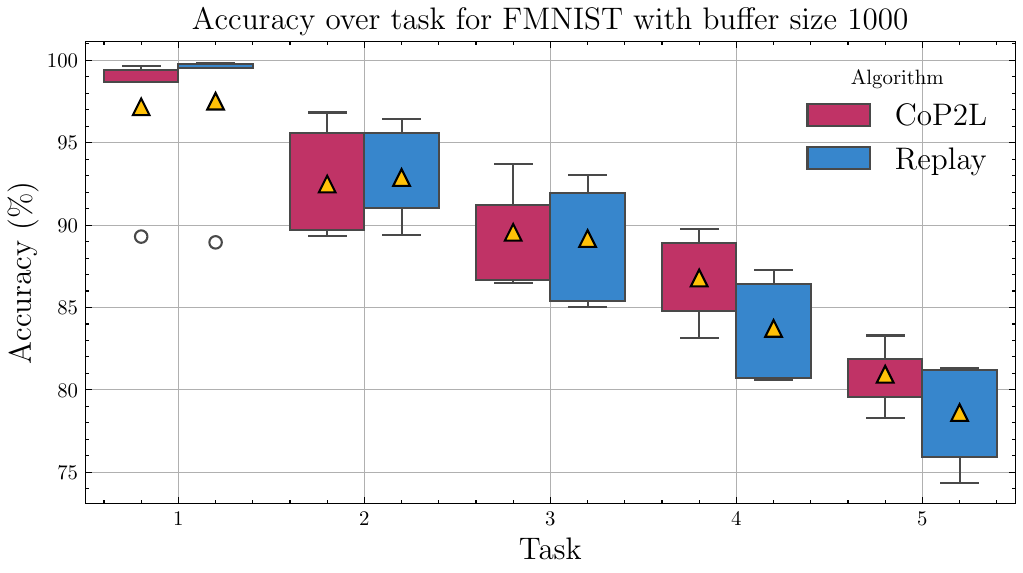}
    \vspace{-.4cm}
    \caption{Studying the variation of cumulative accuracy with respect to task ordering for \ouralgo{} and Replay on MNIST dataset \textbf{(left)} and Fashion-MNIST dataset \textbf{(right)}. We use 5 different task order, and show the variability with respect to each task.}
    \vspace{-0.6cm}
    \label{fig:permutation}
    \end{center}
\end{figure}

\subsection{Analysis with respect to buffer size}
\vspace{-2mm}

In Fig.~\ref{fig:experiments}, we compare the accuracy and forgetting obtained with replay buffer sizes of 1000--5000 samples for \ouralgo{} and replay. This corresponds to roughly 8--40\% of the per-task dataset size for MNIST and Fashion-MNIST, and 10--52\% for EMNIST. We report the final average accuracy (after $T{=}5$ tasks for MNIST and Fashion-MNIST, and $T{=}13$ for EMNIST) across seeds as well as per-task test accuracy.

For MNIST, \ouralgo{} clearly outperforms replay on average.  
For Fashion-MNIST and EMNIST, \ouralgo{} shows a clear advantage for small buffers and remains comparable or slightly better for larger buffers.  
Across all datasets, \ouralgo{} substantially reduces forgetting.

\begin{figure}[H]
\centering
\setlength{\abovecaptionskip}{2pt}
\setlength{\belowcaptionskip}{-6pt}

\includegraphics[width=.49\textwidth]{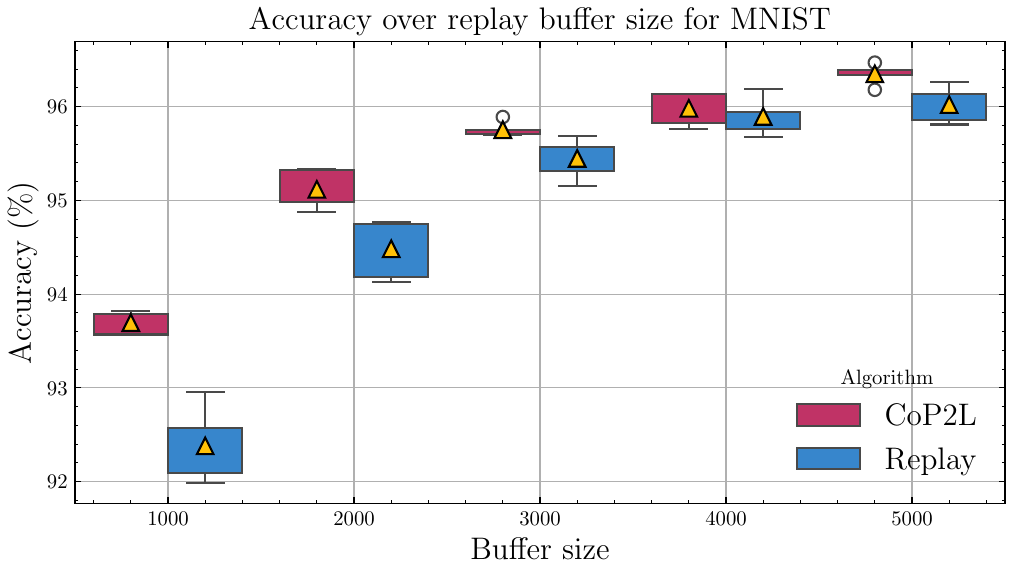}
\includegraphics[width=.49\textwidth]{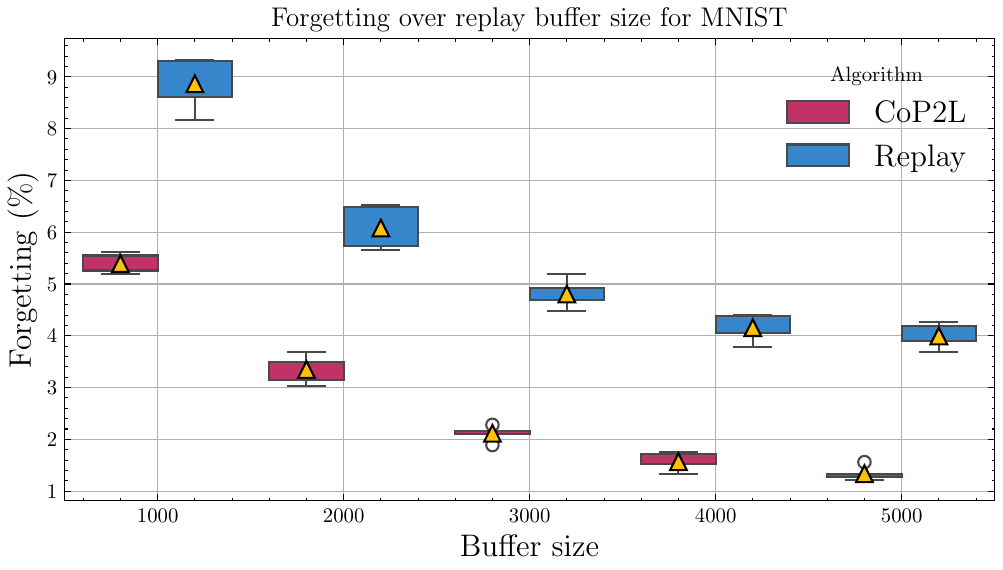}

\includegraphics[width=.49\textwidth]{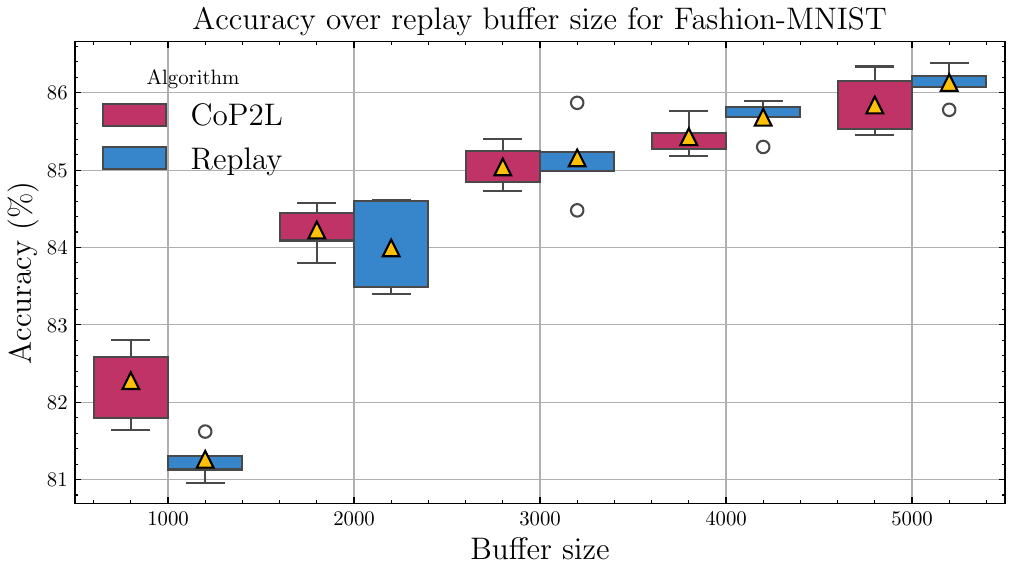}
\includegraphics[width=.49\textwidth]{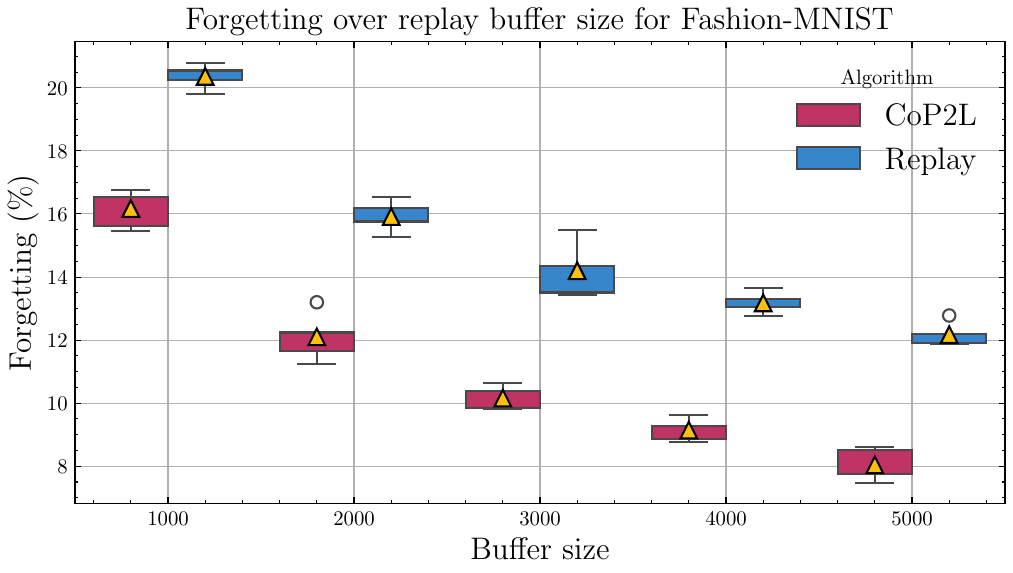}

\includegraphics[width=.49\textwidth]{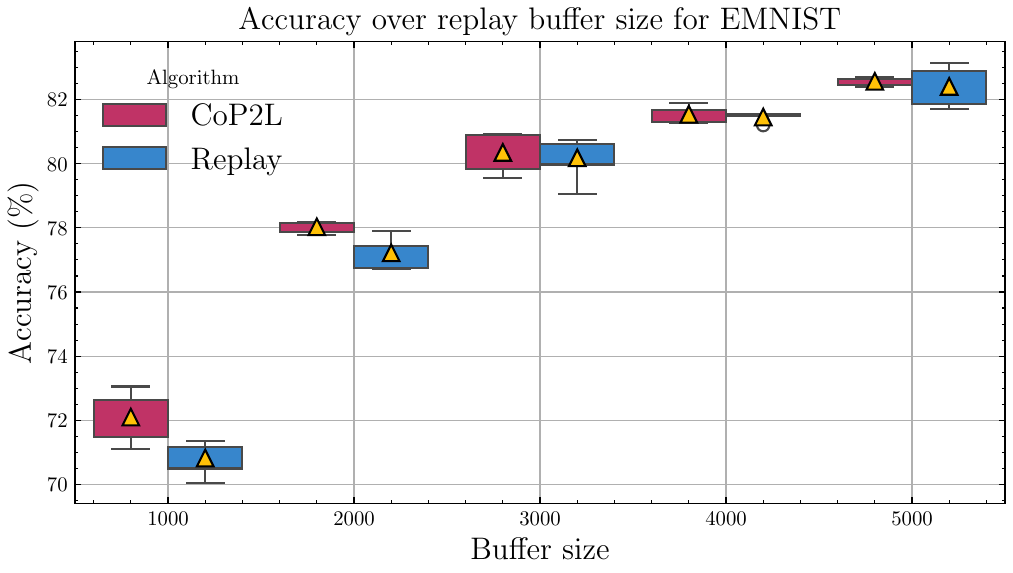}
\includegraphics[width=.49\textwidth]{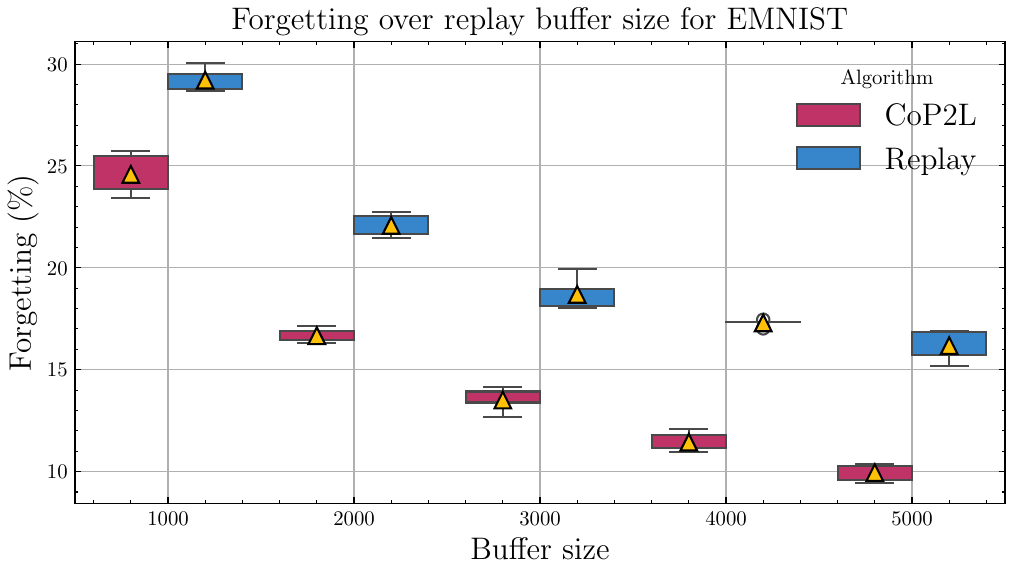}

\caption{Accuracy (left) and forgetting (right) vs.\ buffer size for MNIST (top), Fashion-MNIST (middle), and EMNIST (bottom). Orange triangles indicate mean performance.}
\label{fig:experiments}
\vspace{-4mm}
\end{figure}

\subsection{Analysis of memory cost}
\label{app:memory_comparison}

We have conducted a memory comparison on Split-CIFAR10 (5 tasks) in Class Incremental Learning (CIL) and Task Incremental Learning (TIL) settings, as shown in \cref{tab:memory_usage}.

We report Maximum allocated memory (actual usage), maximum reserved memory (allocated memory PyTorch’s caching allocator), and also the average cumulative memory over all tasks.

We would like to note that, in the current implementation, we execute the sample compression in one go, by processing all data items that correspond to the current task in parallel. It is totally possible to implement this with minibatching without loss of performance. The table reports, in parentheses, the approximate memory that would be used by CoP2L using the same minibatch size as the reported Replay.

\begin{table}[ht]
  \centering\small
  \caption{Memory usage comparison for CoP2L and Replay in CIL and TIL settings.}
  \label{tab:memory_usage}
  \begin{tabular}{@{}lccc@{}}
    \toprule
    \textbf{Method} & \textbf{Max Mem allocated (MB)} & \textbf{Max Mem reserved (MB)} & \textbf{Cumulate Mem usage (TB)} \\
    \midrule
    \multicolumn{4}{@{}l}{\textbf{CIL}} \\
    \addlinespace[4pt]
    \hline
    CoP2L  & 15535.57 $\pm$ 87.53 (342.85 $\pm$ 6.34) & 22310.0 $\pm$ 104.67 (492.36 $\pm$ 7.98) & 1261.38 $\pm$ 31.72 \\
    \hline
    Replay & 458.76 $\pm$ 9.84     & 648.0 $\pm$ 12.35    & 3383.43 $\pm$ 74.51 \\
    \midrule
    \multicolumn{4}{@{}l}{\textbf{TIL}} \\
    \addlinespace[4pt]
    \hline
    CoP2L  & 15535.66 $\pm$ 98.34 (342.86 $\pm$ 6.78) & 22310.0 $\pm$ 102.65  (492.36 $\pm$ 7.81) & 197.18 $\pm$ 6.42 \\
    \hline
    Replay & 458.85 $\pm$ 9.91     & 648.0 $\pm$ 12.41    & 1783.11 $\pm$ 52.63 \\
    \bottomrule
  \end{tabular}
\end{table}

\subsection{Ablation on the use of early stopping}

In this section, we study the use of early stopping in our CoP2L algorithm. We present the result in \cref{tab:ablation_ES}. The early stopping improves the accuracy, the forgetting and the bound on each task except for the bound on Task 1 with the ViT. This is to be expected as the early stopping helps to alleviate the overfitting of Pick-To-Learn by selecting the checkpoint that achieves the best bound once the algorithm has effectively converged. In contrast, without early stopping, training proceeds until the training error reaches zero, which can lead to overfitting, bigger compression set and consequently worse generalization metrics.This can also explain the tighter bounds, as it was demonstrated by \citet{bazinet2024} that, on binary subsets of the MNIST dataset, the sample-compression bound was actually minimized about halfway before Pick-To-Learn converges.
\begin{table}[h]
    \centering
    \caption{Illustration of the behavior of the bound on CIFAR10 after 5 tasks, with and without early stopping using the bound.}
    \begin{adjustbox}{width=\textwidth}
    \begin{tabular}{cccccccccc}
    \toprule
    &Backbone & Accuracy $(\uparrow)$ & Forgetting $(\downarrow)$ & Bound on Task 1 $(\downarrow)$ & Bound on Task 2 $(\downarrow)$& Bound on Task 3$(\downarrow)$ & Bound on Task 4$(\downarrow)$ & Bound on Task 5$(\downarrow)$ \\\midrule
    \multirow{3}{9em}{With early stopping} 
        & RN18 & 68.12 $\pm$ 0.42 & 17.62 $\pm$ 0.31 & 82.34 $\pm$ 0.55 & 94.62 $\pm$ 0.47 & 88.17 $\pm$ 0.36 & 66.15 $\pm$ 0.28 & 53.52 $\pm$ 0.44 \\
        & RN50 & 80.98 $\pm$ 0.25 & 5.84 $\pm$ 0.18 & 62.97 $\pm$ 0.63 & 83.46 $\pm$ 0.39 & 78.17 $\pm$ 0.41 & 58.03 $\pm$ 0.22 & 51.02 $\pm$ 0.35 \\
        & ViT  & 94.45 $\pm$ 0.12 & 2.10 $\pm$ 0.09 & 31.02 $\pm$ 0.48 & 48.17 $\pm$ 0.33 & 40.66 $\pm$ 0.27 & 25.45 $\pm$ 0.19 & 21.30 $\pm$ 0.26 \\ 
        \midrule
    \multirow{3}{9em}{Without early stopping} 
        & RN18 & 65.30 $\pm$ 0.51 & 22.71 $\pm$ 0.37 & 90.02 $\pm$ 0.46 & 96.67 $\pm$ 0.40 & 91.20 $\pm$ 0.35 & 71.06 $\pm$ 0.29 & 59.15 $\pm$ 0.43 \\
        & RN50 & 77.36 $\pm$ 0.34 & 12.11 $\pm$ 0.26 & 78.05 $\pm$ 0.52 & 91.65 $\pm$ 0.38 & 83.52 $\pm$ 0.31 & 60.04 $\pm$ 0.24 & 53.28 $\pm$ 0.36 \\
        & ViT  & 94.34 $\pm$ 0.11 & 2.63 $\pm$ 0.07 & 29.46 $\pm$ 0.45 & 49.46 $\pm$ 0.30 & 42.16 $\pm$ 0.28 & 25.99 $\pm$ 0.21 & 23.72 $\pm$ 0.25 \\
    \bottomrule
    \end{tabular}
    \end{adjustbox}
    \label{tab:ablation_ES}
\end{table}
\subsection{Class-wise accuracy}
\label{app:classwise}

We report the accuracies obtained over earlier tasks with \ouralgo{} and with replay. We observe that \ouralgo{} is able to obtain a more balanced performance over earlier tasks compared to standard replay, as can be seen from the last rows of Table \ref{tab:ablation_earlier_cop2l}, and Table \ref{tab:ablation_earlier_replay}.  

\begin{table}[!h]
    \centering
    \caption{{The accuracies obtained on earlier tasks with \ouralgo{} on CIFAR10.}}
    \begin{adjustbox}{width=\textwidth}
    \begin{tabular}{ccccccccccc}
    \toprule
    & Class 1 & Class 2 & Class 3 & Class 4 & Class 5 & Class 6 & Class 7 & Class 8 & Class 9 & Class 10 \\\midrule
Task 1 & 98.10 $\pm$ 0.22 & 97.30 $\pm$ 0.31 \\
Task 2 & 98.60 $\pm$ 0.18 & 96.80 $\pm$ 0.27 & 83.70 $\pm$ 0.45 & 85.50 $\pm$ 0.36 \\
Task 3 & 95.70 $\pm$ 0.29 & 96.70 $\pm$ 0.21 & 79.40 $\pm$ 0.52 & 77.40 $\pm$ 0.47 & 82.90 $\pm$ 0.33 & 71.10 $\pm$ 0.58\\
Task 4 & 93.50 $\pm$ 0.34 & 97.00 $\pm$ 0.19 & 72.60 $\pm$ 0.49 & 68.70 $\pm$ 0.55 & 84.10 $\pm$ 0.28 & 71.20 $\pm$ 0.46 & 84.40 $\pm$ 0.31 & 80.10 $\pm$ 0.37 \\
Task 5 & 85.30 $\pm$ 0.41 & 91.50 $\pm$ 0.26 & 71.40 $\pm$ 0.53 & 66.50 $\pm$ 0.60 & 80.20 $\pm$ 0.35 & 69.70 $\pm$ 0.48 & 90.30 $\pm$ 0.24 & 83.60 $\pm$ 0.39 & 89.50 $\pm$ 0.30  & 83.10 $\pm$ 0.42 \\\bottomrule
    \end{tabular}
    \label{tab:ablation_earlier_cop2l}
\end{adjustbox}
\end{table}

\begin{table}[!h]
    \centering
    \caption{The accuracies obtained on earlier tasks with standard replay algorithm on CIFAR10.}
    \begin{adjustbox}{width=\textwidth}
    \begin{tabular}{ccccccccccc}
    \toprule
    & Class 1 & Class 2 & Class 3 & Class 4 & Class 5 & Class 6 & Class 7 & Class 8 & Class 9 & Class 10 \\\midrule
Task 1 & 98.10 $\pm$ 0.24 & 98.40 $\pm$ 0.19 \\
Task 2 & 93.10 $\pm$ 0.38 & 97.20 $\pm$ 0.27 & 94.70 $\pm$ 0.33 & 92.40 $\pm$ 0.41 \\
Task 3 & 93.80 $\pm$ 0.36 & 96.80 $\pm$ 0.22& 74.90 $\pm$ 0.55 & 53.80 $\pm$ 0.61 & 94.70 $\pm$ 0.29 & 90.50 $\pm$ 0.34 \\
Task 4 & 91.70 $\pm$ 0.40 & 96.50 $\pm$ 0.25 & 70.90 $\pm$ 0.57 & 57.20 $\pm$ 0.63 & 69.80 $\pm$ 0.48 & 76.20 $\pm$ 0.45 & 96.30 $\pm$ 0.21 & 95.60 $\pm$ 0.26 \\
Task 5 & 77.30 $\pm$ 0.52 & 77.90 $\pm$ 0.49 & 73.00 $\pm$ 0.54 & 59.90 $\pm$ 0.60 & 75.10 $\pm$ 0.47 & 80.90 $\pm$ 0.39 & 92.00 $\pm$ 0.28 & 87.70 $\pm$ 0.36 & 98.20 $\pm$ 0.18 & 96.10 $\pm$ 0.23 \\
\bottomrule
    \end{tabular}
    \label{tab:ablation_earlier_replay}
\end{adjustbox}
\end{table}

\subsection{Plasticity forgetting tradeoff}
\label{app:plasticity}
Denoting $A(t, \theta_t)$ the accuracy obtained on a task $t$ of a model~$f_{\theta_t}$ after finishing training on task $t$, the plasticity is given by
$$\overline{P}(\theta_T) \,{=}\, \frac{1}{T} \sum_{t=1}^T A(t, \theta_t)\,,$$
We observe that \ouralgo{} is able to retain a low level of forgetting while having a relatively good level of plasticity. As shown in Table \ref{tab:accuracyvsplasticity}, \ouralgo{} consistently attains substantially lower forgetting than Finetuning, Replay, and Dark Experience Replay, and also outperforms GDumb on this metric. Although iCaRL achieves slightly lower forgetting, this comes at the cost of reduced plasticity. Overall, this trade-off highlights the favorable balance achieved by \ouralgo{}. We observe the same behavior across both RN50 and ViT backbones.

\begin{table}[!h]
    \centering \small
    \caption{Plasticity vs Forgetting Tradeoff on CIFAR10 dataset}
    \begin{tabular}{cccc}
    \toprule
         Backbone & Method & Plasticity ($\uparrow$) & Forgetting ($\downarrow$) \\\midrule
RN50 & CoP2L & 85.57 $\pm$ 0.34 & \phantom05.55 $\pm$ 0.28 \\ 
RN50 & Finetuning & 97.21 $\pm$ 0.22 & 97.22 $\pm$ 0.41 \\
RN50 & Replay & 95.50 $\pm$ 0.37 & 17.09 $\pm$ 0.33 \\
RN50 & iCaRL & 75.79 $\pm$ 0.45 & \phantom01.54 $\pm$ 0.19 \\
RN50 & Gdumb & 87.86 $\pm$ 0.31 & \phantom08.95 $\pm$ 0.36 \\
RN50 & DER & 94.78 $\pm$ 0.27 & 15.60 $\pm$ 0.42 \\\midrule
ViT & CoP2L & 96.33 $\pm$ 0.18 & \phantom03.36 $\pm$ 0.21 \\
ViT & Finetuning & 99.36 $\pm$ 0.09 & 89.74 $\pm$ 0.38 \\
ViT & Replay & 98.82 $\pm$ 0.14 & \phantom06.01 $\pm$ 0.26 \\
ViT & iCaRL & 94.48 $\pm$ 0.23 & \phantom01.55 $\pm$ 0.17 \\
ViT & Gdumb & 96.86 $\pm$ 0.20 & \phantom03.39 $\pm$ 0.24 \\
ViT & DER & 98.09 $\pm$ 0.16 & \phantom03.86 $\pm$ 0.29 \\ 
\bottomrule
    \end{tabular}
    \label{tab:accuracyvsplasticity}
\end{table}

\end{document}